\documentclass{article}

\usepackage{PRIMEarxiv}

\usepackage[utf8]{inputenc} % allow utf-8 input
\usepackage[T1]{fontenc}    % use 8-bit T1 fonts
\usepackage{hyperref}       % hyperlinks
\usepackage{url}            % simple URL typesetting
\usepackage{booktabs}       % professional-quality tables
\usepackage{amsfonts}       % blackboard math symbols
\usepackage{nicefrac}       % compact symbols for 1/2, etc.
\usepackage{microtype}      % microtypography
\usepackage{lipsum}
\usepackage{fancyhdr}       % header
\usepackage{graphicx}       % graphics

\usepackage{amsmath,amssymb,amsthm,mathtools}
\usepackage{multirow}
\usepackage{enumitem}
\usepackage{subcaption}
\usepackage{float}
\usepackage{tikz}
\usetikzlibrary{arrows.meta,positioning,shapes,calc,fit}
\usepackage[dvipsnames]{xcolor}
\usepackage{hyperref}
\usepackage{algorithm}
\usepackage{algpseudocode}
\usepackage{natbib}

\DeclareUnicodeCharacter{00B1}{\ensuremath{\pm}}
\graphicspath{{assets/figs/}}

\hypersetup{
    colorlinks=true,
    linkcolor=RoyalBlue,
    citecolor=OliveGreen,
    urlcolor=blue
}

\newtheorem{theorem}{Theorem}[section]
\newtheorem{proposition}[theorem]{Proposition}
\newtheorem{lemma}[theorem]{Lemma}
\newtheorem{corollary}[theorem]{Corollary}
\newtheorem{definition}[theorem]{Definition}
\newtheorem{assumption}[theorem]{Assumption}
\newtheorem{remark}[theorem]{Remark}

\newcommand{\E}{\mathbb{E}}
\newcommand{\Pp}{\mathbb{P}}

\newcommand{\I}{\mathbb{I}}

\newcommand{\smin}{\operatorname{smin}}
\newcommand{\smax}{\operatorname{smax}}
\newcommand{\lse}{\mathrm{LSE}}

\newcommand{\TC}{\mathrm{TC}}
\newcommand{\UC}{\mathrm{UC}}

\title{Soft Tournament Equilibrium}
\date{}

\author{
  Saad Alqithami \\
  \texttt{alqithami@gmail.com} 
}

\begin{document}
\maketitle

\begin{abstract}
The evaluation of general-purpose artificial agents, particularly those based on large language models, presents a significant challenge due to the non-transitive nature of their interactions. When agent $A$ defeats $B$, $B$ defeats $C$, and $C$ defeats $A$, traditional ranking methods that force a linear ordering can be misleading and unstable. We argue that for such cyclic domains, the fundamental object of evaluation should not be a ranking alone but a set-valued core, as conceptualized in classical tournament theory. This paper introduces \textit{Soft Tournament Equilibrium} (STE), a differentiable framework for learning and computing set-valued tournament solutions directly from pairwise comparison data. STE first learns a probabilistic tournament model, potentially conditioned on rich contextual information. It then employs differentiable operators for soft reachability and soft covering to compute continuous analogues of two seminal tournament solutions: the \textit{Top Cycle} and the \textit{Uncovered Set}. The output is a set of core agents, each with a continuous membership score that can be calibrated when suitable validation labels or repeated-sampling evidence are available.
We develop the theoretical foundation for STE by proving consistency with classical solutions in the zero-temperature limit, establishing Condorcet-inclusion properties, and analyzing stability and sample complexity. We evaluate the method on a planted cyclic-core benchmark and on real preference/execution diagnostics. In oracle settings, STE recovers the hard tournament-theoretic core exactly; in finite samples with moderate pairwise evidence, the posterior-edge STE variant substantially improves planted-core recovery over BTL and win-rate baselines converted to oracle-size sets. This work provides a self-contained account that re-centers general-agent evaluation on a robust tournament-theoretic foundation, moving from unstable rankings toward stable, set-valued equilibria.
\end{abstract}

\keywords{Agent Evaluation, Tournament Solutions, Top Cycle, Uncovered Set, Non-Transitivity, Differentiable Optimization, Computational Social Choice, Large Language Models.}

\section{Introduction}
\label{sec:introduction}

The rapid advancement of artificial intelligence has led to the development of general-purpose agents, often powered by large language models (LLMs), capable of performing a wide array of tasks in complex, interactive environments \citep{Park2023,Li2024}. Evaluating these agents is a critical and formidable challenge. Unlike narrow AI systems, whose performance can be measured on a single, well-defined metric, general agents operate in heterogeneous domains where their effectiveness depends on the specific task, the context, and the other agents they interact with. This heterogeneity frequently gives rise to non-transitive, or cyclic, performance relationships \citep{Chiang2024,Zheng2024}. For instance, agent A might be superior to agent B in coding tasks, agent B might excel over agent C in creative writing, and agent C might outperform agent A in logical reasoning. Such a cycle, where $A \succ B \succ C \succ A$, cannot be faithfully represented by a single linear ranking.

Traditional evaluation methods, which are often rooted in rank aggregation or rating systems, are fundamentally designed to produce a total order. Methods like Kemeny-Young ranking \citep{Kemeny1959,YoungLevenglick1978}, while axiomatically sound in a ranking context, are forced to break cycles, leading to a loss of information and potential instability. A small change in the data can lead to a completely different consensus ranking. Modern rating systems like Elo \citep{Elo1978} and TrueSkill \citep{Herbrich2006}, as well as pairwise probability models like Bradley-Terry-Luce (BTL) \citep{BradleyTerry1952,Luce1959}, presuppose an underlying latent strength or skill parameter for each agent, which inherently assumes transitivity. When confronted with cycles, these models may produce rankings that are misleading or fail to converge properly.

We argue that for domains characterized by non-transitivity, the very object of inference needs to be reconsidered. Instead of asking, ``What is the best ranking of these agents?'', we should ask, ``Which agents belong to the undominated core?''. This question is the central focus of tournament theory, a branch of social choice theory that has developed a rich collection of tournament solutions for selecting a subset of alternatives from a tournament graph \citep{Brandt2016,Laslier1997}. These solutions, such as the Top Cycle \citep{Good1971,Smith1973,Miller1980,Schwartz1990} and the Uncovered Set \citep{Fishburn1977,Miller1980}, are designed to handle cycles gracefully and provide axiomatically justified set-valued outputs.

However, classical tournament solutions are defined on deterministic tournament graphs and are not directly applicable to the noisy, probabilistic, and often context-dependent data that arises from agent evaluation. There is a need for a framework that can bridge the gap between the rich theoretical foundations of tournament theory and the practical realities of modern agent evaluation. This paper introduces Soft Tournament Equilibrium (STE) to fill this void.

STE is a complete, end-to-end differentiable framework that learns set-valued tournament solutions from pairwise comparison data. It consists of two main components. First, a probabilistic tournament learner that fits a flexible, context-conditioned model of pairwise win probabilities, such as an antisymmetric neural pairwise-logit model. This allows it to capture complex dependencies on task features and opponent identities without forcing all comparisons through a single global ranking; a contextual BTL model is retained as a useful restricted special case. Second, differentiable tournament solution operators that introduce novel, differentiable operators for computing soft versions of the Top Cycle and the Uncovered Set. These operators are based on smooth approximations of graph reachability and covering relations, using techniques from differentiable combinatorics like the log-sum-exp trick \citep{Berthet2020}. The output is not a binary in/out decision but a continuous membership score for each agent in the core; after explicit calibration, these scores may also be used as probabilistic confidence measures.

Because the entire pipeline is differentiable, STE can be trained end-to-end to optimize not just the predictive accuracy of the pairwise model but also the properties of the resulting core, such as its sharpness or calibration. This work provides a comprehensive treatment of STE, including its theoretical underpinnings, practical implementation, and empirical validation on controlled planted-core tournaments together with real-world preference and agent-execution diagnostics.

\subsection{Contributions}

This paper makes several key contributions. We propose a fundamental shift in the object of agent evaluation, from rankings to set-valued cores, arguing that this is a more robust and faithful representation of agent capabilities in non-transitive domains. We develop differentiable operators for computing the Top Cycle and the Uncovered Set, based on soft reachability and soft covering. These operators are the core technical innovation of this work and are of independent interest for applications of social choice theory in machine learning. We provide a theoretical analysis of STE, proving that our soft operators are consistent and converge to their classical counterparts as temperature parameters go to zero. We establish Condorcet-inclusion properties, analyze stability, and provide a sample-complexity analysis. We present STE as a practical, end-to-end trainable system with detailed algorithms, implementation details, and computational complexity analysis. We conduct an extensive review of related work, covering rank aggregation, pairwise probability models, spectral methods, classical tournament theory, differentiable combinatorics, and LLM agent evaluation. We explicitly delineate the novelty of STE and contrast it with existing approaches through a detailed novelty audit. Finally, we report an empirical evaluation on planted cyclic-core tournaments, Chatbot Arena preference data, and AgentBench execution logs. The planted-core results show exact oracle recovery and strong finite-sample core recovery once moderate pairwise evidence is available.

This paper is intended to provide a self-contained account of Soft Tournament Equilibrium. We aim to give not only a description of the method, but also the relevant background, formal definitions, theoretical properties, implementation guidance, empirical evidence, and reproducibility details. Our hope is that STE will provide a new and valuable tool for the rigorous and nuanced evaluation of general-purpose AI agents.

\subsection{Roadmap}

The remainder of this paper is organized as follows. Section~2 reviews related work in tournament theory, rank aggregation, pairwise probability models, spectral methods, differentiable combinatorics, and LLM evaluation. Section~3 introduces the notation and classical tournament solutions used throughout the paper. Section~4 presents the STE framework, including the probabilistic tournament model, soft tournament construction, and differentiable Top-Cycle and Uncovered-Set operators. Section~5 provides the theoretical analysis, including consistency, Condorcet-inclusion, stability, and sample-complexity results. Section~6 reports experiments and results. Section~7 concludes. The appendices contain extended proofs, examples, implementation guidance, reproducibility details, and supplementary discussion.

\section{Background and Related Work}
\label{sec:related_work}

To properly situate Soft Tournament Equilibrium, we provide a detailed review of several related fields. We begin with a deeper dive into classical tournament solutions, which form the conceptual foundation of our work. We then discuss existing approaches to ranking and rating from pairwise comparisons, highlighting their limitations in the presence of cycles. We also review spectral methods, which offer a different perspective on ranking in graphs. Finally, we cover the recent advances in differentiable combinatorics and LLM agent evaluation that provide the technical and motivational context for STE. A summary of how STE contrasts with these related areas is provided in Table \ref{tab:novelty_audit}.

\begin{table}[h!]
\centering
\caption{Novelty Audit: STE vs. Related Frameworks}
\label{tab:novelty_audit}
\resizebox{\textwidth}{!}{
\begin{tabular}{@{}llll@{}}
\toprule
\textbf{Framework} & \textbf{Primary Object} & \textbf{Core Contribution of STE} & \textbf{Key Distinctions} \\
\midrule
Classical Tournament Solutions & Set-valued cores & Differentiable, probabilistic, context-aware & Handles noisy data; end-to-end trainable \\
Rank Aggregation (e.g., Kemeny) & Total ordering (ranking) & Rejects ranking for cores; avoids NP-hardness & Embraces cycles, does not break them; computationally tractable \\
Rating Systems (e.g., Elo, BTL) & Scalar ratings (implies ranking) & No transitivity assumption; set-valued output & Robust to cycles; identifies tiers, not just a single hierarchy \\
Spectral Methods (e.g., PageRank) & Scalar centrality scores (ranking) & Computes axiomatically-defined cores & Output is a set, not a scalar; grounded in social choice theory \\
Differentiable Ranking (e.g., SCO) & Differentiable total ordering & Fundamentally different output object (core) & Does not force a linear order; designed for non-transitivity \\
\bottomrule
\end{tabular}
}
\end{table}

\subsection{Tournament Theory and Social Choice}

A tournament is a directed graph representing the outcomes of a round-robin competition. In the context of social choice, the nodes are alternatives (or agents), and a directed edge from $a$ to $b$ ($a \succ b$) means that $a$ is preferred to $b$. A central problem in tournament theory is to define a choice function or tournament solution that selects a subset of winning alternatives from any given tournament \citep{Brandt2016}.

This problem is non-trivial because tournaments can contain cycles. The simplest cycle is a 3-cycle, where $A \succ B$, $B \succ C$, and $C \succ A$. In such cases, there is no single best alternative. Tournament solutions are designed to identify a set of plausible winners in a principled, axiomatically justified manner.

\subsubsection{The Top Cycle (Smith/Schwartz Set)}

The Top Cycle, also known as the Smith set \citep{Smith1973,Good1971} and closely related to the Schwartz set \citep{Schwartz1990}, is one of the most important tournament solutions. It is based on the notion of dominance and reachability. An agent $a$ dominates a set of agents $S$ if $a$ beats every agent in $S$. A set of agents $C$ is a dominant set if every agent in $C$ dominates every agent not in $C$. An agent $a$ can reach an agent $b$ if there is a directed path from $a$ to $b$ in the tournament graph.

The Top Cycle, denoted $\TC(T)$, is the set of agents that can reach every other agent in the tournament:
\[
\TC(T)=\{a\in\mathcal{A}:\forall b\in\mathcal{A},\; a\rightsquigarrow_T b\}.
\]
Equivalently, in a tournament it is the unique inclusion-minimal non-empty dominant set. In the condensation directed acyclic graph of strongly connected components, it is the unique source strongly connected component, not that component together with all components reachable from it. If a Condorcet winner exists (an agent that beats all other agents), the Top Cycle consists only of that agent. The Top Cycle satisfies many desirable properties, including Condorcet consistency, monotonicity, and composition consistency \citep{Laffond1996,Brandt2023}.

\subsubsection{The Uncovered Set}

The Uncovered Set is a refinement of the Top Cycle, meaning it is always a subset of the Top Cycle. It is based on the covering relation, which is a stronger form of dominance. An agent $c$ covers an agent $a$ if ($c \succ a$) and (for every agent $b$ that $a$ beats, $c$ also beats $b$). In other words, $c$ covers $a$ if $c$ is strictly better than $a$ in a strong sense: it beats $a$ directly, and it also beats every agent that $a$ can beat. An agent that is covered is outclassed by its coverer.

The Uncovered Set, denoted $\UC(T)$, is the set of all agents that are not covered by any other agent in the tournament \citep{Miller1980,Fishburn1977}. The Uncovered Set is also always non-empty and is contained within the Top Cycle. It provides a more discerning selection of winners. Like the Top Cycle, it contains a Condorcet winner if one exists. The Uncovered Set has been axiomatically characterized and plays a significant role in game theory and political science \citep{Moulin1986}.

\subsubsection{Other Tournament Solutions}

Several other tournament solutions have been proposed, such as the Banks set \citep{Banks1985}, the Copeland set, and the Minimal Covering Set. These solutions offer different trade-offs between refinement and computational complexity. Our work focuses on the Top Cycle and the Uncovered Set because they are arguably the most fundamental and well-studied solutions, and they lend themselves well to differentiable approximation.

Despite their theoretical appeal, classical tournament solutions have seen limited application in machine learning and agent evaluation. This is primarily because they are defined on deterministic graphs, whereas real-world comparison data is noisy and probabilistic. STE is designed to bridge this gap by developing differentiable analogues of these solutions. Table \ref{tab:tournament_solutions} provides a comparison of the key properties of these classical solutions.

\begin{table}[h!]
\centering
\caption{Comparison of Classical Tournament Solutions}
\label{tab:tournament_solutions}
\resizebox{\textwidth}{!}{
\begin{tabular}{@{}lllll@{}}
\toprule
\textbf{Solution} & \textbf{Core Idea} & \textbf{Condorcet Winner} & \textbf{Complexity} & \textbf{Key Property} \\
\midrule
Top Cycle (TC) & Smallest dominant set & Always selected & $O(n^2)$ & Most stable, but can be large \\
Uncovered Set (UC) & Agents not outclassed & Always selected & $O(n^3)$ & Refines TC via the covering relation \\
Banks Set (BA) & Maximal elements of maximal transitive subsets & Always selected & NP-hard & Can be disjoint, non-monotonic \\
Copeland Set (CO) & Agents with max number of wins & Always selected & $O(n^2)$ & Simple, but can be large and miss nuances \\
Minimal Covering Set (MC) & Stable set under covering & Always selected & $O(n^4)$ & Internally and externally stable, but complex \\
\bottomrule
\end{tabular}
}
\end{table}

\subsection{Rank Aggregation and Rating Systems}

Existing methods for learning from pairwise comparisons can be broadly categorized into rank aggregation methods and rating systems.

\subsubsection{Rank Aggregation}

Rank aggregation aims to find a single ranking that best represents a collection of individual rankings or pairwise preferences. The most famous method is the Kemeny-Young rule \citep{Kemeny1959,YoungLevenglick1978}, which seeks the ranking $\pi$ that minimizes the sum of Kendall-tau distances to the input votes. The Kendall-tau distance between two rankings is the number of pairs of items that are in a different order. Kemeny-Young is Condorcet-consistent and has strong axiomatic support, but it is NP-hard to compute exactly \citep{Bartholdi1989}. Practical algorithms often rely on heuristics, integer programming \citep{Yoo2021,Rico2023}, or approximation algorithms \citep{Dwork2001}.

Recent work has focused on making rank aggregation differentiable. Blondel et al.\ \citep{Blondel2020} proposed a differentiable sorting operator based on optimal transport, which can be used to learn rankings in an end-to-end fashion. Lanctot et al.\ \citep{Lanctot2025} introduced Soft Condorcet Optimization (SCO), which minimizes a differentiable loss function that encourages the learned ranking to satisfy the Condorcet criterion on the training data. Crucially, SCO optimizes a ranking loss to be Condorcet-friendly. STE, in contrast, optimizes a core via soft reachability and covering; its normative object is a set, not a ranking.

All of these methods, whether classical or differentiable, are fundamentally about finding a ranking. They treat cycles as noise to be averaged out or as inconsistencies to be minimized. STE, in contrast, treats cycles as a core structural feature of the data and outputs a set-valued core, which is a fundamentally different kind of object.

\subsubsection{Rating Systems and Pairwise Probability Models}

Another major line of work involves fitting probabilistic models to pairwise comparison data. The Bradley-Terry-Luce (BTL) model \citep{BradleyTerry1952,Luce1959} is a cornerstone of this approach. It assigns a latent strength or skill parameter $\lambda_i$ to each agent $i$ and models the probability of $i$ beating $j$ as $\Pp(i \succ j) = e^{\lambda_i}/(e^{\lambda_i} + e^{\lambda_j}) = \sigma(\lambda_i - \lambda_j)$. This model can be fit using maximum likelihood estimation \citep{Hunter2004}. The Elo rating system \citep{Elo1978} and its Bayesian generalization, TrueSkill \citep{Herbrich2006}, are dynamic versions of the BTL model that update agent ratings after each interaction. These systems are widely used in online gaming and sports.

The core assumption of BTL and related models is that there is an underlying one-dimensional latent skill that explains the pairwise outcomes. This assumption implies transitivity. If $A$ is stronger than $B$, and $B$ is stronger than $C$, then $A$ must be stronger than $C$. These models cannot represent cycles. When fit to cyclic data, they will produce a best-fit transitive approximation, which may not be a good representation of the underlying dynamics. STE makes no such transitivity assumption. It learns a full probabilistic tournament matrix $P$ without assuming it is generated by a latent skill model, and then analyzes the structure of this tournament, cycles and all.

\subsection{Spectral, Markov, and Hodge-Theoretic Methods}

A third class of methods uses techniques from linear algebra and graph theory to rank items from pairwise comparisons. Spectral methods, such as Rank Centrality \citep{Negahban2017}, model the pairwise comparison data as a random walk on the graph of agents. The stationary distribution of this random walk gives a score for each agent, and these scores are used to produce a ranking. The intuition is that an agent is highly ranked if it is beaten by other highly ranked agents.

Hodge-theoretic methods, such as HodgeRank \citep{HodgeRank2011}, use tools from algebraic topology to decompose the pairwise comparison data into a transitive (gradient) component and a cyclic (harmonic) component. The transitive component corresponds to a global ranking, while the cyclic component captures the degree of non-transitivity in the data. HodgeRank then uses the transitive component to produce a ranking.

While these methods are more sophisticated in their handling of cycles than simple rating systems, their ultimate goal is still to produce a ranking. They either use the cyclic structure to inform the ranking (as in Rank Centrality) or explicitly separate it out and discard it (as in HodgeRank). STE, by contrast, takes the cyclic structure as the primary object of interest and uses it to compute a set-valued core.

\subsection{Differentiable Combinatorics and Smooth Operators}

The technical machinery of STE relies on recent advances in differentiable combinatorics. This field seeks to create continuous, differentiable analogues of discrete combinatorial operations, allowing them to be embedded in deep learning models and trained with gradient descent.

Key techniques include the Gumbel-Softmax trick \citep{Jang2017,Maddison2017}, which provides a differentiable way to sample from a categorical distribution; the log-sum-exp (LSE) function, $\lse(x_1, \dots, x_n) = \log \sum_i e^{x_i}$, which provides a smooth approximation to the maximum function (its dual, the softmin, is a smooth approximation to the minimum); Sinkhorn iteration \citep{Cuturi2013}, which provides a differentiable algorithm for solving regularized optimal transport problems, enabling differentiable sorting and ranking \citep{Blondel2020}; and perturbed optimizers \citep{Berthet2020}, which provide a general framework for differentiating through the solution of an optimization problem.

STE uses normalized log-sum-exp extrema together with bounded max-min path products to create differentiable versions of graph reachability and covering relations. The use of LSE is standard in differentiable combinatorics; the methodological contribution here is to adapt these smooth operations to set-valued tournament solutions.

\subsection{LLM and General-Agent Evaluation}

The motivation for STE is rooted in the challenges of evaluating modern AI agents. The Chatbot Arena \citep{Chiang2024} is a prominent example of an evaluation platform that relies on pairwise comparisons. It uses an Elo-based system to rank LLMs based on crowdsourced human preferences. While effective, this approach inherits the limitations of rating systems, namely the assumption of transitivity.

Other benchmarks like AgentBench \citep{Liu2024} and Mobile-Bench \citep{Deng2024} evaluate agents on a suite of tasks, but the final aggregation of scores often relies on simple averaging, which can obscure important non-transitive interactions. The field of LLM agent evaluation is rapidly evolving \citep{Mohammadi2025}, with a growing recognition of the need for more nuanced and robust evaluation methodologies. STE provides a concrete, theoretically grounded proposal for how to move beyond simple rankings.

\subsection{Novelty Boundary and Explicit Contrasts}

To summarize and crystallize the contributions of this work, Table \ref{tab:novelty_long} provides a detailed novelty audit, contrasting STE with the main classes of related work. The fundamental distinction lies in the normative object of inference. STE is not another way to compute a ranking; it is a way to compute a fundamentally different object, a set-valued core, that is better suited to the non-transitive realities of multi-agent interaction.

\begin{table}[h!]
\centering
\caption{Extended novelty audit: Detailed comparison of STE with existing approaches.}
\label{tab:novelty_long}
\small
\begin{tabular}{@{}p{0.18\linewidth}p{0.25\linewidth}p{0.25\linewidth}p{0.25\linewidth}@{}}
\toprule
\textbf{Dimension} & \textbf{Rank Aggregation (Kemeny, SCO)} & \textbf{Pairwise Models (BTL, Elo)} & \textbf{STE (this work)} \\
\midrule
\textbf{Normative Object} & A single total order (ranking) that best summarizes the pairwise preferences. & A set of scalar ratings or strengths that imply a total order. & A set-valued core (a subset of agents) that represents the undominated tier. \\
\addlinespace
\textbf{Core Assumption} & A consensus ranking exists and is the desired output. Cycles are treated as noise or inconsistencies to be minimized. & A latent one-dimensional skill parameter explains the pairwise outcomes. This implies transitivity. & Non-transitivity is a first-class feature of the domain. The object of interest is the structure of the tournament graph itself. \\
\addlinespace
\textbf{Handling of Cycles} & Cycles are broken to produce a linear order. The goal is to find the ranking that is closest to the cyclic data. & Cycles cannot be represented. The model fits the best transitive approximation to the data. & Cycles are explicitly modeled and are the reason for the existence of non-singleton cores. The core is the set of agents involved in the top cycles. \\
\addlinespace
\textbf{Theoretical Foundation} & Axioms of social choice for rankings (e.g., Condorcet consistency for rankings, distance minimization). & Statistical theory of logistic regression and latent variable models. & Axioms of tournament solutions (e.g., Condorcet consistency for sets, monotonicity, stability). \\
\addlinespace
\textbf{Key Operators} & Differentiable surrogates of Kendall-tau distance; Fenchel-Young losses. & Logistic or Gaussian likelihoods on latent strength differences. & Differentiable soft reachability and soft cover operators based on smooth graph traversal. \\
\addlinespace
\textbf{Output Semantics} & A unique, unambiguous ranking of all agents from best to worst. & A unique set of scores that can be sorted to produce a ranking. & A (possibly non-unique) set of agents in the top tier, each with a membership score. Allows for ties and incomparable agents. \\
\addlinespace
\textbf{Interpretability} & The output ranking's distance to the input votes can be measured. & The latent strength parameters are interpretable as agent skill levels. & The membership scores quantify graded core membership and can be calibrated when validation labels are available. The framework can provide witnesses (paths or covers) for an agent's membership. \\
\bottomrule
\end{tabular}
\end{table}

\section{Preliminaries and Notation}
\label{sec:preliminaries}

Before presenting the STE framework, we establish the necessary notation and formal definitions. This section introduces the mathematical objects we will work with: agents, contexts, probabilistic tournaments, and classical tournament solutions.

\subsection{Agents, Contexts, and Pairwise Comparisons}

Let $\mathcal{A} = \{1, \dots, n\}$ denote a finite set of agents. These agents could represent LLM models, game-playing AIs, or any entities that can be compared in pairwise interactions. Let $\mathcal{X}$ denote a context space. A context $x \in \mathcal{X}$ represents any information that might influence the outcome of a pairwise comparison. For example, in LLM evaluation, a context might include the task type (coding, creative writing, reasoning), the specific prompt, or the domain of expertise required.

A pairwise comparison between agents $a$ and $b$ in context $x$ results in an outcome $y \in \{0, 1\}$, where $y=1$ indicates that agent $a$ defeated (or was preferred to) agent $b$, and $y=0$ indicates the opposite. We assume we have access to a dataset $\mathcal{D}$ of such comparisons, $\mathcal{D} = \{(a_i, b_i, x_i, y_i)\}_{i=1}^m$.

In many applications, we are interested in evaluating agents with respect to a specific distribution over contexts, which we denote by $Q$. This distribution $Q$ represents the arena or deployment environment in which the agents will be used. For example, if we are evaluating coding assistants, $Q$ might place more weight on coding tasks. If we are evaluating general-purpose chatbots, $Q$ might be more uniform across task types.

\subsection{Probabilistic Tournaments}

The fundamental object in our framework is the probabilistic tournament, which captures the expected pairwise win probabilities under the evaluation distribution $Q$.

\begin{definition}[Probabilistic Tournament]
\label{def:prob_tournament}
A probabilistic tournament is a matrix $P \in [0, 1]^{n \times n}$ where the entry $P_{ab}$ represents the probability that agent $a$ defeats agent $b$ in a randomly sampled context $x \sim Q$:
\[
P_{ab} = \E_{x \sim Q} [\Pp(a \succ b \mid x)].
\]
The matrix $P$ satisfies the following properties:
\begin{enumerate}[label=(\roman*)]
    \item \textbf{Complementarity:} $P_{ab} + P_{ba} = 1$ for all $a, b \in \mathcal{A}$.
    \item \textbf{Self-comparison:} $P_{aa} = 1/2$ for all $a \in \mathcal{A}$.
\end{enumerate}
\end{definition}

The complementarity property ensures that the probabilities are consistent: if agent $a$ has a probability $p$ of beating agent $b$, then agent $b$ has a probability $1-p$ of beating agent $a$. The self-comparison property is a convention that simplifies notation and ensures that the matrix is well-defined on the diagonal.

From a probabilistic tournament $P$, we can derive a deterministic tournament by thresholding at $1/2$.

\begin{definition}[Majority-Rule Tournament]
\label{def:majority_tournament}
The majority-rule tournament $T$ induced by a probabilistic tournament $P$ is a directed graph on $\mathcal{A}$ where there is an edge from $a$ to $b$ (denoted $a \succ_T b$) if and only if $P_{ab} > 1/2$.
\end{definition}

This majority-rule object is a genuine tournament whenever every off-diagonal pair has a strict majority, i.e., $P_{ab}\ne 1/2$ for all $a\ne b$. When ties occur, thresholding yields a directed majority graph with unresolved pairs rather than a complete tournament. In that case, one must either keep the corresponding pair soft, apply a declared tie-breaking rule, or use a tie-aware variant of the relevant tournament solution. The strict-margin assumption in Section~\ref{sec:theory} gives the conditions under which the classical hard-tournament statements apply.

\subsection{Classical Tournament Solutions}

We now formally define the two classical tournament solutions that will be the focus of our work: the Top Cycle and the Uncovered Set.

\begin{definition}[Reachability in a Tournament]
\label{def:reachability}
Let $T$ be a tournament. We say that agent $a$ can \textbf{reach} agent $b$ in $T$, denoted $a \rightsquigarrow_T b$, if there exists a directed path from $a$ to $b$ in the tournament graph. A path is a sequence of agents $a = c_0, c_1, \dots, c_k = b$ such that $c_i \succ_T c_{i+1}$ for all $i = 0, \dots, k-1$.
\end{definition}

\begin{definition}[Top Cycle]
\label{def:top_cycle}
The \textbf{Top Cycle} of a tournament $T$, denoted $\TC(T)$, is
\[
\TC(T)=\{a\in\mathcal{A}: \text{for every } b\in\mathcal{A},\; a\rightsquigarrow_T b\}.
\]
Equivalently, $\TC(T)$ is the unique inclusion-minimal non-empty set that dominates its complement, i.e., every member of $\TC(T)$ beats every agent outside $\TC(T)$.
\end{definition}

The Top Cycle is also known as the Smith set and is closely related to the Schwartz set in this setting. It represents the undominated tier of agents: those that, through some chain of victories, can reach every other agent. If a Condorcet winner exists (an agent that beats all others), the Top Cycle consists of only that agent. Otherwise, the Top Cycle will contain multiple agents involved in cycles.

\begin{definition}[Covering Relation]
\label{def:covering}
Let $T$ be a tournament. We say that agent $c$ \textbf{covers} agent $a$ in $T$, denoted $c \triangleright_T a$, if the following two conditions hold:
\begin{enumerate}[label=(\roman*)]
    \item $c \succ_T a$ (agent $c$ beats agent $a$), and
    \item For all agents $b \in \mathcal{A}$ such that $a \succ_T b$, we have $c \succ_T b$ (agent $c$ beats every agent that $a$ beats).
\end{enumerate}
\end{definition}

The covering relation is a strong form of dominance. If $c$ covers $a$, then $c$ is strictly superior to $a$ in the sense that it not only beats $a$ directly but also beats every agent that $a$ can beat. An agent that is covered is outclassed and can be safely excluded from the set of top contenders.

\begin{definition}[Uncovered Set]
\label{def:uncovered_set}
The \textbf{Uncovered Set} of a tournament $T$, denoted $\UC(T)$, is the set of all agents that are not covered by any other agent:
\[
\UC(T) = \{ a \in \mathcal{A} : \text{there is no } c \in \mathcal{A} \text{ such that } c \triangleright_T a \}.
\]
\end{definition}

\begin{figure}[t]
\centering
\begin{tikzpicture}[font=\small]
  % Outer set (agents)
  \draw[thick] (0,0) circle (1.8);
  \node at (0,1.5) {$\mathcal{A}$};

  % Top Cycle
  \draw[thick, fill=gray!10] (0,0) circle (1.15);
  \node at (0,0.8) {$\TC(T)$};

  % Uncovered Set
  \draw[thick, fill=gray!25] (0,0) circle (0.6);
  \node at (0,0) {$\UC(T)$};
\end{tikzpicture}
\caption{Typical set relations for tournament solutions: the Uncovered Set is contained in the Top Cycle, which is a subset of the agent set $\mathcal{A}$. (The inclusions can be strict depending on the tournament structure.)}
\label{fig:set_relations}
\end{figure}
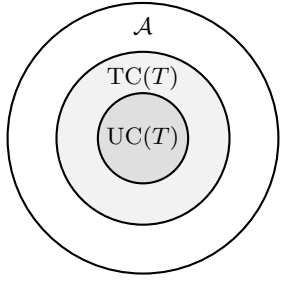

The Uncovered Set is always a subset of the Top Cycle, $\UC(T) \subseteq \TC(T)$. It provides a more refined selection of winners. Like the Top Cycle, if a Condorcet winner exists, the Uncovered Set consists of only that agent.

\subsection{Properties of Tournament Solutions}

Both the Top Cycle and the Uncovered Set satisfy several desirable axiomatic properties that make them attractive as choice functions. We briefly mention a few key properties here; for a comprehensive treatment, see \citet{Brandt2016}.

\begin{enumerate}
    \item \textbf{Non-emptiness:} Both $\TC(T)$ and $\UC(T)$ are always non-empty for any tournament $T$.
    \item \textbf{Condorcet consistency:} If a Condorcet winner $a^*$ exists (i.e., $a^* \succ_T b$ for all $b \neq a^*$), then $\TC(T) = \UC(T) = \{a^*\}$.
    \item \textbf{Monotonicity:} If $a \in \TC(T)$ and we strengthen the position of $a$ by adding more edges from $a$ to other agents, then $a$ remains in the Top Cycle of the new tournament.
    \item \textbf{Stability and refinement:} The Top Cycle is stable under removing agents outside the solution. The Uncovered Set is a finer refinement of the Top Cycle whose behavior under deletions is more delicate, but it retains the central Condorcet-consistency and non-emptiness properties needed here.
\end{enumerate}

These properties provide a strong normative foundation for using the Top Cycle and Uncovered Set as evaluation criteria.

\section{The Soft Tournament Equilibrium (STE) Framework}
\label{sec:method}

The Soft Tournament Equilibrium (STE) framework is designed to learn set-valued tournament solutions from pairwise comparison data in an end-to-end differentiable manner. The framework can be broken down into four main stages: probabilistic tournament modeling, soft tournament construction, differentiable Top-Cycle computation, and differentiable Uncovered-Set computation. This section provides a detailed mathematical exposition of each of these stages, followed by a discussion of the training objective, the overall algorithm, and its computational complexity.

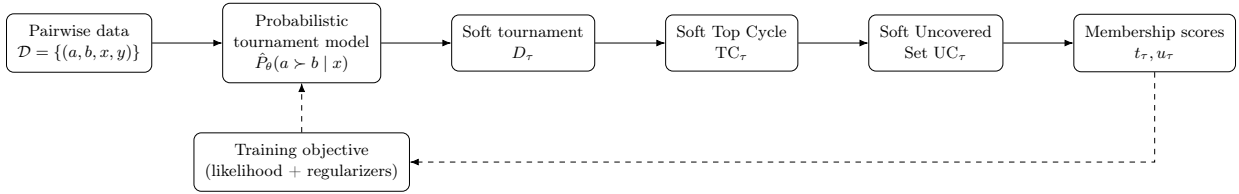
\begin{figure}[t]
\centering
\resizebox{\linewidth}{!}{
\begin{tikzpicture}[
    node distance=1.35cm,
    box/.style={draw, rounded corners, align=center, inner sep=6pt},
    >=Latex,
    font=\small
]
\node[box] (data) {Pairwise data\\$\mathcal{D}=\{(a,b,x,y)\}$};
\node[box, right=of data] (prob) {Probabilistic\\tournament model\\$\hat P_\theta(a\succ b\mid x)$};
\node[box, right=of prob] (soft) {Soft tournament\\$D_\tau$};
\node[box, right=of soft] (tc) {Soft Top Cycle\\$\mathrm{TC}_\tau$};
\node[box, right=of tc] (uc) {Soft Uncovered\\Set $\mathrm{UC}_\tau$};
\node[box, right=of uc] (out) {Membership scores\\$t_{\tau},u_{\tau}$};

\draw[->] (data) -- (prob);
\draw[->] (prob) -- (soft);
\draw[->] (soft) -- (tc);
\draw[->] (tc) -- (uc);
\draw[->] (uc) -- (out);

\node[box, below=0.95cm of prob] (loss) {Training objective\\(likelihood + regularizers)};
\draw[->, dashed] (out.south) |- (loss.east);
\draw[->, dashed] (loss) -- (prob.south);
\end{tikzpicture} }
\caption{Overview of the STE pipeline. From pairwise comparisons, STE learns a probabilistic tournament, constructs a temperature-controlled soft tournament $D_\tau$, and computes differentiable approximations of the Top Cycle and Uncovered Set to produce membership scores.}
\label{fig:ste_pipeline}
\end{figure}

\subsection{Stage 1: Probabilistic Tournament Modeling}
\label{sec:probabilistic_modeling}

The foundation of the STE framework is a model of the probabilistic tournament. Our goal is to learn a function that predicts the probability of $a$ defeating $b$ given a context $x$. To avoid building transitivity into the method, the main specification uses an antisymmetric pairwise logit
\begin{equation}
\Pp_\theta(a \succ b \mid x) = \sigma\big(h_\theta(a,b,x)\big),
\qquad
h_\theta(a,b,x)=-h_\theta(b,a,x),
\label{eq:pairwise_logit}
\end{equation}
where $\sigma(z)=1/(1+e^{-z})$. The antisymmetry condition enforces complementarity of pairwise probabilities without requiring a one-dimensional latent skill order. In practice, $h_\theta$ can be implemented directly as a neural pair encoder with antisymmetrization, for example $h_\theta(a,b,x)=g_\theta(a,b,x)-g_\theta(b,a,x)$.

A context-conditioned BTL model is an important special case obtained by setting
\[
h_\theta(a,b,x)=s_\theta(a,x)-s_\theta(b,x).
\]
This special case is useful when scalar contextual skill is a reasonable modeling assumption, but it should not be conflated with the general STE framework. The tournament operators below require only a probabilistic tournament matrix $P$; they do not require that $P$ be generated by a transitive latent-score model.

The parameters $\theta$ are learned by minimizing the negative log-likelihood (binary cross-entropy) of the observed comparisons:
\begin{equation}
\mathcal{L}_{\mathrm{CE}}(\theta) = -\E_{(a,b,x,y) \sim \mathcal{D}} \left[ y \log \Pp_\theta(a \succ b \mid x) + (1-y) \log(1 - \Pp_\theta(a \succ b \mid x)) \right].
\label{eq:cross_entropy_loss}
\end{equation}
This optimization can be performed using standard stochastic gradient methods. The full training objective, discussed in Section~\ref{sec:training_objective}, can also include regularization terms for score sharpness and calibration.

\subsubsection{Interpretation and Calibration of Scores}

The STE scores $t_\tau(a)$ and $u_\tau(a)$ defined below are continuous membership strengths in $[0,1]$. They should not automatically be reported as calibrated probabilities. For example, a score of $0.8$ should be read as strong support for membership in the relevant core; it corresponds to an empirical $80\%$ membership frequency only after calibration has been validated on synthetic ground truth, repeated bootstrap samples, or a held-out dataset with known labels.

Calibration can be assessed using standard reliability diagnostics. Let the membership score for an agent be $s_a\in[0,1]$. Divide the score range into $M$ bins. For each bin $B_m$, compute the average score $\mathrm{conf}(B_m)$ and the average ground-truth membership $\mathrm{acc}(B_m)$. The Expected Calibration Error is
\[
\mathcal{R}_{\text{calib}} = \sum_{m=1}^M \frac{|B_m|}{N}\, |\mathrm{acc}(B_m)-\mathrm{conf}(B_m)|.
\]
This term is primarily useful in synthetic experiments, or in settings where repeated evaluation provides empirical membership labels. For the pairwise probabilities $\Pp_\theta(a\succ b\mid x)$, standard calibration tools such as temperature scaling and reliability diagrams can be applied on held-out comparison data \citep{Guo2017}.

After training, we compute the marginal probabilistic tournament matrix $P \in [0,1]^{n\times n}$ by averaging over the evaluation distribution $Q$:
\begin{equation}
P_{ab}=\E_{x\sim Q}[\Pp_\theta(a\succ b\mid x)].
\label{eq:marginal_tournament}
\end{equation}
In practice, this expectation is approximated by Monte Carlo sampling from $Q$. The distribution $Q$ is a substantive part of the evaluation design: changing $Q$ changes the arena and can change the resulting core.

By construction, $P_{ab}+P_{ba}=1$ for $a\neq b$, and we set $P_{aa}=1/2$ by convention. This matrix $P$ is the object analyzed by the subsequent STE operators.

\subsection{Stage 2: Soft Tournament Construction}
\label{sec:soft_tournament_construction}

Classical tournament solutions are defined on deterministic graphs where each off-diagonal edge is either present or absent. Our probabilistic tournament $P$ contains continuous values. We therefore construct a soft adjacency matrix $D_\tau$, controlled by an edge temperature $\tau>0$.

\begin{definition}[Soft Majority Edge]
For $a,b\in\mathcal{A}$, define
\begin{equation}
D_\tau(a,b)=
\begin{cases}
0, & a=b,\\[2mm]
\sigma\!\left(\dfrac{P_{ab}-1/2}{\tau}\right), & a\neq b.
\end{cases}
\label{eq:soft_majority_edge}
\end{equation}
\end{definition}

The zero diagonal is important: it prevents artificial self-loops from contributing to reachability and covering. For $a\neq b$, if $P_{ab}>1/2$, then $D_\tau(a,b)\to 1$ as $\tau\to0^+$; if $P_{ab}<1/2$, then $D_\tau(a,b)\to0$; and if $P_{ab}=1/2$, then $D_\tau(a,b)=1/2$ for all $\tau$. Thus $D_\tau$ is a smooth, temperature-controlled relaxation of the majority-rule adjacency matrix.

\subsubsection{Edge Cases and Invariances}

\begin{itemize}[leftmargin=*]
    \item \textbf{Perfect ties ($P_{ab}=0.5$).} An off-diagonal tie gives $D_\tau(a,b)=D_\tau(b,a)=0.5$, representing uncertainty about edge direction. The diagonal remains zero.
    \item \textbf{Missing data.} If no comparison evidence is available for a pair, setting $P_{ab}=0.5$ is an explicit uninformative-prior convention. This convention should be reported, because a dense set of unknown pairs is not the same as a sparse deterministic tournament.
    \item \textbf{Relabeling invariance.} A permutation of the agent labels permutes $P$, $D_\tau$, and the resulting membership scores in the same way. The tournament operators do not depend on arbitrary labels.
\end{itemize}

\subsection{Stage 3: Differentiable Top-Cycle Computation}
\label{sec:soft_top_cycle}

The Top Cycle is defined by reachability. A useful soft reachability operator should be differentiable, bounded, and should converge to Boolean transitive closure as the edge temperature goes to zero. We therefore use bounded soft Boolean-style matrix operations rather than raw walk counts.

For a finite vector $z=(z_1,\ldots,z_m)$, define normalized soft minimum and maximum operators
\begin{align}
\smin_\gamma(z_1,\ldots,z_m)
&= -\gamma\log\left(\frac{1}{m}\sum_{i=1}^m \exp(-z_i/\gamma)\right),
\label{eq:normalized_softmin}\\
\smax_\gamma(z_1,\ldots,z_m)
&= \gamma\log\left(\frac{1}{m}\sum_{i=1}^m \exp(z_i/\gamma)\right),
\label{eq:normalized_smax}
\end{align}
where $\gamma>0$. If $z_i\in[0,1]$, then these normalized operators remain in $[0,1]$ and converge to the ordinary minimum and maximum as $\gamma\to0^+$.

\subsubsection{Soft Reachability}

For two matrices $X,Y\in[0,1]^{n\times n}$, define a smooth max-min Boolean product
\begin{equation}
(X\otimes_{\gamma}Y)_{ab}
=\smax_{\gamma}\Bigl(\bigl\{\smin_{\gamma}(X_{ac},Y_{cb}):c\in\mathcal{A}\bigr\}\Bigr).
\label{eq:soft_boolean_product}
\end{equation}
As $\gamma\to0^+$ this converges to
\begin{equation}
(X\otimes_{\max\min}Y)_{ab}=\max_{c\in\mathcal{A}}\min\{X_{ac},Y_{cb}\}.
\label{eq:maxmin_boolean_product}
\end{equation}
On binary matrices, the max-min product agrees with ordinary Boolean matrix multiplication: it equals one exactly when there exists an intermediate node $c$ with $X_{ac}=Y_{cb}=1$. We use the smooth form in the mathematical description and the exact max-min limit in the planted-core experiments for numerical stability; the two have the same hard-tournament limit.

Let $Q_\tau^{(1)}=D_\tau$ and, for $k\ge2$,
\begin{equation}
Q_\tau^{(k)}=Q_\tau^{(k-1)}\otimes_{\gamma}D_\tau.
\label{eq:soft_path_length_k}
\end{equation}
The bounded soft reachability matrix up to length $K$ is
\begin{equation}
R_\tau(a,b)=\smax_{\gamma}\Bigl(\bigl\{Q_\tau^{(k)}(a,b):k=1,\ldots,K\bigr\}\Bigr).
\label{eq:soft_reachability}
\end{equation}
Thus $R_\tau(a,b)\in[0,1]$ measures soft evidence that at least one directed path of length at most $K$ connects $a$ to $b$. When $D_\tau$ is binary, $\gamma\to0^+$, and $K\ge n-1$, $R_\tau$ is exactly the transitive closure of the hard tournament. This existence-style reachability avoids the path-count saturation that can occur when many weak paths are aggregated by raw matrix powers or probabilistic OR products.

\subsubsection{Soft Top-Cycle Membership Score}

The classical Top Cycle consists of agents that can reach every other agent. We approximate this universal quantifier by the normalized soft minimum of reachability scores:
\begin{equation}
t_\tau(a)=\smin_\gamma\bigl(\{R_\tau(a,b): b\in\mathcal{A},\ b\neq a\}\bigr),
\label{eq:soft_top_cycle_score}
\end{equation}
where $\gamma$ is an aggregation temperature, often tied to the edge temperature $\tau$ in implementation. Since $R_\tau(a,b)\in[0,1]$, we have $t_\tau(a)\in[0,1]$. High $t_\tau(a)$ indicates that $a$ has strong soft reachability to every other agent; a single poorly reachable opponent lowers the score.

\subsection{Stage 4: Differentiable Uncovered-Set Computation}
\label{sec:soft_uncovered_set}

The Uncovered Set is based on the covering relation. Recall that $c$ covers $a$ if $c\succ a$ and every agent beaten by $a$ is also beaten by $c$. Equivalently, $c$ covers $a$ when $c$ beats $a$ and there is no witness $b$ such that $a$ beats $b$ while $c$ fails to beat $b$.

\subsubsection{Soft Covering Relation}

For $c\neq a$, define the soft violation of the claim ``$c$ covers $a$'' as
\begin{equation}
v_\tau(c,a)=\smax_{\gamma_c}\bigl(\{D_\tau(a,b)(1-D_\tau(c,b)): b\in\mathcal{A}\setminus\{a,c\}\}\bigr),
\label{eq:soft_cover_violation}
\end{equation}
with the convention that the maximum over an empty set is zero. The term inside the maximum is large when $a$ has strong evidence of beating $b$ and $c$ lacks such evidence. The soft cover score is
\begin{equation}
\mathrm{cover}_\tau(c,a)=D_\tau(c,a)\bigl(1-v_\tau(c,a)\bigr),\qquad \mathrm{cover}_\tau(a,a)=0.
\label{eq:soft_cover_score}
\end{equation}
This score lies in $[0,1]$ and converges to the hard covering indicator under the strict-margin, zero-temperature limit.

\subsubsection{Soft Uncovered-Set Membership Score}

An agent belongs to the Uncovered Set if no other agent covers it. We first aggregate the evidence that some coverer exists:
\begin{equation}
q_\tau(a)=\smax_{\gamma_c}\bigl(\{\mathrm{cover}_\tau(c,a):c\in\mathcal{A},\ c\neq a\}\bigr).
\label{eq:soft_cover_aggregate}
\end{equation}
The soft Uncovered-Set membership score is then
\begin{equation}
u_\tau(a)=1-q_\tau(a).
\label{eq:soft_uncovered_set_score}
\end{equation}
This formulation avoids the ambiguity of vector-valued softmax notation and removes the earlier need for an additional sigmoid. If no agent strongly covers $a$, then $q_\tau(a)$ is small and $u_\tau(a)$ is high; if at least one agent strongly covers $a$, then $q_\tau(a)$ is high and $u_\tau(a)$ is low.

\subsection{Training Objective and Algorithm}
\label{sec:training_objective}

The full STE framework is trained by optimizing an objective function that combines pairwise predictive accuracy with optional regularization terms on the induced core scores:
\begin{equation}
\mathcal{L}(\theta)=\mathcal{L}_{\mathrm{CE}}(\theta)+\lambda_s\mathcal{R}_{\mathrm{sharp}}+\lambda_c\mathcal{R}_{\mathrm{calib}}.
\label{eq:full_objective}
\end{equation}

The sharpness regularizer discourages scores from remaining indefinitely near $0.5$. Because the revised Top-Cycle and Uncovered-Set scores both lie in $[0,1]$, a standard entropy penalty can be applied directly:
\begin{equation}
\mathcal{R}_{\mathrm{sharp}}=\frac{1}{n}\sum_{a\in\mathcal{A}}\Big[-s(a)\log(s(a)+\epsilon)-(1-s(a))\log(1-s(a)+\epsilon)\Big],
\label{eq:sharpness_regularizer}
\end{equation}
where $s(a)$ is either $t_\tau(a)$ or $u_\tau(a)$ and $\epsilon>0$ is a numerical-stability constant. This term should be used cautiously: excessive sharpness regularization can produce overconfident cores, especially under sparse data.

The calibration regularizer $\mathcal{R}_{\mathrm{calib}}$ is optional. For pairwise win probabilities it can be computed on held-out comparison data. For membership scores it should be used only when ground-truth core membership is available, such as in synthetic experiments, or when bootstrap-derived empirical inclusion frequencies are treated as calibration targets.

Algorithm~\ref{alg:ste_training} summarizes the training pipeline. The temperature is typically annealed from a higher value to a lower value so that training begins with smooth operators and ends with sharper approximations.

\begin{algorithm}
\caption{Soft Tournament Equilibrium (STE) Training}
\label{alg:ste_training}
\begin{algorithmic}[1]
\Require Pairwise logit model $h_\theta$, pairwise data $\mathcal{D}$, context distribution $Q$, epochs $E$, learning rate $\eta$, regularization weights $\lambda_s,\lambda_c$, temperature schedule $\tau(e)$, path length $K$.
\For{$e=1,\dots,E$}
    \State Set current temperatures $\tau_e=\tau(e)$ and, if separate, $\gamma_e$ for soft extrema.
    \For{batch $(a,b,x,y)\in\mathcal{D}$}
        \State Compute pairwise loss $\mathcal{L}_{\mathrm{CE}}$ using Eq.~\eqref{eq:cross_entropy_loss}.
        \State Sample contexts $\{x_i\}_{i=1}^M\sim Q$.
        \State Compute marginal tournament $P$ using Eq.~\eqref{eq:marginal_tournament}.
        \State Compute soft adjacency $D_{\tau_e}$ using Eq.~\eqref{eq:soft_majority_edge}.
        \State Compute soft reachability $R_{\tau_e}$ using Eq.~\eqref{eq:soft_reachability}.
        \State Compute soft Top-Cycle scores $t_{\tau_e}$ using Eq.~\eqref{eq:soft_top_cycle_score}.
        \State Compute soft Uncovered-Set scores $u_{\tau_e}$ using Eq.~\eqref{eq:soft_uncovered_set_score}.
        \State Compute optional regularizers $\mathcal{R}_{\mathrm{sharp}}$ and $\mathcal{R}_{\mathrm{calib}}$.
        \State Compute total loss $\mathcal{L}=\mathcal{L}_{\mathrm{CE}}+\lambda_s\mathcal{R}_{\mathrm{sharp}}+\lambda_c\mathcal{R}_{\mathrm{calib}}$.
        \State Update $\theta$ using gradient descent: $\theta\leftarrow\theta-\eta\nabla_\theta\mathcal{L}$.
    \EndFor
\EndFor
\end{algorithmic}
\end{algorithm}

\subsection{Algorithmic Complexity and Refinements}

The practical applicability of STE depends on its computational efficiency. This section analyzes the complexity of the core operators and discusses several algorithmic refinements to improve scalability.

\subsubsection{Complexity Analysis}

The computational complexity of STE is dominated by the computation of the soft reachability and soft cover matrices. A detailed breakdown is provided in Table \ref{tab:complexity}. The bounded max-min reachability recurrence in Eq.~\eqref{eq:soft_reachability} requires $O(Kn^3)$ time for dense matrices, the same asymptotic order as dense matrix-power methods. The soft cover computation is also cubic in the number of agents. In practice, $K$ can often be set to a small value for exploratory diagnostics, while $K\ge n-1$ is used when exact hard-limit reachability is required. Sparse or block-structured implementations can reduce cost, but missing comparisons encoded as $P_{ab}=0.5$ produce dense uncertain edges unless an explicit pruning policy is introduced.

\begin{table}[h!]
\centering
\caption{Computational Complexity of STE Operators}
\label{tab:complexity}
\begin{tabular}{@{}lll@{}}
\toprule
\textbf{Operator} & \textbf{Time Complexity} & \textbf{Practical Knobs} \\
\midrule
Soft Adjacency ($D_\tau$) & $O(n^2)$ & Parallelizable over pairs \\
Soft Reachability ($R_\tau$) & $O(K n^3)$ & Reduce $K$; use sparse/block approximations; batch products \\
Soft Top-Cycle ($t_\tau$) & $O(n^2)$ & Dominated by $R_\tau$ computation \\
Soft Cover (all pairs) & $O(n^3)$ & Parallelizable over pairs \\
Soft Uncovered-Set ($u_\tau$) & $O(n^2)$ & Dominated by soft cover computation \\
\bottomrule
\end{tabular}
\end{table}

\subsubsection{Algorithmic Refinements for Scalability}

For very large-scale applications ($n>1{,}000$), several algorithmic refinements can be employed:

\begin{itemize}[leftmargin=*]
    \item \textbf{Truncated soft-closure iterations:} Instead of computing all path lengths up to $n-1$ for every exploratory run, one can use a small $K$ and increase it only for sensitivity checks or final reporting.

    \item \textbf{Sparse or pruned kernels:} Sparse kernels are useful when the implementation explicitly prunes low-confidence or low-weight edges. They are not automatically valid when missing data are encoded as $P_{ab}=0.5$, because such entries correspond to dense uncertain edges rather than absent edges.

    \item \textbf{GNN-style message passing:} The max-min reachability closure can be framed as message passing on a directed graph. This perspective allows the use of graph neural network (GNN) frameworks and optimizations, such as neighborhood sampling for very large graphs \citep{hamilton2017representation}.

    \item \textbf{Low-rank and block approximations:} When agents cluster by model family, task family, or rating tier, block approximations can reduce the effective size of the dense soft-closure computation.

    \item \textbf{Stochastic path sampling:} For very large graphs, one can estimate reachability by sampling paths under a reported proposal distribution. Such estimates should be presented as approximations to the reachability operator, not as exact STE scores.
\end{itemize}

These refinements make STE practical for moderate-size agent pools and provide a path to larger-scale applications. Any sparsification policy should be reported explicitly, since pruning uncertain edges changes the semantics of missing or tied comparisons.

\paragraph{Operator-cost summary.}
The dense STE operator cost is therefore $O(Kn^3+n^3)$, usually written as $O(Kn^3)$ when $K\ge1$. The forward cost of the pairwise model $h_\theta(a,b,x)$ is architecture-dependent and should be reported separately from the tournament-operator runtime in empirical studies. For large $n$, using small $K$ values, block approximations, or explicit pruning can be useful for exploratory analysis, but final claims should include sensitivity checks against larger $K$ and should report the missing-data and pruning policies.

\section{Theoretical Analysis}
\label{sec:theory}

This section provides a rigorous theoretical analysis of the Soft Tournament Equilibrium framework. We establish the fundamental properties of our differentiable operators, proving that they are consistent with their classical counterparts in the zero-temperature limit. We then analyze the behavior of STE in the presence of a Condorcet winner, its stability with respect to perturbations in the data, and its sample complexity. This analysis provides the theoretical grounding for STE as a robust and principled method for agent evaluation.

\subsection{Finite-Temperature Approximation Guarantees}

A crucial property of STE is that the soft solutions approximate the classical hard solutions as the temperatures approach zero. Because the revised operators use both a soft edge temperature $\tau$ and soft quantifier temperatures $\gamma,\gamma_c$, finite-temperature error has two sources: edge-orientation softness and quantifier softness. The edge error decays exponentially under a strict margin, while the normalized soft extrema introduce an additive $O(\gamma\log n)$ term.

\begin{theorem}[Finite-Temperature Error Bound for Top Cycle]
\label{thm:finite_tau_bound}
Let $T$ be the majority-rule tournament induced by $P$ under the strict margin assumption (Assumption~\ref{as:margin_strict}) with margin $\delta>0$. Let $t_0(a)=\I(a\in\TC(T))$ and let $t_\tau(a)$ be computed with the bounded reachability operator in Eq.~\eqref{eq:soft_reachability}, path length $K\ge n-1$, edge temperature $\tau$, and softmin temperature $\gamma$. Then there exist constants $C_1,C_2$ depending on $n$ and $K$, but not on $\tau$ or $\gamma$, such that
\begin{equation}
|t_\tau(a)-t_0(a)|\le C_1 e^{-\delta/\tau}+C_2\gamma\log n.
\label{eq:finite_tau_tc_bound}
\end{equation}
In particular, if $\tau\to0^+$ and $\gamma\to0^+$, then $t_\tau(a)\to t_0(a)$.
\end{theorem}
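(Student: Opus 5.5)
We will obtain the bound by a triangle inequality through an intermediate "hard-quantifier" score. Let $B\in\{0,1\}^{n\times n}$ be the adjacency matrix of $T$, with zero diagonal. Let $\bar R$ be the reachability matrix obtained from $B$ using the exact max–min product \eqref{eq:maxmin_boolean_product} and the exact maximum over $k\le K$. Because $K\ge n-1$ and max–min agrees with Boolean multiplication on binary matrices, $\bar R$ is exactly the transitive closure of $T$. Hence $\min_{b\ne a}\bar R(a,b)=t_0(a)$ by Definition~\ref{def:top_cycle}.

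Next, let $\hat R$ and $\hat t$ denote the exact (hard) max–min recurrence and hard minimum applied to the soft matrix $D_\tau$ instead of $B$. We then split
\[
|t_\tau(a)-t_0(a)|\le |t_\tau(a)-\hat t(a)|+|\hat t(a)-t_0(a)|,
\]
where the first term is the quantifier-softness error and the second is the edge-softness error.

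\textbf{Edge term.} Under Assumption~\ref{as:margin_strict}, every off-diagonal pair satisfies $|P_{ab}-1/2|\ge\delta$. Therefore
\[
|D_\tau(a,b)-B_{ab}|\le \sigma(-\delta/\tau)\le e^{-\delta/\tau}.
\]
The diagonals agree exactly. The maps $\max$ and $\min$ are $1$-Lipschitz in the sup norm, so the max–min product $X\otimes_{\max\min}Y$ satisfies
\[
\|X\otimes Y-X'\otimes Y'\|_\infty\le\max(\|X-X'\|_\infty,\|Y-Y'\|_\infty).
\]
Inducting over the $k\le K$ powers, then taking the max over $k$ and the min over $b$, gives $|\hat t(a)-t_0(a)|\le e^{-\delta/\tau}$. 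So $C_1=1$ already suffices, and any larger constant is harmless.

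\textbf{Quantifier term.} For $z\in[0,1]^m$, the normalized operators \eqref{eq:normalized_softmin}–\eqref{eq:normalized_smax} satisfy
\[
\max z-\gamma\log m\le \smax_\gamma(z)\le\max z,\qquad \min z\le\smin_\gamma(z)\le \min z+\gamma\log m,
\]
by bounding the average of exponentials between $e^{\max/\gamma}/m$ and $e^{\max/\gamma}$. Each smooth operator is also $1$-Lipschitz in the sup norm, being a monotone, translation-equivariant map. We then propagate errors through the recursion. Let $e_k=\|Q^{(k)}_\tau-\hat Q^{(k)}\|_\infty$. One smooth product contributes at most $2\gamma\log n$: one $\smin$ over $2$ entries, costing at most $\gamma\log 2$, plus one $\smax$ over $n$ entries. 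Using the Lipschitz property, $e_k\le e_{k-1}+2\gamma\log n$, so $e_K\le 2K\gamma\log n$. The smooth max over $K$ lengths adds $\gamma\log K$, and the final softmin over $n-1$ opponents adds $\gamma\log n$. Hence $|t_\tau(a)-\hat t(a)|\le C_2\gamma\log n$ with, for example, $C_2=2K+2$ (using $K\le n^{O(1)}$, or else absorbing $\log K$ into $C_2$ since $C_2$ may depend on $K$). Summing the two terms gives \eqref{eq:finite_tau_tc_bound}, and the limit statement follows immediately.

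The main obstacle is the error propagation through the nested recurrence. We must make sure errors add rather than multiply across the $K$ products; this is why we need the non-expansiveness of $\smax_\gamma$ and $\smin_\gamma$ in the sup norm, which follows from monotonicity plus $f(z+c\mathbf{1})=f(z)+c$. We must also treat the self-comparison index $c$ in \eqref{eq:soft_boolean_product} correctly: the zero diagonal of $D_\tau$ only lowers the inner terms, and it is matched exactly in the hard comparison, so it creates no mismatch. Everything else is routine Lipschitz bookkeeping.
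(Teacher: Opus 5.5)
Your proof is correct. It uses the same two ingredients as the paper's sketch: the edgewise bound $|D_\tau(a,b)-\I(P_{ab}>1/2)|\le e^{-\delta/\tau}$, and the $O(\gamma\log m)$ approximation error of the normalized soft extrema. Your decomposition, however, is different and more complete.

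The paper propagates the edge error through the \emph{smooth} operators, calling them "finite compositions of Lipschitz maps." It then explicitly charges only the final softmin, with $\gamma\log(n-1)$. Read literally, this leaves two points unaddressed:
\begin{enumerate}
\item The smooth recurrence applied to the binary adjacency matrix does not return the exact closure. So the $\gamma\log 2+\gamma\log n$ incurred at each $\otimes_\gamma$ step, and the $\gamma\log K$ from the $\smax_\gamma$ over path lengths, are never accounted for.
\item The sketch asserts, but does not justify, that the Lipschitz constant does not depend on $\gamma$. For generic log-sum-exp compositions it could, and the theorem forbids this for $C_1$.
\end{enumerate}

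Your argument handles both. You use the exact max--min closure applied to $D_\tau$ as the intermediate object. You also observe that $\smin_\gamma$ and $\smax_\gamma$ are monotone and translation-equivariant, hence non-expansive in $\|\cdot\|_\infty$ uniformly in $\gamma$. As a result, the edge term passes through hard max/min with the explicit constant $C_1=1$. The quantifier errors add, rather than multiply, across the $K$ layers, giving $C_2=2K+O(1)$; this is exactly the $K$-dependence the statement allows in $C_2$. What the paper's version gains is only brevity.

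Two small bookkeeping points. Since $e_1=0$, you actually get $\max_k e_k\le 2(K-1)\gamma\log n$. And $C_2=2K+2$ as written needs $\log K\le\log n$, which holds for $K=n-1$; for larger $K$, take $C_2=2K+1+\log K/\log n$. Throughout, assume $n\ge 2$, so that $\log 2\le\log n$ and the final softmin ranges over a nonempty set.
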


\begin{proof}[Proof sketch]
For every off-diagonal edge, the sigmoid relaxation satisfies $|D_\tau(a,b)-\I(P_{ab}>1/2)|\le e^{-\delta/\tau}$ up to a constant. The smooth max-min products and finite soft maxima in Eq.~\eqref{eq:soft_reachability} are finite compositions of Lipschitz maps on $[0,1]$, so this edgewise error propagates to $R_\tau$ with a constant depending on $n$ and $K$. The normalized softmin differs from the hard minimum by at most $\gamma\log(n-1)$. Combining these two bounds gives Eq.~\eqref{eq:finite_tau_tc_bound}.
\end{proof}

\begin{corollary}[Finite-Temperature Error Bound for Uncovered Set]
\label{cor:finite_tau_uc_bound}
Under the same strict-margin assumption, let $u_0(a)=\I(a\in\UC(T))$ and let $u_\tau(a)$ be computed by Eqs.~\eqref{eq:soft_cover_violation}--\eqref{eq:soft_uncovered_set_score}. Then there exist constants $C_1',C_2'$ depending on $n$ and not on the temperatures such that
\begin{equation}
|u_\tau(a)-u_0(a)|\le C_1' e^{-\delta/\tau}+C_2'\gamma_c\log n.
\end{equation}
Thus $u_\tau(a)\to u_0(a)$ when $\tau\to0^+$ and $\gamma_c\to0^+$.
\end{corollary}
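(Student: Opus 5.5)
We follow the pattern of the proof sketch for Theorem~\ref{thm:finite_tau_bound}. The error splits into two parts: edge-orientation error, which propagates through Lipschitz compositions, and quantifier error from the normalized soft maxima. We introduce the hard-edge surrogate $D_0(a,b)=\I(P_{ab}>1/2)$ for $a\neq b$ and $D_0(a,a)=0$. The intermediate quantities are the ``hard-edge, soft-quantifier'' scores $\tilde v(c,a)$, $\widetilde{\mathrm{cover}}(c,a)$, $\tilde q(a)$ and $\tilde u(a)=1-\tilde q(a)$, obtained by substituting $D_0$ for $D_\tau$ in Eqs.~\eqref{eq:soft_cover_violation}--\eqref{eq:soft_uncovered_set_score}. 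The triangle inequality then gives
\[
|u_\tau(a)-u_0(a)|\le|u_\tau(a)-\tilde u(a)|+|\tilde u(a)-u_0(a)|,
\]
and we bound the two terms separately.

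\textbf{Edge term.} Under the strict margin assumption, $|P_{ab}-1/2|\ge\delta$ for every $a\neq b$. Hence $|D_\tau(a,b)-D_0(a,b)|=\sigma(-|P_{ab}-1/2|/\tau)\le e^{-\delta/\tau}$, and ties never arise.

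Each map in the pipeline is Lipschitz in the sup norm on $[0,1]$:
\begin{itemize}
\item The products $(x,y)\mapsto x(1-y)$ and $(x,w)\mapsto x(1-w)$ are $1$-Lipschitz in each coordinate, so they are $2$-Lipschitz in the sup norm.
\item The normalized $\smax_{\gamma_c}$ is $1$-Lipschitz in the sup norm. Its gradient is a softmax weight vector, which is nonnegative and sums to one, and this holds uniformly in $\gamma_c$.
\end{itemize}
Composing these maps gives
\begin{align*}
|v_\tau-\tilde v|&\le 2e^{-\delta/\tau},\\
|\mathrm{cover}_\tau-\widetilde{\mathrm{cover}}|&\le 4e^{-\delta/\tau},\\
|q_\tau-\tilde q|&\le 4e^{-\delta/\tau}.
\end{align*}
So $C_1'=4$ works. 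It is in fact independent of $n$, which is stronger than the statement requires.

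\textbf{Quantifier term.} For $z\in[0,1]^m$, the normalized soft maximum satisfies
\[
\max_i z_i-\gamma\log m\le\smax_\gamma(z)\le\max_i z_i,
\]
because the average of the exponentials lies between $e^{\max/\gamma}/m$ and $e^{\max/\gamma}$. Note that the normalized version under-estimates the maximum; it does not over-estimate it.

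With hard edges, each inner term $D_0(a,b)(1-D_0(c,b))$ is Boolean. The hard violation $v_0(c,a)$ equals $1$ exactly when a witness $b$ exists with $a\succ_T b$ but not $c\succ_T b$. Therefore $|\tilde v-v_0|\le\gamma_c\log n$.

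Since $D_0(c,a)$ is Boolean, $\widetilde{\mathrm{cover}}(c,a)$ differs from the hard covering indicator $\I(c\triangleright_T a)=D_0(c,a)(1-v_0(c,a))$ by at most $\gamma_c\log n$. The outer $\smax_{\gamma_c}$ is $1$-Lipschitz and adds at most another $\gamma_c\log n$. This shows
\[
|\tilde q(a)-\max_{c\neq a}\I(c\triangleright_T a)|\le 2\gamma_c\log n,
\]
and the right-hand maximum is exactly $1-u_0(a)$ by Definition~\ref{def:uncovered_set}. So $C_2'=2$ works.

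\textbf{Main obstacle.} The key step is justifying $v_0(c,a)=0$ exactly when condition (ii) of Definition~\ref{def:covering} holds. The index set in Eq.~\eqref{eq:soft_cover_violation} excludes $b\in\{a,c\}$, so we must check that these excluded indices never matter.
\begin{itemize}
\item For $b=a$: $D_0(a,a)=0$, so this term vanishes anyway.
\item For $b=c$: if $c\succ_T a$, then $a\not\succ_T c$ by complementarity, so (ii) is vacuous at $b=c$. If $c\not\succ_T a$, the factor $D_0(c,a)=0$ kills the cover score regardless of $v_0$.
\end{itemize}
We also need the empty-set convention (a maximum over an empty set is zero) for $n=2$, and a log bound stated as $\log(n-2)\le\log n$.

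Combining the two terms gives the claimed inequality with $C_1'=4$ and $C_2'=2$. Letting $\tau,\gamma_c\to0^+$ yields $u_\tau(a)\to u_0(a)$.
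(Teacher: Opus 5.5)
Your proof is correct and follows essentially the route the paper indicates in its sketch for Theorem~\ref{thm:finite_tau_bound}, which carries over to this corollary: an $e^{-\delta/\tau}$ sigmoid tail bound on each edge, propagated through compositions that are Lipschitz uniformly in $\gamma_c$, plus the $\gamma_c\log m$ bias of the normalized soft maximum. Your hard-edge/soft-quantifier intermediate makes this explicit and yields $C_1'=4$, $C_2'=2$; it also accounts for both layers of soft maxima and for the excluded indices $b\in\{a,c\}$, details the paper leaves implicit.
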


\subsection{Axiomatic Properties of Soft Solutions}

The hard Top Cycle and Uncovered Set satisfy classical axioms such as non-emptiness, Condorcet consistency, and the inclusion $\UC(T)\subseteq\TC(T)$. STE inherits these properties in the zero-temperature limit. For finite temperatures, the scores are continuous relaxations and should not be treated as exact set inclusions unless a thresholding and calibration rule has been specified.

\begin{proposition}[Hard-Limit Core Inclusion]
\label{prop:hard_limit_inclusion}
Under Assumptions~\ref{as:margin_strict} and~\ref{as:path_length}, if the STE temperatures tend to zero, then the induced hard-limit Uncovered Set is contained in the hard-limit Top Cycle:
\[
\{a:\lim u_\tau(a)=1\}\subseteq \{a:\lim t_\tau(a)=1\}.
\]
\end{proposition}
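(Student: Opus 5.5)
The plan is to reduce the statement to the classical inclusion $\UC(T)\subseteq\TC(T)$, using Theorem~\ref{thm:finite_tau_bound} and Corollary~\ref{cor:finite_tau_uc_bound} to identify the two hard-limit sets with $\UC(T)$ and $\TC(T)$.

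\textbf{Step 1: identify the limits.} Under Assumption~\ref{as:margin_strict}, the majority-rule object $T$ is a genuine tournament, since every off-diagonal pair has a strict margin $\delta>0$. Under Assumption~\ref{as:path_length} we have $K\ge n-1$. Theorem~\ref{thm:finite_tau_bound} then gives $t_\tau(a)\to t_0(a)=\I(a\in\TC(T))$ as $\tau,\gamma\to0^+$. Corollary~\ref{cor:finite_tau_uc_bound} gives $u_\tau(a)\to u_0(a)=\I(a\in\UC(T))$ as $\tau,\gamma_c\to0^+$.

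Both limits therefore exist and are $\{0,1\}$-valued. Consequently
\[
\{a:\lim u_\tau(a)=1\}=\UC(T), \qquad \{a:\lim t_\tau(a)=1\}=\TC(T).
\]
The limits do not depend on how the temperatures are jointly sent to zero, because each error bound is a sum of separate terms in $\tau$ and in $\gamma$ or $\gamma_c$.

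\textbf{Step 2: prove $\UC(T)\subseteq\TC(T)$ directly.} I will not just cite this classical fact. Take $a\notin\TC(T)$ and any $c\in\TC(T)$. Since $\TC(T)$ dominates its complement (Definition~\ref{def:top_cycle}), $c\succ_T a$.

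Now let $b$ be any agent with $a\succ_T b$. Then $b\notin\TC(T)$: otherwise $b$ would beat $a$, because $b\in\TC(T)$ and $a\notin\TC(T)$. Since $b\notin\TC(T)$ and $c\in\TC(T)$, we get $c\succ_T b$. Hence $c\triangleright_T a$, so $a\notin\UC(T)$. Taking the contrapositive yields $\UC(T)\subseteq\TC(T)$.

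\textbf{Step 3: combine.} Steps 1 and 2 together give the claimed inclusion of hard-limit sets.

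\textbf{Main obstacle.} The delicate point is Step 1, because the proposition is only meaningful if both limits exist and are exactly $0$ or $1$. This is where the strict margin is essential. If some $P_{ab}=1/2$, then $D_\tau(a,b)=1/2$ for all $\tau$, and the limiting scores could sit strictly between $0$ and $1$.

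I would therefore check that the edgewise convergence $D_\tau\to\I(P>1/2)$ off the diagonal passes through the finitely many continuous operations. These are the max-min products, the soft extrema, and the products $D(1-D)$ in the cover score. Each converges to its hard counterpart because every normalized soft extremum lies within $\gamma\log m$ of the true extremum. This is exactly the content of the cited bounds.

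Two further checks are needed for the conventions. First, the empty-witness convention in $v_\tau$ must match Definition~\ref{def:covering}, which holds vacuously when $a$ beats nobody. Second, the zero diagonal must not create spurious self-reachability. This is harmless because $t_\tau$ takes its minimum only over $b\neq a$.
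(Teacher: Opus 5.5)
Your proof is correct and follows the paper's route: identify the two hard-limit sets with $\UC(T)$ and $\TC(T)$, then apply the classical inclusion $\UC(T)\subseteq\TC(T)$. The paper instead cites Theorem~\ref{thm:consistency_full} for the identification and simply invokes the classical inclusion, whereas you use the additive bounds of Theorem~\ref{thm:finite_tau_bound} and Corollary~\ref{cor:finite_tau_uc_bound} (which also make the independence from how the temperatures jointly go to zero explicit) and correctly verify the inclusion through the dominance characterization of the Top Cycle.
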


\begin{proof}
By Theorem~\ref{thm:consistency_full}, the two hard-limit sets are exactly $\UC(T)$ and $\TC(T)$. The classical inclusion $\UC(T)\subseteq\TC(T)$ completes the proof.
\end{proof}

\begin{remark}[Finite-Temperature Monotonicity]
The classical Top Cycle and Uncovered Set satisfy monotonicity properties. We do not claim a general pointwise monotonicity theorem for the finite-temperature scores, because changing one agent's pairwise probabilities changes both outgoing and incoming soft edges through complementarity. The robust statement used here is continuity for fixed temperatures and exact classical monotonicity in the zero-temperature limit.
\end{remark}

\subsection{Preliminaries and Key Assumptions}

\begin{assumption}[Sufficient Path Length]
\label{as:path_length}
\label{as:path_length_full}
For theoretical convergence to classical tournament solutions, the soft reachability operator uses path length $K\ge n-1$.
\end{assumption}

\begin{lemma}[Reachability in Tournaments]
\label{lem:reachability_length}
In any tournament graph $T$ on $n$ vertices, if there exists a path from vertex $a$ to vertex $b$, then there exists a simple path from $a$ to $b$ of length at most $n-1$.
\end{lemma}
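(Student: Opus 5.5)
The natural approach is the standard shortest-path argument: pass from an arbitrary path to a shortest one, and show that a shortest path cannot revisit a vertex. Nothing specific to tournaments is needed. The statement holds in any finite directed graph, so the tournament structure of $T$ (completeness and antisymmetry of $\succ_T$) plays no role. I would state this explicitly so the lemma can also be used for the tie-aware majority graphs mentioned after Definition~\ref{def:majority_tournament}.

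\textbf{Step 1: choose a shortest path.} Suppose $a\rightsquigarrow_T b$ in the sense of Definition~\ref{def:reachability}. Then the set of paths $a=c_0,c_1,\dots,c_k=b$ with $c_i\succ_T c_{i+1}$ is non-empty. The lengths $k$ are positive integers, so by well-ordering we may fix a path of minimal length $k^\ast$.

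\textbf{Step 2: the shortest path is simple.} Suppose for contradiction that $c_i=c_j$ for some $0\le i<j\le k^\ast$. Remove the closed detour $c_{i+1},\dots,c_j$ to obtain $c_0,\dots,c_i,c_{j+1},\dots,c_{k^\ast}$. Every consecutive pair is still an edge of $T$:
\begin{itemize}
\item the pairs inside the two retained segments are edges of the original path;
\item the junction pair $(c_i,c_{j+1})$ equals $(c_j,c_{j+1})$, which is an edge;
\item if $j=k^\ast$, there is no junction, and the path simply ends at $c_i=c_{k^\ast}=b$.
\end{itemize}
The new walk is a path from $a$ to $b$ of length $k^\ast-(j-i)<k^\ast$, contradicting minimality. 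Hence $c_0,\dots,c_{k^\ast}$ are pairwise distinct.

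\textbf{Step 3: count vertices.} The path consists of $k^\ast+1$ distinct vertices drawn from $\mathcal{A}$, which has $n$ elements. Therefore $k^\ast+1\le n$, i.e.\ $k^\ast\le n-1$.

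The only real subtlety is the case $a=b$. Definition~\ref{def:reachability} allows a path of length $k\ge1$ returning to $a$, which is a cycle rather than a simple path. There are two ways to handle it:
\begin{itemize}
\item Read the lemma for $a\neq b$, which is the only case Eq.~\eqref{eq:soft_top_cycle_score} uses, since it ranges over $b\neq a$.
\item For $a=b$, apply the argument to the path minus its endpoint, yielding a simple cycle on at most $n$ distinct vertices and hence of length at most $n$.
\end{itemize}
I would adopt the first reading and remark on the second. I would also note the consequence used later: with $K\ge n-1$ (Assumption~\ref{as:path_length}), paths of length at most $K$ capture all reachability.
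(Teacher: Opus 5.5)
Your proof is correct, and it is the standard argument the paper relies on. The paper gives no separate proof of this lemma; in the appendix proof of Lemma~\ref{lem:soft_reachability_convergence} it simply invokes the fact that every existing path has a simple representative of length at most $n-1$, which is exactly your detour-deletion argument. Your point that only the case $a\neq b$ matters, since Eq.~\eqref{eq:soft_top_cycle_score} ranges over $b\neq a$, is a sensible clarification of the statement rather than a gap.
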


\begin{assumption}[Strict Margin Separation]
\label{as:margin_strict}
There exists a margin $\delta>0$ such that for all distinct agents $a,b\in\mathcal{A}$, $|P_{ab}-1/2|\ge\delta$.
\end{assumption}

\begin{assumption}[Consistent Estimation of Probabilities]
\label{as:consistent_estimation}
Let $\widehat P$ be the empirical tournament matrix estimated from a dataset of size $m$. We assume $\|\widehat P-P\|_\infty\to0$ in probability as $m\to\infty$.
\end{assumption}

\subsection{Consistency of Differentiable Operators}

\begin{lemma}[Soft Edge Convergence]
\label{lem:soft_edge_convergence}
Let $T$ be the deterministic majority-rule tournament induced by $P$, where $a\succ_T b$ iff $P_{ab}>1/2$. Under Assumption~\ref{as:margin_strict}, for every distinct pair $a\neq b$,
\[
\lim_{\tau\to0^+}D_\tau(a,b)=\I(a\succ_T b).
\]
The diagonal satisfies $D_\tau(a,a)=0$ for all $\tau$.
\end{lemma}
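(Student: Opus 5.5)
The plan is to argue directly from the definition of the soft majority edge in Eq.~\eqref{eq:soft_majority_edge}, splitting into the diagonal and off-diagonal cases. The diagonal case is immediate: by definition $D_\tau(a,a)=0$ for every $\tau>0$, so there is nothing to prove beyond reading off the definition. All of the content lies in the off-diagonal case $a\neq b$.

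For $a\neq b$, Assumption~\ref{as:margin_strict} gives $|P_{ab}-1/2|\ge\delta>0$, so exactly one of $P_{ab}-1/2\ge\delta$ or $P_{ab}-1/2\le-\delta$ holds. By Definition~\ref{def:majority_tournament}, the first case is precisely $a\succ_T b$, and the second is $a\not\succ_T b$. The strict margin is what rules out ties, so the indicator is well defined and $T$ is a genuine tournament.

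In the first case, $(P_{ab}-1/2)/\tau\ge\delta/\tau\to+\infty$ as $\tau\to0^+$. Since $\sigma$ is continuous and increasing with $\sigma(z)\to1$ as $z\to\infty$, we get $D_\tau(a,b)\to1$. In the second case, the argument tends to $-\infty$, so $D_\tau(a,b)\to0$.

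I would also record the quantitative form that the proof of Theorem~\ref{thm:finite_tau_bound} uses. Since $1-\sigma(z)=\sigma(-z)\le e^{-z}$ for $z\ge0$, we obtain
\[
|D_\tau(a,b)-\I(a\succ_T b)|\le e^{-\delta/\tau}
\]
uniformly over all distinct pairs.

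No step is a real obstacle here. The only point needing care is that the margin assumption is essential: without it, a tied pair would have $D_\tau(a,b)=1/2$ for every $\tau$, and the limit would not be a $\{0,1\}$ indicator.
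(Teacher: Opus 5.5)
Your proof is correct and follows the paper's argument essentially verbatim: the diagonal is fixed by definition, and off the diagonal the strict margin forces $(P_{ab}-1/2)/\tau\to\pm\infty$, so the sigmoid tends to the majority indicator. Your added bound $|D_\tau(a,b)-\I(a\succ_T b)|\le e^{-\delta/\tau}$ is also valid (via $\sigma(-z)=1/(1+e^{z})<e^{-z}$), and it makes explicit, with constant one, the edgewise estimate that the paper states only ``up to a constant'' in the sketch of Theorem~\ref{thm:finite_tau_bound}.
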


\begin{proof}
For $a\neq b$, this follows directly from $D_\tau(a,b)=\sigma((P_{ab}-1/2)/\tau)$ and the strict margin. If $P_{ab}>1/2$, the argument tends to $+\infty$; otherwise it tends to $-\infty$. The diagonal is fixed by definition.
\end{proof}

\begin{lemma}[Soft Reachability Convergence]
\label{lem:soft_reachability_convergence}
Let $\mathrm{path}_T(a,b)$ be the indicator that a directed path from $a$ to $b$ exists in $T$. Under Assumptions~\ref{as:margin_strict} and~\ref{as:path_length},
\[
\lim_{\tau\to0^+}R_\tau(a,b)=\mathrm{path}_T(a,b)
\]
for all distinct $a,b\in\mathcal{A}$.
\end{lemma}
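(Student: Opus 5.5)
We read the limit in this lemma as the joint limit in which the aggregation temperature is tied to the edge temperature, $\gamma=\gamma(\tau)\to0^+$ as $\tau\to0^+$, which is the regime of Theorem~\ref{thm:finite_tau_bound}. This reading is necessary. For a fixed $\gamma>0$, the normalized soft maximum of a binary vector with exactly one entry equal to $1$ is
\[
\gamma\log\bigl((e^{1/\gamma}+m-1)/m\bigr)<1,
\]
so $R_\tau(a,b)$ cannot tend to $\mathrm{path}_T(a,b)$ unless $\gamma$ also vanishes. The proof compares the soft recursion with its exact max-min counterpart, then identifies that counterpart with Boolean transitive closure.

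\emph{Step 1: two elementary estimates.} For $z\in[0,1]^m$ the normalized operators satisfy
\[
\max_i z_i-\gamma\log m\le \smax_\gamma(z)\le \max_i z_i,
\qquad
\min_i z_i\le \smin_\gamma(z)\le \min_i z_i+\gamma\log m,
\]
because the normalized sum $\frac1m\sum_i e^{z_i/\gamma}$ lies between $\frac1m e^{\max z/\gamma}$ and $e^{\max z/\gamma}$, and symmetrically for the minimum. Moreover, the hard $\max$ and $\min$ are $1$-Lipschitz in the sup-norm. Together these show that every soft operation used in Eqs.~\eqref{eq:soft_boolean_product}--\eqref{eq:soft_reachability} stays in $[0,1]$ and lies within $O(\gamma\log n)$, up to a $1$-Lipschitz perturbation, of its hard counterpart.

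\emph{Step 2: induction on path length.} Let $B$ be the Boolean adjacency matrix of $T$, let $B^{(k)}$ denote its $k$-th max-min power, and set $\epsilon_\tau=\max_{a\ne b}|D_\tau(a,b)-B_{ab}|$. The diagonals agree exactly, and $\epsilon_\tau\to0$ by Lemma~\ref{lem:soft_edge_convergence}. We prove by induction on $k$ that
\[
\|Q_\tau^{(k)}-B^{(k)}\|_\infty\le k\,\epsilon_\tau+2(k-1)\gamma\log n .
\]
For the inductive step, write $Q^{(k)}_\tau=Q^{(k-1)}_\tau\otimes_\gamma D_\tau$. Replacing the inner soft minimum and then the outer soft maximum by their hard versions costs at most $2\gamma\log n$ by Step 1. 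The remaining hard max-min product is $1$-Lipschitz in each argument, so it adds at most the sum of the errors of $Q^{(k-1)}_\tau$ and $D_\tau$. Taking the outer soft maximum over $k=1,\dots,K$ then gives
\[
\Bigl|R_\tau(a,b)-\max_{k\le K}B^{(k)}_{ab}\Bigr|\le K\epsilon_\tau+(2K-1)\gamma\log(\max\{n,K\}),
\]
which tends to $0$ in the joint limit. Bookkeeping the constants so that they depend only on $n$ and $K$ is the routine but main technical step.

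\emph{Step 3: the hard limit is reachability.} On binary matrices the max-min product is Boolean multiplication. An induction on $k$ therefore shows that $B^{(k)}_{ab}=1$ exactly when $T$ contains a directed walk of length exactly $k$ from $a$ to $b$, so $\max_{k\le K}B^{(k)}_{ab}$ indicates a walk of length at most $K$. By Lemma~\ref{lem:reachability_length}, any path from $a$ to $b$ can be shortened to a simple path of length at most $n-1\le K$ (Assumption~\ref{as:path_length}). Hence $\max_{k\le K}B^{(k)}_{ab}=\mathrm{path}_T(a,b)$, and combining this with Step 2 proves the lemma.

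The main obstacle is conceptual rather than computational: the statement must be read with $\gamma$ vanishing alongside $\tau$, as argued at the outset. Once that is fixed, everything reduces to Lipschitz propagation through finitely many layers.
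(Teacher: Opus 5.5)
Your proof is correct and follows the same route as the paper. Both arguments go through entrywise edge convergence, then an induction on $k$ identifying the limit of $Q_\tau^{(k)}$ with the length-$k$ walk indicator (max-min and Boolean products agree on binary matrices), then the maximum over $k\le K$ as a Boolean OR, and finally Lemma~\ref{lem:reachability_length}.

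Your explicit error bound sharpens the paper's qualitative continuity argument in two ways. It makes precise that $\gamma$ must vanish together with $\tau$; the lemma's statement suppresses this, although the paper's proof also sends $\gamma\to0^+$. It also supplies the uniformity needed to pass to the joint limit, so it essentially yields the reachability part of Theorem~\ref{thm:finite_tau_bound}.
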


\begin{proof}
By Lemma~\ref{lem:soft_edge_convergence}, $D_\tau$ converges entrywise to the hard adjacency matrix $A_T$. The smooth max-min product in Eq.~\eqref{eq:soft_boolean_product} is continuous and its zero-aggregation-temperature limit agrees with Boolean matrix multiplication on binary inputs. Therefore $Q_\tau^{(k)}$ converges to the hard indicator of a path of length exactly $k$. The soft maximum over path lengths in Eq.~\eqref{eq:soft_reachability} converges to the Boolean OR over lengths $1,\ldots,K$. With $K\ge n-1$, Lemma~\ref{lem:reachability_length} ensures that this OR captures all reachability.
\end{proof}

\begin{theorem}[Consistency of STE]
\label{thm:consistency_full}
Let $T$ be the hard majority tournament. Under Assumptions~\ref{as:margin_strict} and~\ref{as:path_length}, and with all aggregation temperatures tending to zero,
\begin{enumerate}[label=(\roman*)]
    \item $\displaystyle \lim t_\tau(a)=1$ if $a\in\TC(T)$ and $\displaystyle \lim t_\tau(a)=0$ otherwise.
    \item $\displaystyle \lim u_\tau(a)=1$ if $a\in\UC(T)$ and $\displaystyle \lim u_\tau(a)=0$ otherwise.
\end{enumerate}
\end{theorem}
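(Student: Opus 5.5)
The plan is a continuity argument: every STE score is a finite composition of continuous maps, and the zero-temperature limits of the soft extrema are the hard extrema. Concretely, I will pass the edge limit of Lemma~\ref{lem:soft_edge_convergence} through these compositions and then read off the classical definitions of $\TC(T)$ and $\UC(T)$.

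One subtlety must be handled up front. When $\tau$ and $\gamma$ vanish together, the inputs to the soft extrema move at the same time as their temperatures. So I will first record a uniform estimate: for $z\in[0,1]^m$,
\[
|\smax_\gamma(z)-\max_i z_i|\le\gamma\log m,\qquad |\smin_\gamma(z)-\min_i z_i|\le\gamma\log m .
\]
These hold because the normalized log-sum-exp lies between the hard extremum minus $\gamma\log m$ and the hard extremum. In addition, the hard $\max$ and $\min$ are $1$-Lipschitz in the sup norm.

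Combining the two facts, each soft operation differs from its hard counterpart evaluated at the limiting inputs by at most (input error) $+\gamma\log m$. Inducting over the finite depth of the computation (the $K$ products, the final soft max over lengths, and the outer softmin) then yields joint convergence. This is the step I expect to need the most care; the rest is bookkeeping.

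For (i), Lemma~\ref{lem:soft_reachability_convergence} already gives $R_\tau(a,b)\to\mathrm{path}_T(a,b)$, and the uniform estimate makes this hold under joint limits. Applying the softmin estimate with $m=n-1$ gives
\[
t_\tau(a)\to\min_{b\neq a}\mathrm{path}_T(a,b).
\]
This minimum equals $1$ exactly when $a$ reaches every $b\neq a$, which is $a\in\TC(T)$ by Definition~\ref{def:top_cycle}, and equals $0$ otherwise. If $n=1$ the set is trivial, and we adopt the convention that the score is $1$.

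For (ii), fix $c\neq a$. By Lemma~\ref{lem:soft_edge_convergence}, each term $D_\tau(a,b)(1-D_\tau(c,b))$ converges to $\I(a\succ_T b)\,\I(c\not\succ_T b)$. Products are continuous and all entries lie in $[0,1]$, so this convergence holds. Applying the soft-max estimate, $v_\tau(c,a)$ converges to $\max_{b\neq a,c}\I(a\succ_T b\wedge \neg(c\succ_T b))$. This limit is $1$ exactly when some witness violates condition (ii) of Definition~\ref{def:covering}; when the index set is empty, the convention gives $0$.

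It remains to justify excluding $b=c$ from the witness set. If $c\succ_T a$, then $a\not\succ_T c$ in a tournament, so $b=c$ can never serve as a witness. The case $b=a$ is vacuous because $D_\tau(a,a)=0$.

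Hence $\mathrm{cover}_\tau(c,a)\to\I(c\succ_T a)\bigl(1-\lim v_\tau(c,a)\bigr)=\I(c\triangleright_T a)$. The factor $D_\tau(c,a)$ supplies condition (i) of the covering relation, and whenever that indicator is $1$ the exclusion of $b=c$ is harmless. Aggregating with the soft-max estimate once more gives $q_\tau(a)\to\max_{c\neq a}\I(c\triangleright_T a)$. Therefore $u_\tau(a)\to 1-\I(\exists c:\ c\triangleright_T a)=\I(a\in\UC(T))$ by Definition~\ref{def:uncovered_set}.

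The strict margin (Assumption~\ref{as:margin_strict}) is used only to guarantee that $T$ is a genuine tournament and that the edge limits are Boolean. The path-length assumption (Assumption~\ref{as:path_length}) enters only through Lemma~\ref{lem:soft_reachability_convergence}.
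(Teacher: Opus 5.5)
Your proposal is correct and follows essentially the same route as the paper: pass the edge limit of Lemma~\ref{lem:soft_edge_convergence} through reachability and the soft quantifiers, then read off the definitions of $\TC(T)$ and $\UC(T)$ via the witness characterization of covering. Your additions do not change the argument but fill in details the paper's proof leaves implicit: the uniform bound $|\smax_\gamma(z)-\max_i z_i|\le\gamma\log m$, which justifies sending $\tau$ and the aggregation temperatures to zero jointly, and the check that dropping $b\in\{a,c\}$ from the witness set is harmless given the factor $D_\tau(c,a)$.
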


\begin{proof}
For the Top Cycle, Lemma~\ref{lem:soft_reachability_convergence} gives $R_\tau(a,b)\to1$ for every $b\neq a$ exactly when $a$ can reach every other agent. The normalized softmin converges to the hard minimum, so the limit of $t_\tau(a)$ is one exactly for Top-Cycle members and zero otherwise.

For the Uncovered Set, Eq.~\eqref{eq:soft_cover_violation} converges to one exactly when there exists a witness $b$ such that $a\succ_T b$ and $c\not\succ_T b$; otherwise it converges to zero. Hence $\mathrm{cover}_\tau(c,a)$ converges to the hard indicator that $c$ covers $a$. The normalized soft maximum over $c$ converges to the hard maximum, so $u_\tau(a)=1-q_\tau(a)$ converges to one exactly when no coverer exists.
\end{proof}

\subsection{Condorcet-Inclusion and Uniqueness}

\begin{theorem}[Condorcet-Inclusion and Uniqueness]
\label{thm:condorcet_full}
Suppose there exists a Condorcet winner $a^*$ in the majority-rule tournament, i.e., $P_{a^*b}>1/2$ for all $b\neq a^*$. Under Assumption~\ref{as:margin_strict}, for sufficiently small temperatures,
\begin{enumerate}[label=(\roman*)]
    \item $a^*$ has the unique highest Top-Cycle score: $t_\tau(a^*)>\max_{a\neq a^*}t_\tau(a)$.
    \item $a^*$ has the unique highest Uncovered-Set score: $u_\tau(a^*)>\max_{a\neq a^*}u_\tau(a)$.
\end{enumerate}
In the zero-temperature limit, $\TC(T)=\UC(T)=\{a^*\}$.
\end{theorem}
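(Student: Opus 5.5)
The plan is to reduce everything to the zero-temperature limit given by Theorem~\ref{thm:consistency_full}, and then pass from limits to strict inequalities at small positive temperature using continuity and finiteness of $\mathcal{A}$. The first step is purely combinatorial and concerns the hard tournament $T$, which is a genuine tournament under Assumption~\ref{as:margin_strict}. Since $a^*\succ_T b$ for every $b\neq a^*$, every $b$ is reached from $a^*$ by a path of length one, so $a^*\in\TC(T)$. Conversely, no $a\neq a^*$ can reach $a^*$: any path ending at $a^*$ would need an edge $c\succ_T a^*$, and no such edge exists. Hence $\TC(T)=\{a^*\}$. For the Uncovered Set, $a^*$ is uncovered because a coverer $c$ would need $c\succ_T a^*$. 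Every $a\neq a^*$ is covered by $a^*$: we have $a^*\succ_T a$, and for any $b$ with $a\succ_T b$ we have $b\neq a^*$, so $a^*\succ_T b$. Therefore $\UC(T)=\{a^*\}$, which establishes the final sentence of the statement.

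Next I would invoke Theorem~\ref{thm:consistency_full}, or quantitatively Theorem~\ref{thm:finite_tau_bound} and Corollary~\ref{cor:finite_tau_uc_bound}, with $K\ge n-1$ as in Assumption~\ref{as:path_length}. These give $t_\tau(a^*)\to1$ and $t_\tau(a)\to0$ for each $a\neq a^*$, and similarly $u_\tau(a^*)\to1$ and $u_\tau(a)\to0$ for $a\neq a^*$. Because $\mathcal{A}$ is finite, the maximum over $a\neq a^*$ of finitely many quantities each tending to $0$ also tends to $0$. So there is a temperature neighbourhood in which $t_\tau(a^*)>1/2>\max_{a\neq a^*}t_\tau(a)$, and likewise for $u_\tau$. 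This gives (i) and (ii).

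To make ``sufficiently small'' explicit, I would use the bounds $|t_\tau(a)-t_0(a)|\le C_1e^{-\delta/\tau}+C_2\gamma\log n$ and $|u_\tau(a)-u_0(a)|\le C_1'e^{-\delta/\tau}+C_2'\gamma_c\log n$. It then suffices to require both error terms to be below $1/2$; explicitly, $C_1e^{-\delta/\tau}+C_2\gamma\log n<1/2$, together with the analogous condition for the Uncovered Set.

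The main obstacle is the meaning of ``sufficiently small temperatures'' when there are several temperatures ($\tau,\gamma,\gamma_c$). The convergence in Theorem~\ref{thm:consistency_full} must be joint rather than iterated. I would handle this by relying on the uniform finite-temperature bounds above, which hold for all $(\tau,\gamma)$ simultaneously, rather than on sequential limits. One subtlety to check is that the normalized softmin does not itself tend to exactly $1$ at $a^*$ for fixed $\gamma$. This does not cause a problem, since the additive $\gamma\log n$ error is exactly what the bound controls.
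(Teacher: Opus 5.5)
Your proposal is correct and follows the paper's proof. Like the paper, you show $\TC(T)=\UC(T)=\{a^*\}$ combinatorially, invoke Theorem~\ref{thm:consistency_full}, and use the positive gap between the limiting scores on the finite agent set. Your appeal to the uniform bounds of Theorem~\ref{thm:finite_tau_bound} and Corollary~\ref{cor:finite_tau_uc_bound} just makes the paper's ``separation of the limiting scores'' step explicit for joint temperature limits.

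One small remark concerns path length. The statement assumes only Assumption~\ref{as:margin_strict}, while both you and the paper (implicitly, through Theorem~\ref{thm:consistency_full}) also use $K\ge n-1$. That hypothesis is not needed here. On one side, $R_\tau(a^*,b)\ge D_\tau(a^*,b)-\gamma\log K$. On the other, every $Q^{(k)}_\tau(a,a^*)$ is at most $\max_c D_\tau(c,a^*)+\gamma\log 2\le e^{-\delta/\tau}+\gamma\log 2$. So the separation holds for any $K\ge 1$.
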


\begin{proof}
A Condorcet winner directly reaches every other agent, while no other agent can reach it. Thus $a^*$ is the unique Top-Cycle member. It also cannot be covered, because covering it would require some $c\succ_T a^*$. Conversely, $a^*$ covers every other agent. The result follows from Theorem~\ref{thm:consistency_full} and separation of the limiting scores.
\end{proof}

\subsection{Stability Analysis}

A desirable property of any evaluation framework is stability: small changes in the input data should not lead to large changes in the output when the temperature is fixed.

\begin{proposition}[Continuity and Perturbation Stability]
\label{prop:continuity}
The STE membership score functions $t_\tau(a)$ and $u_\tau(a)$ are continuous functions of the input probabilistic tournament matrix $P$ for any fixed positive temperatures.
\end{proposition}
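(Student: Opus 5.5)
The plan is to show that, for fixed $\tau,\gamma,\gamma_c>0$ and fixed $K$, both score maps are finite compositions of continuous functions of the off-diagonal entries of $P$. Continuity then follows because compositions, sums, products and finite tuples of continuous maps are continuous. No strict-margin or path-length assumption is needed, since we never pass to a limit in the temperatures. We view $P$ as a point of $[0,1]^{n\times n}$ (or of the affine subset satisfying complementarity) with the usual topology. Because all norms on this finite-dimensional space are equivalent, entrywise continuity suffices.

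The steps, in order, are as follows.

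\textbf{(1) Edges.} For $a\neq b$, the map $P\mapsto D_\tau(a,b)=\sigma((P_{ab}-1/2)/\tau)$ is an affine map followed by the smooth sigmoid, so it is continuous (indeed $1/(4\tau)$-Lipschitz). The diagonal is the constant $0$.

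\textbf{(2) Soft extrema.} For fixed $m$ and $\gamma>0$, the normalized $\smin_\gamma$ and $\smax_\gamma$ in Eqs.~\eqref{eq:normalized_softmin}--\eqref{eq:normalized_smax} are compositions of $\exp$, a finite average, and $\log$. The average is strictly positive, so the logarithm is applied on its domain. Hence both operators are $C^\infty$ on $\mathbb{R}^m$. They are also $1$-Lipschitz in $\|\cdot\|_\infty$, because their gradients are softmax weights summing to one. The Lipschitz bound gives the quantitative stability for free.

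\textbf{(3) Reachability.} Each entry of $X\otimes_\gamma Y$ in Eq.~\eqref{eq:soft_boolean_product} is a $\smax_\gamma$ of $n$ values $\smin_\gamma(X_{ac},Y_{cb})$, so it is continuous in $(X,Y)$. By induction on $k$, each $Q^{(k)}_\tau$ from Eq.~\eqref{eq:soft_path_length_k} is continuous in $D_\tau$ and therefore in $P$. Then $R_\tau$ in Eq.~\eqref{eq:soft_reachability} is a $\smax_\gamma$ over the finitely many values $k=1,\dots,K$, hence continuous. Applying $\smin_\gamma$ over $b\neq a$ in Eq.~\eqref{eq:soft_top_cycle_score} yields continuity of $t_\tau(a)$.

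\textbf{(4) Uncovered set.} Each term $D_\tau(a,b)(1-D_\tau(c,b))$ is a polynomial in continuous functions. Thus $v_\tau(c,a)$ is continuous: it is a $\smax_{\gamma_c}$ over a fixed finite index set, or the constant $0$ when that set is empty. The set is empty only when $n=2$, and it depends on $n$ only, not on $P$, so no case switch occurs as $P$ varies. Next, $\mathrm{cover}_\tau(c,a)$ is a product of continuous functions, $q_\tau(a)$ is a $\smax_{\gamma_c}$ of these, and $u_\tau(a)=1-q_\tau(a)$. Each step preserves continuity.

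The only point requiring care, and the closest thing to an obstacle, is ensuring that no discontinuous operation enters. Two things must be checked. First, the index sets of every soft extremum are fixed independently of $P$. Second, the logarithms are always applied to strictly positive arguments, which holds because averages of exponentials are positive. Once these are confirmed, the proof is routine. For a stronger remark, I would chain the Lipschitz constants from steps (1)--(4) to obtain an explicit modulus. This gives $|t_\tau(a;P)-t_\tau(a;P')|\le L\,\|P-P'\|_\infty$ with $L$ of order $1/(4\tau)$ (possibly times a factor depending on $K$), and a similar bound for $u_\tau$ with $L$ of order $1/\tau$. I would present this only as a corollary, since the proposition asks for continuity alone.
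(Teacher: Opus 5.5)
Your proposal is correct and follows the same route as the paper's proof, which also treats $t_\tau(a)$ and $u_\tau(a)$ as finite compositions of continuous maps (the affine map $P_{ab}\mapsto(P_{ab}-1/2)/\tau$, the sigmoid, finite sums and products, and the log-sum-exp soft extrema); your version adds two checks the paper leaves implicit, namely that the index sets are fixed independently of $P$ and that the logarithms are applied to positive arguments. Your Lipschitz remark corresponds to the paper's Proposition~\ref{prop:lipschitz}, and in fact, because every soft extremum is $1$-Lipschitz in $\|\cdot\|_\infty$, your chaining gives $L=1/(4\tau)$ for $t_\tau$ with no dependence on $K$, $n$, or $\gamma$.
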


\begin{proof}
The STE operators are finite compositions of continuous functions: affine transformations of $P$, sigmoid functions, products, finite sums, logarithms of positive sums of exponentials, and finite products. Therefore the final membership scores are continuous in $P$.
\end{proof}

\subsection{Sample Complexity Analysis}

Our sample-complexity analysis concerns recovery of the hard tournament under a strict margin. Suppose each unordered pair is observed $m$ times independently, and let $\widehat P_{ab}$ be the empirical win rate. By Hoeffding's inequality \citep{Hoeffding1963},
\[
\Pp(|\widehat P_{ab}-P_{ab}|\ge \epsilon)\le2\exp(-2m\epsilon^2).
\]
Under Assumption~\ref{as:margin_strict}, preserving every edge orientation requires $|\widehat P_{ab}-P_{ab}|<\delta$ for every pair.

\begin{proposition}[Sample Complexity for Core Recovery]
\label{prop:sample_complexity_hard}
Under Assumption~\ref{as:margin_strict}, if each pair receives
\[
m\ge \frac{1}{2\delta^2}\log\left(\frac{2{n\choose 2}}{\delta_{\mathrm{total}}}\right)
\]
independent comparisons, then with probability at least $1-\delta_{\mathrm{total}}$, the empirical majority tournament equals the true majority tournament. Consequently, the zero-temperature limit of STE recovers the true Top Cycle and Uncovered Set.
\end{proposition}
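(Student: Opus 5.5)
The plan is a union bound over the ${n\choose 2}$ unordered pairs, combined with the observation that under Assumption~\ref{as:margin_strict} a per-pair estimation error strictly less than $\delta$ cannot flip an edge orientation. The second part then follows directly from Theorem~\ref{thm:consistency_full} applied to $\widehat P$.

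\textbf{Step 1: edge preservation.} Fix an unordered pair $\{a,b\}$ and regard $\widehat P_{ab}$ as the empirical frequency of $a$ beating $b$ over the $m$ comparisons. Set $\widehat P_{ba}=1-\widehat P_{ab}$, so the estimated matrix satisfies complementarity and it suffices to control one entry per pair. Suppose $P_{ab}\ge 1/2+\delta$ and $|\widehat P_{ab}-P_{ab}|<\delta$. Then $\widehat P_{ab}>1/2$, and the symmetric case $P_{ab}\le 1/2-\delta$ gives $\widehat P_{ab}<1/2$. Hence on this event the estimated majority orientation of the pair agrees with the true one, and in particular $\widehat P_{ab}\neq 1/2$, so no ties arise.

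\textbf{Step 2: failure probability.} Hoeffding's inequality, as displayed before the proposition, gives $\Pp(|\widehat P_{ab}-P_{ab}|\ge\delta)\le 2\exp(-2m\delta^2)$ for each pair. A union bound over the ${n\choose 2}$ unordered pairs gives total failure probability at most $2{n\choose 2}\exp(-2m\delta^2)$. The stated lower bound on $m$ is exactly what makes this quantity at most $\delta_{\mathrm{total}}$. On the complementary event every orientation is preserved, so the empirical majority tournament $\widehat T$ equals $T$.

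\textbf{Step 3: core recovery.} Since $\TC$ and $\UC$ are functions of the tournament alone, $\widehat T=T$ immediately gives $\TC(\widehat T)=\TC(T)$ and $\UC(\widehat T)=\UC(T)$. To say that the zero-temperature limit of STE computed from $\widehat P$ returns these sets, I would invoke Theorem~\ref{thm:consistency_full} with $\widehat P$ in place of $P$. This needs $\widehat P$ to satisfy a strict margin condition with some margin $\widehat\delta>0$, not necessarily $\delta$. Every off-diagonal $\widehat P_{ab}$ is a rational number different from $1/2$ on the good event, and there are finitely many pairs, so $\widehat\delta=\min_{a\neq b}|\widehat P_{ab}-1/2|>0$.

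\textbf{Main obstacle.} The main point needing care is Step 3. Theorem~\ref{thm:consistency_full} is stated for a fixed $P$ under Assumption~\ref{as:margin_strict}, so one must confirm that applying it to the random but fixed realization $\widehat P$, with its own positive margin $\widehat\delta$, is legitimate. One must also make sure the limit is taken with $m$ held fixed, so that the conclusion does not depend on a uniform margin $\delta$ for $\widehat P$. Assumption~\ref{as:path_length} must be assumed for the reachability part. The remaining steps are routine.
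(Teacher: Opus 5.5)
Your proposal is correct and follows the same route as the paper's proof: per-pair Hoeffding at deviation $\delta$, a union bound over the ${n\choose 2}$ unordered pairs giving failure probability at most $2{n\choose 2}e^{-2m\delta^2}\le\delta_{\mathrm{total}}$, and then Theorem~\ref{thm:consistency_full} for the zero-temperature conclusion. Your extra remarks in Step 3 fill in details the paper leaves implicit: on the good event $\widehat P$ has its own positive margin $\widehat\delta$, not necessarily $\delta$, and $K\ge n-1$ is needed for the Top-Cycle part.
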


\begin{proposition}[Sample Complexity for Soft Score Recovery]
\label{prop:sample_complexity_soft}
For fixed positive temperatures, suppose the STE score map is Lipschitz with constant $L(\tau,\gamma,K,n)$ in the $\ell_\infty$ norm. Then $m=O(L^2\log(n)/\epsilon^2)$ samples per pair suffice to ensure $|\widehat t_\tau(a)-t_\tau(a)|\le\epsilon$ for all $a$ with high probability; the same statement holds for $u_\tau$ with the corresponding Lipschitz constant.
\end{proposition}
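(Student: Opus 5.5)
The plan is to reduce the statement to a uniform concentration bound on $\widehat P$ and then push that bound through the Lipschitz assumption. Let $\widehat t_\tau(a)$ denote the score computed from the empirical matrix $\widehat P$, with $\widehat P_{ab}$ the empirical win rate over the $m$ comparisons of the unordered pair $\{a,b\}$. Set $\widehat P_{ba}=1-\widehat P_{ab}$ and $\widehat P_{aa}=1/2$. Then $\widehat P-P$ is antisymmetric off the diagonal and zero on it, so
\[
\|\widehat P-P\|_\infty=\max_{a<b}|\widehat P_{ab}-P_{ab}|,
\]
where $\|\cdot\|_\infty$ is the entrywise max norm.

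\textbf{Step 1 (concentration).} For each unordered pair, apply the Hoeffding bound displayed before Proposition~\ref{prop:sample_complexity_hard} at accuracy $\epsilon/L$. A union bound over the ${n\choose 2}$ pairs gives
\[
\Pp\big(\|\widehat P-P\|_\infty\ge \epsilon/L\big)\le 2{n\choose 2}\exp(-2m\epsilon^2/L^2).
\]
Requiring this to be at most a target failure probability $\delta_{\mathrm{total}}$ yields
\[
m\ge \frac{L^2}{2\epsilon^2}\log\Big(\frac{2{n\choose 2}}{\delta_{\mathrm{total}}}\Big).
\]
Since $\log(2{n\choose 2})\le 2\log n$, this is $O(L^2\log(n)/\epsilon^2)$ for any fixed constant $\delta_{\mathrm{total}}$, which is the meaning of ``with high probability.'' If one wants failure probability polynomially small in $n$, say $\delta_{\mathrm{total}}=n^{-c}$, the logarithm is still $O(\log n)$.

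\textbf{Step 2 (propagation).} On the complementary event, apply the hypothesis that the map $P\mapsto t_\tau$ is $L$-Lipschitz from the entrywise $\ell_\infty$ norm on matrices to the $\ell_\infty$ norm on score vectors. This gives
\[
\max_a|\widehat t_\tau(a)-t_\tau(a)|\le L\|\widehat P-P\|_\infty\le\epsilon
\]
simultaneously for all $a$. The argument for $u_\tau$ is identical with its own constant. To get both statements at once, split the failure probability between them or take the larger constant.

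\textbf{Main obstacle.} This is interpreting the Lipschitz hypothesis on the correct domain. The map must be Lipschitz on valid tournament matrices, meaning those satisfying complementarity, and $\widehat P$ must lie in that domain. The symmetric construction of $\widehat P$ in the first paragraph guarantees this.

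The proposition takes $L$ as given, but it is worth noting why such an $L$ exists, to confirm the hypothesis is not vacuous. The sigmoid edge map has slope at most $1/(4\tau)$. Each normalized soft extremum is $1$-Lipschitz in $\ell_\infty$, being a monotone, translation-equivariant log-sum-exp. Products of $[0,1]$-valued entries are $1$-Lipschitz in each factor. Composing through $K$ max-min products therefore gives a finite $L$, on the order of $K/\tau$. No further work is needed beyond the two steps above.
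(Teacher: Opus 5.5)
Your proof is correct and follows the paper's argument: Hoeffding's inequality plus a union bound over the $\binom{n}{2}$ pairs controls $\|\widehat P-P\|_\infty$, and the assumed Lipschitz bound transfers this to the scores; your version additionally makes the $\delta_{\mathrm{total}}$ dependence and the complementarity of $\widehat P$ explicit. One small aside on your closing remark: the max-min products and normalized soft extrema are all nonexpansive in $\ell_\infty$, so errors do not accumulate over the $K$ compositions, and $L=1/(4\tau)$ already suffices for $t_\tau$ (and $3/(4\tau)$ for $u_\tau$), rather than a constant of order $K/\tau$.
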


\begin{proof}[Proof sketch]
Hoeffding's inequality and a union bound give $\|\widehat P-P\|_\infty=O(\sqrt{\log n/m})$ with high probability. Lipschitz continuity of the fixed-temperature STE operator then gives score error at most $L\|\widehat P-P\|_\infty$.
\end{proof}

\section{Experiments and Results}
\label{sec:experiments}

We evaluate STE on a controlled planted-core benchmark and on two real-world diagnostic settings. The synthetic benchmark is the main quantitative experiment because it gives exact ground-truth Top-Cycle and Uncovered-Set labels. The real-world results demonstrate how the same set-valued evaluation pipeline applies to LLM preference data and agent-execution logs, where no known ground-truth core is available.

\subsection{Planted-Core Benchmark}
\label{sec:planted_core_benchmark}

The planted-core benchmark directly tests the central claim of the paper: when pairwise preferences are cyclic, the relevant evaluation object is a set-valued core rather than a forced linear ranking. For each run, we select a planted core $C\subset\mathcal{A}$ of size $s\in\{3,5,7\}$. Every core agent beats every outsider with positive margin, while agents inside $C$ form a strongly connected cyclic tournament. Consequently, the hard Top Cycle is exactly $C$; the hard Uncovered Set is computed from the induced majority tournament. Agent labels are randomized in every trial so that deterministic sorting or tie-breaking cannot favor low-index agents.

We sample $m\in\{1,2,5,10,20,50\}$ pairwise outcomes per observed pair, use missing-pair rates $\mu\in\{0,0.1,0.3,0.5\}$, and set label noise to $\eta=0.02$. We evaluate $n\in\{30,50,100\}$ agents and report averages over 40 seeds per setting. Unless stated otherwise, the primary metric is tie-randomized top-$|C|$ F1: the method selects a set of size $|C|$ from its scores, with randomized tie-breaking repeated over small jitters. We also report AUROC and AUPRC. Fixed-threshold F1 at $0.5$ is treated only as a calibration diagnostic, because STE scores are membership strengths rather than automatically calibrated probabilities. When uncertainty intervals are shown, they are 95\% confidence intervals computed over the trial rows or bootstrap seeds specified in the corresponding caption.

The main STE variant in this section is \textbf{STE-posterior-edge}. Given observed wins $w_{ab}$ and $w_{ba}$ for pair $(a,b)$, we estimate directed majority-edge evidence using a Beta posterior,
\begin{equation}
D_{ab}=\max\left\{0,\;2\Pr(\theta_{ab}>1/2\mid w_{ab},w_{ba})-1\right\},
\label{eq:posterior_edge_estimator}
\end{equation}
with a symmetric Jeffreys prior. Missing or ambiguous pairs therefore contribute little directed-edge evidence instead of creating artificial bidirectional reachability. The planted-core experiments use the max-min hard-limit reachability operator because this benchmark evaluates solution recovery rather than end-to-end neural training; this operator is the zero-temperature limit of the differentiable reachability operator analyzed in Section~\ref{sec:theory}. End-to-end differentiable training uses the smoothed operators from Section~\ref{sec:method}. We compare against BTL and empirical win-rate baselines converted to oracle-size top-$|C|$ sets. This conversion is favorable to ranking baselines because they are given the true core size even though they do not naturally output set-valued cores.

\subsection{Oracle Sanity Checks}
\label{sec:oracle_sanity}

Before introducing sampling noise, we feed the true probabilistic tournament into the STE operator. This is a necessary implementation check: in a transitive tournament, STE should collapse to the Condorcet winner, and in a planted cyclic tournament it should recover the planted core. Table~\ref{tab:v2_oracle} shows that the corrected operator passes this test exactly. Top-Cycle and Uncovered-Set F1 are both $1.000$ in all three oracle cases, and AUROC is also $1.000$.

\begin{table}[H]
\centering
\caption{\textbf{Oracle sanity checks.} STE recovers the hard Top Cycle and Uncovered Set exactly when given the true probabilistic tournament.}
\label{tab:v2_oracle}
\small
\resizebox{\linewidth}{!}{%
\begin{tabular}{lrrrrrrr}
\toprule
Case & $n$ & $|C|$ & TC F1 & UC F1 & TC AUROC & UC AUROC & TC gap \\
\midrule
Transitive singleton & 30 & 1 & 1.000 & 1.000 & 1.000 & 1.000 & 0.991 \\
Planted 3-core & 30 & 3 & 1.000 & 1.000 & 1.000 & 1.000 & 0.992 \\
Planted 5-core & 50 & 5 & 1.000 & 1.000 & 1.000 & 1.000 & 0.992 \\
\bottomrule
\end{tabular}}
\end{table}

\subsection{Finite-Sample Planted-Core Recovery}
\label{sec:finite_sample_recovery}

Table~\ref{tab:v2_recovery_by_m} reports finite-sample recovery in the moderate-evidence regime $m\ge5$. The data-starved cases $m=1$ and $m=2$ are retained in the archived run as stress tests, but they are not the intended operating regime for set-valued tournament recovery. Once each observed pair has moderate evidence, STE-posterior-edge substantially improves core recovery: its mean top-$|C|$ F1 is $0.805\pm0.007$ for $m\ge5$, compared with $0.578\pm0.007$ for BTL and $0.573\pm0.007$ for win-rate. Its mean AUPRC is $0.847\pm0.006$, compared with $0.674\pm0.006$ and $0.669\pm0.006$ for the same baselines, where intervals are 95\% confidence intervals over the trial grid.

Recovery improves as the number of comparisons per observed pair increases. STE-posterior-edge F1 rises from $0.588\pm0.014$ at $m=5$ to $0.979\pm0.004$ at $m=50$, and AUPRC rises from $0.672\pm0.014$ to $0.993\pm0.002$. Figure~\ref{fig:v2_recovery_n50} visualizes the same trend for $n=50$: increasing pairwise evidence sharpens the recovered core, while higher missingness predictably delays recovery.

\begin{table}[H]
\centering
\caption{\textbf{Finite-sample planted-core recovery for $m\ge5$.} Metrics are averaged over $n\in\{30,50,100\}$, core sizes $|C|\in\{3,5,7\}$, missingness $\mu\in\{0,0.1,0.3,0.5\}$, and 40 seeds per setting. Entries are mean $\pm$ 95\% confidence interval over trial rows. F1 is tie-randomized top-$|C|$ F1.}
\label{tab:v2_recovery_by_m}
\small
\resizebox{\linewidth}{!}{%
\begin{tabular}{rcccccc}
\toprule
$m$ & STE-post F1 & BTL F1 & Win F1 & STE-post AUPRC & BTL AUPRC & Win AUPRC \\
\midrule
5  & $0.588\pm0.014$ & $0.545\pm0.013$ & $0.541\pm0.013$ & $0.672\pm0.014$ & $0.644\pm0.011$ & $0.639\pm0.011$ \\
10 & $0.746\pm0.015$ & $0.572\pm0.013$ & $0.568\pm0.013$ & $0.794\pm0.014$ & $0.672\pm0.012$ & $0.667\pm0.012$ \\
20 & $0.908\pm0.011$ & $0.590\pm0.014$ & $0.584\pm0.014$ & $0.928\pm0.009$ & $0.686\pm0.012$ & $0.681\pm0.012$ \\
50 & $0.979\pm0.004$ & $0.606\pm0.015$ & $0.601\pm0.015$ & $0.993\pm0.002$ & $0.694\pm0.012$ & $0.687\pm0.012$ \\
\bottomrule
\end{tabular}}
\end{table}

\begin{figure}[H]
\centering
\includegraphics[width=0.72\linewidth]{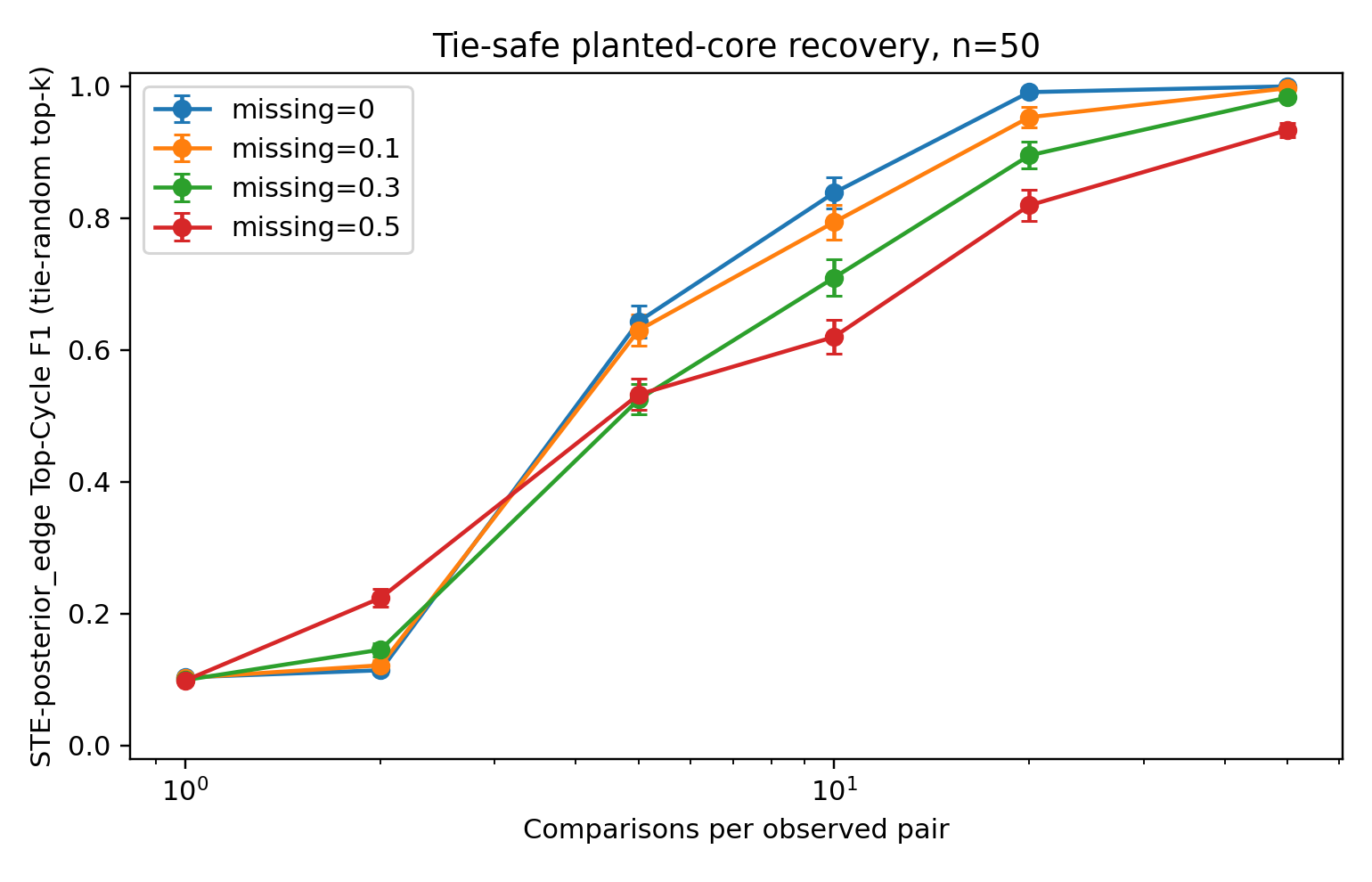}
\caption{\textbf{Planted-core recovery at $n=50$.} STE-posterior-edge Top-Cycle F1 improves as the number of comparisons per observed pair increases. Larger missing rates require more evidence but still converge toward strong recovery.}
\label{fig:v2_recovery_n50}
\end{figure}

\begin{figure}[H]
\centering
\includegraphics[width=0.62\linewidth]{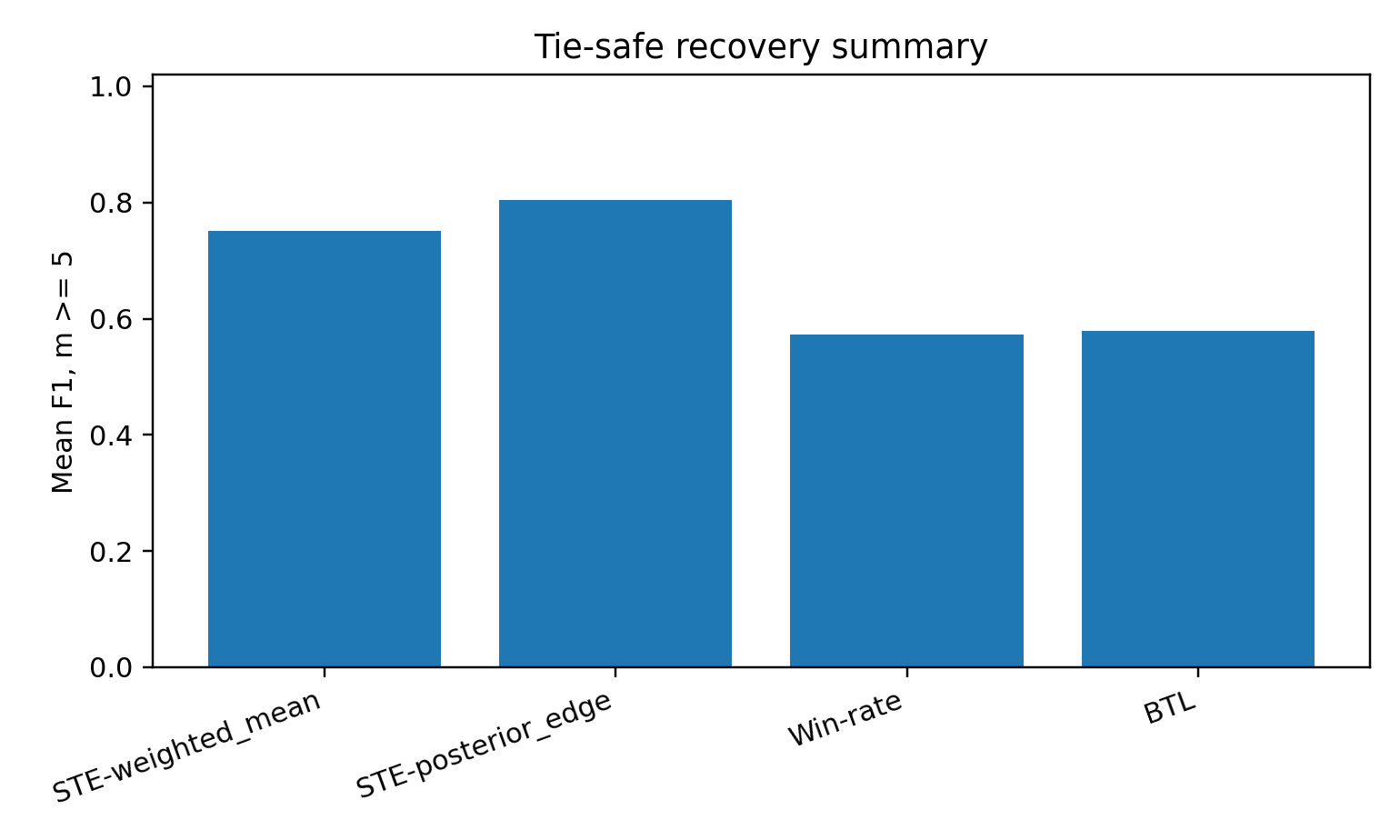}
\caption{\textbf{Mean tie-safe recovery in the moderate-evidence regime.} Averaged over all $m\ge5$ settings, STE-posterior-edge gives the strongest top-$|C|$ core recovery among the tested methods.}
\label{fig:v2_method_summary}
\end{figure}

\subsection{Robustness to Missingness and Bootstrap Resampling}
\label{sec:missing_bootstrap}

Table~\ref{tab:v2_missing} summarizes missing-comparison robustness for $m\ge5$. As expected, higher missingness reduces all methods' recovery. However, STE remains ahead across every missingness level tested: STE-posterior-edge F1 decreases from $0.879\pm0.012$ at $\mu=0$ to $0.705\pm0.015$ at $\mu=0.5$, while BTL decreases from $0.631\pm0.014$ to $0.516\pm0.013$ and win-rate from $0.630\pm0.014$ to $0.506\pm0.013$. This supports the modeling choice in Eq.~\eqref{eq:posterior_edge_estimator}: unknown or ambiguous pairs should not be treated as confident neutral edges.

\begin{table}[H]
\centering
\caption{\textbf{Robustness to missing comparisons for $m\ge5$.} Larger $\mu$ means a higher missing-pair rate. Entries are mean $\pm$ 95\% confidence interval over trial rows.}
\label{tab:v2_missing}
\small
\resizebox{\linewidth}{!}{%
\begin{tabular}{rcccccc}
\toprule
$\mu$ & STE-post F1 & BTL F1 & Win F1 & STE-post AUPRC & BTL AUPRC & Win AUPRC \\
\midrule
0.000 & $0.879\pm0.012$ & $0.631\pm0.014$ & $0.630\pm0.014$ & $0.906\pm0.010$ & $0.723\pm0.012$ & $0.723\pm0.012$ \\
0.100 & $0.856\pm0.012$ & $0.609\pm0.014$ & $0.606\pm0.014$ & $0.889\pm0.011$ & $0.703\pm0.012$ & $0.700\pm0.012$ \\
0.300 & $0.781\pm0.015$ & $0.557\pm0.013$ & $0.550\pm0.013$ & $0.825\pm0.013$ & $0.654\pm0.012$ & $0.648\pm0.012$ \\
0.500 & $0.705\pm0.015$ & $0.516\pm0.013$ & $0.506\pm0.013$ & $0.766\pm0.014$ & $0.615\pm0.012$ & $0.604\pm0.011$ \\
\bottomrule
\end{tabular}}
\end{table}

We also evaluate bootstrap recovery stability on planted cyclic cores with $n=40$, $|C|=5$, $m=20$, $\mu=0.1$, 400 bootstrap resamples, and 8 seeds. Table~\ref{tab:v2_bootstrap} shows that STE-posterior-edge has the highest bootstrap F1 to the true planted core: $0.803\pm0.102$, compared with $0.697\pm0.064$ for BTL and $0.700\pm0.071$ for win-rate. Pairwise Jaccard is similar across methods and should be interpreted carefully: a method can be stable while selecting the wrong set. F1 to the known core is therefore the primary bootstrap metric in this controlled setting.

\begin{table}[H]
\centering
\caption{\textbf{Bootstrap recovery stability on planted cyclic cores.} The setting is $n=40$, $|C|=5$, $m=20$, $\mu=0.1$, with 400 bootstraps and 8 seeds. Entries are mean $\pm$ 95\% confidence interval over seeds.}
\label{tab:v2_bootstrap}
\small
\resizebox{\linewidth}{!}{%
\begin{tabular}{lcccc}
\toprule
Method & Bootstrap F1 & Pairwise Jaccard & Bootstrap AUROC & Top-1 entropy \\
\midrule
STE-posterior-edge & $0.803\pm0.102$ & $0.602\pm0.168$ & $0.945\pm0.038$ & $1.515\pm0.476$ \\
STE-weighted-mean  & $0.728\pm0.035$ & $0.439\pm0.044$ & $0.940\pm0.017$ & $2.371\pm0.206$ \\
BTL                & $0.697\pm0.064$ & $0.598\pm0.064$ & $0.967\pm0.010$ & $1.432\pm0.724$ \\
Win-rate           & $0.700\pm0.071$ & $0.630\pm0.075$ & $0.968\pm0.011$ & $1.206\pm0.803$ \\
\bottomrule
\end{tabular}}
\end{table}

\begin{figure}[H]
\centering
\includegraphics[width=0.62\linewidth]{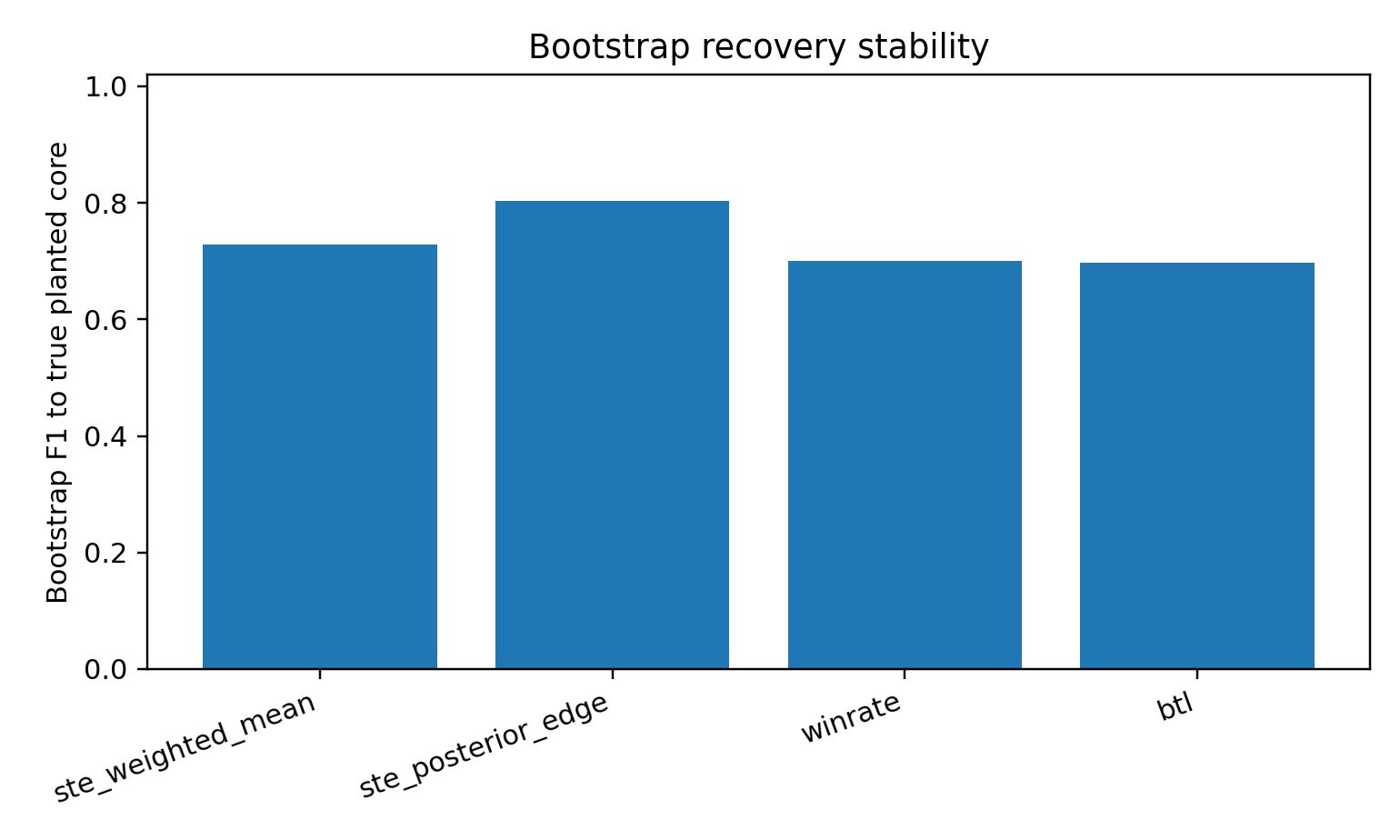}
\caption{\textbf{Bootstrap F1 to the planted core.} STE-posterior-edge gives the strongest recovery of the true cyclic core under bootstrap resampling in the tested setting.}
\label{fig:v2_bootstrap_f1}
\end{figure}

\subsection{Real-World Diagnostics}
\label{sec:real_diagnostics}

The planted-core benchmark provides controlled evidence because the true core is known. We also report two real-world diagnostics to illustrate how STE behaves on actual evaluation data. These results should be read as descriptive analyses of the current logged snapshots, not as final claims about the underlying model ecosystems.

\paragraph{Chatbot Arena.} Table~\ref{tab:chatbot_arena_top15} reports the models with the largest Top-Cycle membership scores under a global Chatbot Arena preference graph. The top entry is \texttt{gemini-2.5-pro}, with TC score $0.698$ and UC score $0.915$. Several other high-ranked models have TC scores below $0.5$ but nontrivial UC scores, illustrating why a set-valued view is informative: the inferred core structure is not identical to sorting by win rate, BTL score, or Elo. The BTL and Elo columns are included only as familiar ranking baselines and should not be interpreted as STE scores.

\begin{table}[H]
\centering
\caption{\textbf{Chatbot Arena (global).} Top models by STE Top-Cycle membership score. For each model we report Uncovered-Set membership score, empirical win rate, BTL score, and Elo score computed from the same pairwise outcomes for interpretability. The BTL and Elo columns are ranking baselines, not STE membership scores.}
\label{tab:chatbot_arena_top15}
\scriptsize

\begin{tabular}{rlrrrrr}
\toprule
Rank & Model & Score(TC) & Score(UC) & WinRate & BTL score & Elo \\
\midrule
1 & \texttt{gemini-2.5-pro} & 0.698 & 0.915 & 0.727 & 0.877 & 1753.749 \\
2 & \texttt{chatgpt-4o-latest-20250326} & 0.494 & 0.643 & 0.644 & 0.516 & 1678.951 \\
3 & \texttt{o3-2025-04-16} & 0.490 & 0.612 & 0.646 & 0.513 & 1604.549 \\
4 & \texttt{grok-3-preview-02-24} & 0.478 & 0.605 & 0.628 & 0.461 & 1541.837 \\
5 & \texttt{gemini-2.5-pro-preview-03-25} & 0.472 & 0.831 & 0.682 & 0.691 & 1624.634 \\
6 & \texttt{llama-4-maverick-03-26-experimental} & 0.463 & 0.409 & 0.619 & 0.420 & 1626.565 \\
7 & \texttt{gemini-2.5-flash-preview-04-17} & 0.460 & 0.495 & 0.564 & 0.179 & 1626.607 \\
8 & \texttt{gemini-2.0-flash-thinking-exp-01-21} & 0.445 & 0.458 & 0.561 & 0.091 & 1505.213 \\
9 & \texttt{o4-mini-2025-04-16} & 0.392 & 0.411 & 0.537 & 0.083 & 1531.531 \\
10 & \texttt{grok-3-mini-beta} & 0.377 & 0.529 & 0.501 & -0.053 & 1402.303 \\
\bottomrule
\end{tabular}

\end{table}

Table~\ref{tab:chatbot_arena_category_coresize} gives a category-conditioned view. In the logged snapshot, the coding category has three models above the TC threshold and all eight models above the UC threshold, while the ``other'' category has a singleton TC core under the same threshold. This difference illustrates how the core can vary across user intents or task strata.

\begin{table}[H]
\centering
\caption{\textbf{Chatbot Arena (by category).} For each category: number of models, number of comparisons, core size after thresholding TC and UC scores at $>0.5$, and the top model by TC score.}
\label{tab:chatbot_arena_category_coresize}
\small

\begin{tabular}{lrrrrlr}
\toprule
Category & \#Models & \#Comp. & $|TC|_{>0.5}$ & $|UC|_{>0.5}$ & Top by TC & Score(TC) \\
\midrule
coding & 8 & 575 & 3 & 8 & \texttt{gemini-2.5-pro} & 0.766 \\
other & 8 & 1565 & 1 & 8 & \texttt{gemini-2.5-pro} & 0.752 \\
\bottomrule
\end{tabular}

\end{table}

\paragraph{AgentBench.} Table~\ref{tab:agentbench_probs} reports per-environment STE scores for AgentBench run logs. In \texttt{dbbench-std}, \texttt{agent\_strong} is clearly separated from the other agents, with UC score $0.920$ and TC score $0.821$. In \texttt{os-std}, \texttt{agent\_strong} and \texttt{agent\_base} both receive high membership support, while \texttt{agent\_weak} is assigned a low TC score. The induced pairwise dataset is tie-heavy (tie rate approximately $0.637$), reflecting frequent co-failures or non-discriminative outcomes under the conservative tie-handling policy. Raw episode status counts are reported in Appendix~\ref{app:agentbench_diagnostics}.

\begin{table}[H]
\centering
\caption{\textbf{AgentBench per-environment STE membership scores.} Scores are computed from pairwise comparisons derived from run logs. They are diagnostic membership scores, not calibrated probabilities.}
\label{tab:agentbench_probs}
\small

\begin{tabular}{lrrr}
\toprule
Environment & Agent & UC score & TC score \\
\midrule
\texttt{dbbench-std} & \texttt{agent\_strong} & 0.920 & 0.821 \\
\texttt{dbbench-std} & \texttt{agent\_base} & 0.292 & 0.175 \\
\texttt{dbbench-std} & \texttt{agent\_mid} & 0.118 & 0.017 \\
\texttt{dbbench-std} & \texttt{agent\_weak} & 0.115 & 0.003 \\
\texttt{os-std} & \texttt{agent\_strong} & 0.814 & 0.522 \\
\texttt{os-std} & \texttt{agent\_base} & 0.762 & 0.468 \\
\texttt{os-std} & \texttt{agent\_mid} & 0.637 & 0.345 \\
\texttt{os-std} & \texttt{agent\_weak} & 0.167 & 0.036 \\
\bottomrule
\end{tabular}

\end{table}

\subsection{Empirical Takeaways}
\label{sec:empirical_takeaways}

The experiments support a focused claim. STE is not a universal replacement for ranking methods in every data regime: with one or two comparisons per observed pair, there is often too little evidence to identify a cyclic core reliably. However, in planted cyclic tournaments with moderate pairwise evidence, STE-posterior-edge exactly matches the hard solution in oracle settings and substantially improves finite-sample core recovery over BTL and win-rate baselines converted to oracle-size sets. The real-data diagnostics further show that STE produces interpretable set-valued summaries on LLM preference data and agent-execution logs. The appropriate empirical message is therefore that STE is a principled and effective method for recovering set-valued cores in cyclic pairwise-comparison domains, while calibration, thresholding, and large-scale real-world validation remain important follow-up tasks.

\section{Conclusion and Future Work}
\label{sec:conclusion}

In this paper, we have presented Soft Tournament Equilibrium (STE), a comprehensive, theoretically-grounded, and practical framework for the evaluation of general-purpose AI agents. Our work is motivated by a fundamental problem in modern agent evaluation: the prevalence of non-transitive interactions, which renders traditional ranking-based methods unstable and misleading. We have argued that in such domains, the normative object of evaluation should be a set-valued core, not a linear ranking. STE provides an end-to-end differentiable framework for learning continuous analogues of two important classical tournament solutions, the Top Cycle and the Uncovered Set, directly from noisy, contextual, pairwise comparison data.

Our contribution is threefold. First, on a conceptual level, we advocate for a paradigm shift in agent evaluation, moving away from the fragile pursuit of a single ``best'' agent towards the more robust identification of a tier of top-performing, undominated agents. Second, on a technical level, we have developed novel differentiable operators for soft reachability and soft covering, which are the core components that allow for the approximation of tournament solutions within a gradient-based learning framework. These operators are of independent interest and may find applications in other areas where graph-based reasoning is combined with deep learning. Third, on a theoretical level, we have provided a rigorous analysis of STE, proving its consistency with classical solutions, its adherence to fundamental social choice principles like Condorcet-inclusion, and its stability and sample complexity properties.

Empirically, the planted-core benchmark shows that STE recovers the exact tournament-theoretic core in oracle settings and, with moderate pairwise evidence, improves finite-sample core recovery over BTL and win-rate baselines converted to oracle-size sets. The Chatbot Arena and AgentBench diagnostics illustrate how the same machinery can be applied to real LLM preference data and agent-execution logs, where the result is an interpretable set of core candidates rather than a forced total order. These results support the core thesis of the paper while also clarifying the method's limits: extremely sparse pairwise evidence, score calibration, threshold selection, and larger real-world coverage remain important areas for future work.

The present study also has limitations. STE requires enough pairwise evidence to orient majority edges with moderate confidence; in the extremely sparse regimes tested here, core recovery remains difficult. The theoretical results assume strict pairwise margins, whereas real evaluation data may contain ties, abstentions, and epistemic uncertainty from missing comparisons. The strongest empirical evidence in this paper comes from the controlled planted-core benchmark; the Chatbot Arena and AgentBench analyses should be read as diagnostics on logged snapshots rather than as conclusive real-world validation. Finally, dense reachability and covering computations scale cubically in the number of agents, so larger deployments will require sparse, block, or active-comparison variants together with sensitivity checks.

Future work should therefore focus on finite-sample guarantees for posterior-edge estimators, principled calibration and threshold selection for membership scores, active data collection for expensive pairwise comparisons, and larger context-conditioned evaluations on public agent-comparison logs. Extensions to other tournament solutions, such as the Banks set or the Minimal Covering Set, are also natural but should be pursued only when their additional computational complexity is justified by the evaluation task.

In conclusion, Soft Tournament Equilibrium provides a new lens through which to view evaluation in complex, multi-agent systems. By embracing non-transitivity and using differentiable tournament operators, it offers a path toward more robust and theoretically grounded assessment of increasingly sophisticated AI agents.

\newpage
\appendix

\section{Extended Proofs and Technical Details}
\label{app:proofs}

This appendix gives expanded proofs for the main theoretical claims. The proofs use the revised bounded operators from Section~\ref{sec:method}.

\subsection{Proof of Lemma~\ref{lem:soft_edge_convergence}: Soft Edge Convergence}

For $a\neq b$, write $\Delta_{ab}=P_{ab}-1/2$. Under Assumption~\ref{as:margin_strict}, $|\Delta_{ab}|\ge\delta>0$. If $a\succ_T b$, then $\Delta_{ab}\ge\delta$, so $\Delta_{ab}/\tau\to+\infty$ and $D_\tau(a,b)=\sigma(\Delta_{ab}/\tau)\to1$. If $b\succ_T a$, then $\Delta_{ab}\le-\delta$, so $\Delta_{ab}/\tau\to-\infty$ and $D_\tau(a,b)\to0$. The diagonal is fixed at zero by Eq.~\eqref{eq:soft_majority_edge}.

\subsection{Proof of Lemma~\ref{lem:soft_reachability_convergence}: Soft Reachability Convergence}

Let $A_T$ be the hard adjacency matrix of $T$. Lemma~\ref{lem:soft_edge_convergence} implies $D_\tau\to A_T$ entrywise. The smooth max-min product in Eq.~\eqref{eq:soft_boolean_product} is continuous for fixed positive aggregation temperature, and as $\gamma\to0^+$ it converges to the max-min product in Eq.~\eqref{eq:maxmin_boolean_product}. On binary matrices, the max-min product agrees with Boolean matrix multiplication. By induction, $Q_\tau^{(k)}\to A_T^{[k]}$, where $A_T^{[k]}(a,b)$ is the indicator that a path of length exactly $k$ exists from $a$ to $b$. The final soft maximum over $k=1,\ldots,K$ converges to the Boolean OR over path lengths. If $K\ge n-1$, every existing path has a simple representative of length at most $n-1$, so the limit is exactly the reachability indicator.

\subsection{Extended Proof of Theorem~\ref{thm:consistency_full}: Consistency of STE}

For the Top Cycle, $t_\tau(a)$ is the normalized soft minimum of $\{R_\tau(a,b):b\neq a\}$. The normalized soft minimum converges to the ordinary minimum as its temperature tends to zero. By Lemma~\ref{lem:soft_reachability_convergence}, the limiting reachability values are all one exactly when $a$ reaches every other agent. Hence $t_\tau(a)\to1$ if $a\in\TC(T)$ and $t_\tau(a)\to0$ otherwise.

For the Uncovered Set, fix $c\neq a$. The violation term $D_\tau(a,b)(1-D_\tau(c,b))$ converges to one exactly for witnesses $b$ such that $a\succ_T b$ and $c\not\succ_T b$. Therefore $v_\tau(c,a)$ converges to one if such a witness exists and zero otherwise. Multiplication by $D_\tau(c,a)$ then gives
\[
\mathrm{cover}_\tau(c,a)\to\I(c\triangleright_T a).
\]
The normalized soft maximum over possible coverers converges to the hard maximum. Consequently $u_\tau(a)=1-q_\tau(a)$ converges to one exactly when no coverer exists, which is precisely the definition of $a\in\UC(T)$.

\subsection{Proof of Theorem~\ref{thm:condorcet_full}: Condorcet-Inclusion and Uniqueness}

If $a^*$ is a Condorcet winner, then $a^*$ has a direct edge to every other agent and no other agent has an edge into $a^*$. Thus $a^*$ reaches every other agent and no other agent can reach $a^*$. The hard Top Cycle is therefore $\{a^*\}$. Also, $a^*$ cannot be covered because covering would require some $c\succ_T a^*$. For every $a\neq a^*$, $a^*$ covers $a$: it beats $a$ directly and beats every agent that $a$ beats. Thus the hard Uncovered Set is also $\{a^*\}$. The finite-temperature score separation follows because the limiting scores are separated by a positive gap.

\subsection{Proof of Proposition~\ref{prop:continuity}: Continuity and Perturbation Stability}

For fixed positive temperatures, each component of STE is continuous in $P$: the affine map $P_{ab}\mapsto(P_{ab}-1/2)/\tau$, the sigmoid, finite products, finite sums, and the log-sum-exp forms of $\smin$ and $\smax$. Since finite compositions of continuous functions are continuous, $t_\tau(a)$ and $u_\tau(a)$ are continuous in $P$.

\subsection{Proof of Proposition~\ref{prop:sample_complexity_hard}: Sample Complexity for Core Recovery}

For a fixed pair $(a,b)$, Hoeffding's inequality gives
\[
\Pp(|\widehat P_{ab}-P_{ab}|\ge\delta)\le 2e^{-2m\delta^2}.
\]
By a union bound over ${n\choose2}$ unordered pairs, the probability that any edge orientation is estimated incorrectly is at most
\[
2{n\choose2}e^{-2m\delta^2}.
\]
Setting this quantity to be at most $\delta_{\mathrm{total}}$ yields
\[
m\ge \frac{1}{2\delta^2}\log\left(\frac{2{n\choose2}}{\delta_{\mathrm{total}}}\right).
\]
On this event, the empirical majority tournament equals the true majority tournament. The zero-temperature STE limit then recovers the true Top Cycle and Uncovered Set by Theorem~\ref{thm:consistency_full}.

\section{Additional Examples and Illustrations}
\label{app:examples}

This appendix provides concrete examples to illustrate the concepts and results presented in the main text.

\subsection{Example 1: A Simple 3-Cycle}

Consider three agents $\{A, B, C\}$ with the following pairwise win probabilities:
\[
P = \begin{pmatrix}
0.5 & 0.7 & 0.3 \\
0.3 & 0.5 & 0.7 \\
0.7 & 0.3 & 0.5
\end{pmatrix}.
\]
This represents a cyclic tournament: $A$ beats $B$ with probability 0.7, $B$ beats $C$ with probability 0.7, and $C$ beats $A$ with probability 0.7. This is a classic Condorcet cycle.

The majority-rule tournament $T$ is: $A \succ_T B$, $B \succ_T C$, $C \succ_T A$. This forms a directed 3-cycle.

\textbf{Top Cycle:} Every agent can reach every other agent in this cycle. For example, $A$ can reach $B$ directly, and $A$ can reach $C$ via the path $A \to B \to C$. Similarly, $B$ can reach $A$ via $B \to C \to A$, and $C$ can reach $B$ via $C \to A \to B$. Therefore, the Top Cycle is $\TC(T) = \{A, B, C\}$.

\textbf{Uncovered Set:} We check if any agent covers another. Does $A$ cover $B$? We need $A \succ_T B$ (yes) and for all $b$ such that $B \succ_T b$, we need $A \succ_T b$. The only agent that $B$ beats is $C$. Does $A$ beat $C$? No, $C \succ_T A$. So $A$ does not cover $B$. By symmetry, no agent covers any other agent in this cycle. Therefore, the Uncovered Set is $\UC(T) = \{A, B, C\}$.

In this example, both the Top Cycle and the Uncovered Set contain all three agents, reflecting the fact that there is no clear winner in this cyclic tournament.

\subsection{Example 2: A Tournament with a Condorcet Winner}

Consider four agents $\{A, B, C, D\}$ with the following pairwise win probabilities:
\[
P = \begin{pmatrix}
0.5 & 0.8 & 0.9 & 0.85 \\
0.2 & 0.5 & 0.6 & 0.55 \\
0.1 & 0.4 & 0.5 & 0.52 \\
0.15 & 0.45 & 0.48 & 0.5
\end{pmatrix}.
\]
Here, agent $A$ beats all other agents with probability greater than 0.5, so $A$ is a Condorcet winner.

The majority-rule tournament $T$ is: $A \succ_T B$, $A \succ_T C$, $A \succ_T D$, $B \succ_T C$, $B \succ_T D$, $C \succ_T D$. This is a fully transitive tournament with the ranking $A > B > C > D$.

\textbf{Top Cycle:} Since $A$ is a Condorcet winner, it can reach every other agent directly. No other agent can reach $A$ (since $A$ beats everyone). Therefore, the Top Cycle is $\TC(T) = \{A\}$.

\textbf{Uncovered Set:} Since $A$ is a Condorcet winner, it cannot be covered by anyone. Furthermore, $A$ covers every other agent. For example, does $A$ cover $B$? We need $A \succ_T B$ (yes) and for all $b$ such that $B \succ_T b$, we need $A \succ_T b$. The agents that $B$ beats are $C$ and $D$. Does $A$ beat $C$ and $D$? Yes. So $A$ covers $B$. Similarly, $A$ covers $C$ and $D$. Therefore, the Uncovered Set is $\UC(T) = \{A\}$.

In this example, both the Top Cycle and the Uncovered Set correctly identify the Condorcet winner as the unique top agent.

\subsection{Example 3: Top Cycle Strictly Larger than the Uncovered Set}

Consider four agents $\{A,B,C,D\}$ with the following majority-rule tournament:
\[
B\succ_T A,\quad C\succ_T A,\quad C\succ_T B,\quad D\succ_T B,\quad D\succ_T C,\quad A\succ_T D.
\]
This tournament is strongly connected: for example, $A$ reaches $C$ through $A\to D\to C$, $B$ reaches $D$ through $B\to A\to D$, $C$ reaches $D$ through $C\to A\to D$, and $D$ reaches $A$ through $D\to C\to A$. Hence every agent reaches every other agent, so
\[
\TC(T)=\{A,B,C,D\}.
\]

The Uncovered Set is smaller. Agent $C$ covers $B$: first, $C\succ_T B$; second, the only agent beaten by $B$ is $A$, and $C\succ_T A$. Thus $B$ is covered and is excluded from the Uncovered Set. The other three agents are not covered, so
\[
\UC(T)=\{A,C,D\}\subsetneq \TC(T).
\]
This example demonstrates that the Uncovered Set can be a strict refinement of the Top Cycle.

\section{Extended Discussion of Related Work}
\label{app:related_work_extended}

This appendix provides a more detailed discussion of the connections between STE and various strands of related work.

\subsection{Connections to Probabilistic Social Choice}

Our work is closely related to the field of probabilistic social choice, which studies voting and choice functions that output probability distributions over alternatives rather than deterministic selections. The key difference is that in probabilistic social choice, the randomness is in the output (the choice function is stochastic), whereas in STE, the randomness is in the input (the tournament is probabilistic), and the output is a set of membership scores.

Some work in probabilistic social choice has considered fuzzy or graded membership in choice sets, where each alternative has a degree of membership. Our soft membership scores can be viewed as a form of graded membership, but they are derived from a principled probabilistic model and have a clear interpretation in terms of the underlying tournament structure.

\subsection{Connections to Preference Learning}

Preference learning is a broad field that encompasses learning from pairwise comparisons, rankings, and other forms of preference data. STE contributes to this field by providing a method for learning set-valued solutions rather than rankings. This is particularly relevant for applications where the goal is to identify a set of top alternatives rather than a complete ordering.

Our probabilistic tournament model is a form of preference learning, and the STE operators can be seen as a way to aggregate these learned preferences into a meaningful summary. The differentiability of the entire pipeline allows for end-to-end learning, which is a key advantage over methods that separate the preference learning and aggregation stages.

\subsection{Connections to Multi-Agent Reinforcement Learning}

In multi-agent reinforcement learning (MARL), agents learn policies through interaction with an environment and with each other. Evaluating the relative performance of agents in MARL is challenging, especially when the environment is complex and the interactions are non-transitive (e.g., in games like rock-paper-scissors).

STE could be applied to MARL by treating the outcomes of agent interactions as pairwise comparisons. The context $x$ could encode the state of the environment or the specific task being performed. The resulting cores would identify the set of agents that are undominated in the given environment, providing a more robust evaluation than a simple ranking.

\subsection{Connections to Graph Neural Networks}

The soft reachability operator in STE can be viewed as a form of message passing on a graph, where the messages are the soft edge weights and the aggregation is done via matrix multiplication. This is conceptually similar to graph neural networks (GNNs), which also use message passing to compute node representations.

One could potentially replace the simple matrix power computation in STE with a more sophisticated GNN architecture, which might allow for more expressive representations of the tournament structure. This is an interesting direction for future work.

\section{Implementation Details and Practical Considerations}
\label{app:implementation}

This appendix provides detailed guidance on implementing the STE framework in practice.

\subsection{Choice of Pairwise Model Architecture}

The pairwise logit model $h_\theta(a,b,x)$ is a key component of the STE framework. The architecture should be chosen according to the available agent and context representations, but it should preserve antisymmetry: $h_\theta(a,b,x)=-h_\theta(b,a,x)$. This condition guarantees complementarity of the induced probabilities.

\textbf{For discrete agents and simple contexts:} A simple multi-layer perceptron (MLP) is often sufficient. The agents $a$ and $b$ can be represented by learned embeddings, and the context $x$ can be represented by a feature vector. A practical construction is to learn an unconstrained function $g_\theta$ and define
\[
    h_\theta(a,b,x)=g_\theta(a,b,x)-g_\theta(b,a,x).
\]
This produces an antisymmetric logit while allowing rich pair-specific interactions.

\textbf{For scalar-score baselines:} The contextual BTL model is recovered by setting $h_\theta(a,b,x)=s_\theta(a,x)-s_\theta(b,x)$. This version is simple and useful as a baseline, but it imposes a scalar contextual ordering and therefore cannot represent arbitrary cyclic pairwise structure within a fixed context.

\textbf{For agents with rich representations:} If the agents are LLMs or other complex models, their representations may be high-dimensional embeddings or metadata vectors. In this case, $g_\theta$ can use attention, cross-features, or other architectures that compare two agents directly.

\textbf{For text-based contexts:} If the context is text (e.g., a prompt for an LLM), it should be encoded using a language model or another text encoder. The pairwise model can then combine the two agent representations with the context embedding before applying the antisymmetric construction above.

\subsection{Temperature Annealing Schedule}

STE uses an edge temperature $\tau$ for the soft majority map and quantifier temperatures such as $\gamma$ and $\gamma_c$ for normalized soft minimum and maximum operations. A common practice is to anneal these temperatures from high values to low values over training. Early in training, larger temperatures make the operators smoother and less sensitive to small probability errors; later in training, smaller temperatures make the scores closer to the corresponding hard tournament solutions.

A typical schedule for any temperature parameter $z\in\{\tau,\gamma,\gamma_c\}$ is:
\[
    z_t = z_{\max} \cdot \left(\frac{z_{\min}}{z_{\max}}\right)^{t/T},
\]
where $t$ is the current training step, $T$ is the total number of training steps, $z_{\max}$ is the initial temperature (e.g., 1.0), and $z_{\min}$ is the final temperature (e.g., 0.01). In practice, the temperatures may also be held fixed during ablations to separate modeling effects from annealing effects.

\subsection{Handling Sparse Data}

In many real-world applications, the pairwise comparison data is sparse: not all pairs of agents have been compared. This poses a challenge for computing the marginal tournament matrix $P$, which requires estimates of $P_{ab}$ for all pairs.

One approach is model-based imputation. The pairwise logit model $h_\theta(a,b,x)$ can be trained on the observed comparisons and then used to predict win probabilities for unobserved pairs. This is a form of structured matrix completion. A simpler alternative is to assign unobserved pairs to $P_{ab}=0.5$, but this should be reported explicitly because it treats missing comparisons as neutral evidence rather than unknown evidence.

Another approach is graph-based imputation, such as label propagation or low-rank completion, to fill missing entries of $P$ from observed edges and auxiliary covariates. Whichever policy is used, the paper or experiment report should state the missing-data convention, because different conventions can change the inferred core.

\subsection{Computational Optimizations}

The main computational bottleneck in STE is the bounded max-min reachability computation in Eq.~\eqref{eq:soft_reachability}. A dense implementation still performs repeated graph-composition operations and has worst-case cubic scaling in the number of agents for each path length. For large $n$, this can be expensive.

Several optimizations are possible:
\begin{itemize}
    \item \textbf{Reduce $K$:} Using a smaller value of $K$ (e.g., $K=2$, $K=3$, or $K=4$) can significantly reduce the computational cost. This should be presented as an approximation unless $K\ge n-1$.
    \item \textbf{Pruned or sparse representations:} Edges with very small soft weights can be pruned for approximate computation. This is an algorithmic approximation, not a change in the mathematical definition, and the pruning threshold should be reported.
    \item \textbf{Batched Boolean-style kernels:} The max-min reachability composition can be implemented using batched tensor operations, which are well suited to GPUs.
    \item \textbf{Stochastic path sampling:} For very large tournaments, reachability can be approximated by sampling paths from the soft graph. This gives an unbiased or controlled approximation depending on the sampling policy and avoids materializing all dense intermediate matrices.
\end{itemize}

\subsection{Hyperparameter Tuning}

The key hyperparameters of STE are:
\begin{itemize}
    \item $\tau_{\max}$ and $\tau_{\min}$: The initial and final temperatures for annealing.
    \item $K$: The maximum path length for soft reachability.
    \item $\lambda_s$ and $\lambda_c$: The weights for the sharpness and calibration regularizers.
    \item Learning rate and other optimizer hyperparameters.
\end{itemize}

These hyperparameters should be tuned on a held-out validation set using standard techniques like grid search or Bayesian optimization. The choice of hyperparameters can have a significant impact on the performance of STE, so careful tuning is important.

\section{Reproducibility Details for Experiments}
\label{app:results}

This appendix records the reproducibility conventions for the experiments reported in Section~\ref{sec:experiments}. All planted-core results were produced from a fixed \texttt{solid\_planted\_core\_v2} run configuration. The run archive contains the experiment script, seed list, raw summary files, and figure-generation code needed to reproduce Tables~\ref{tab:v2_oracle}--\ref{tab:v2_bootstrap} and Figures~\ref{fig:v2_recovery_n50}--\ref{fig:v2_bootstrap_f1}. For double-blind review, the archive can be released as an anonymized supplementary repository; for a public preprint or camera-ready version, it should be replaced by a permanent repository or archival snapshot. The manuscript does not rely on local machine paths or manually edited result tables.

\paragraph{Planted-core configuration.} The main synthetic grid uses $n\in\{30,50,100\}$, planted core sizes $|C|\in\{3,5,7\}$, comparisons per observed pair $m\in\{1,2,5,10,20,50\}$, missing-pair rates $\mu\in\{0,0.1,0.3,0.5\}$, label noise $\eta=0.02$, and 40 seeds per setting. Bootstrap stability uses $n=40$, $|C|=5$, $m=20$, $\mu=0.1$, 400 bootstrap resamples, and 8 seeds. The STE operator uses max-min reachability, $\tau=0.01$, and $\gamma=0.01$.

\paragraph{Metrics.} The primary finite-sample recovery metric is tie-randomized top-$|C|$ F1. This metric converts each method's scores into a set of the true core size while randomizing ties, preventing deterministic index-order artifacts. We also report AUPRC and AUROC. Fixed $0.5$ thresholding is not used as the headline metric because STE scores are membership strengths, not automatically calibrated probabilities.

\paragraph{Baselines.} BTL and empirical win-rate baselines produce rankings or scalar scores, not cores. For controlled planted-core comparisons, they are converted into top-$|C|$ sets using the true core size. This is intentionally favorable to the baselines and makes the comparison conservative for STE.

\paragraph{Real-world data conventions.} Chatbot Arena diagnostics are computed from pairwise preference outcomes aggregated either globally or by category. AgentBench diagnostics are computed from environment run logs. For \texttt{os-std}, per-instance success is scored using AgentBench's boolean success field; for \texttt{dbbench-std}, where no explicit numeric reward is logged in the present run archive, task-completion status is used as a proxy for a valid solution trace. Ties and missing comparisons follow the same documented pipeline configuration used to generate the reported tables.

\paragraph{Code and data availability.} For review, the experiment code, run configuration, seed list, raw summaries, and figure/table generation scripts should be provided in an anonymized supplementary archive. For public release, these materials should be deposited in a permanent repository. This is especially important because changes to the missing-data convention, edge estimator, reachability mode, or thresholding rule can materially change core recovery.

\section{Additional Theoretical Results}
\label{app:additional_theory}

This appendix contains additional theoretical results that complement the main theorems.

\subsection{Lipschitz Continuity of STE Operators}

We establish that the STE operators are not just continuous but Lipschitz continuous, which provides quantitative bounds on their stability.

\begin{proposition}[Lipschitz Continuity]
\label{prop:lipschitz}
For any fixed positive temperatures $\tau,\gamma>0$ and path length $K$, the Top-Cycle membership score $t_\tau(a)$ is Lipschitz continuous in the probabilistic tournament matrix $P$. Specifically, there exists a constant $L = L(\tau,\gamma,K,n)$ such that for any two probabilistic tournaments $P$ and $P'$:
\[
|t_\tau(a; P) - t_\tau(a; P')| \le L \|P - P'\|_\infty.
\]
\end{proposition}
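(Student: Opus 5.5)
The plan is to write $t_\tau(\cdot;P)$ as a finite composition of maps, each Lipschitz in the $\ell_\infty$ norm, and multiply their constants. All norms below are entrywise maxima over matrices or vectors.

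\textbf{Step 1: the soft edge map.} For each off-diagonal pair, $P_{ab}\mapsto D_\tau(a,b)=\sigma((P_{ab}-1/2)/\tau)$ is Lipschitz with constant $1/(4\tau)$, since $\sigma'\le 1/4$. The diagonal is constant. Hence
\[
\|D_\tau(P)-D_\tau(P')\|_\infty\le \tfrac{1}{4\tau}\|P-P'\|_\infty .
\]

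\textbf{Step 2: the normalized soft extrema.} I would show that $\smax_\gamma$ and $\smin_\gamma$ are $1$-Lipschitz from $(\mathbb{R}^m,\|\cdot\|_\infty)$ to $\mathbb{R}$, with no dependence on $\gamma$ or $m$. The gradient of $z\mapsto \gamma\log(\frac1m\sum_i e^{z_i/\gamma})$ is the softmax vector. Its entries are nonnegative and sum to one, so its $\ell_1$ norm is $1$. The mean value theorem then gives $|\smax_\gamma(z)-\smax_\gamma(z')|\le\|z-z'\|_\infty$. The same argument applies to $\smin_\gamma(z)=-\smax_\gamma(-z)$. The normalization $1/m$ only shifts the function by a constant and does not affect the derivative.

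\textbf{Step 3: propagating through the products.} Composing with Step 2, the map $(x,y)\mapsto\smin_\gamma(x,y)$ satisfies
\[
|\smin_\gamma(x,y)-\smin_\gamma(x',y')|\le\max(|x-x'|,|y-y'|).
\]
It follows that
\[
\|X\otimes_\gamma Y-X'\otimes_\gamma Y'\|_\infty\le\max(\|X-X'\|_\infty,\|Y-Y'\|_\infty).
\]
By induction on $k$ in Eq.~\eqref{eq:soft_path_length_k}, $\|Q^{(k)}_\tau(P)-Q^{(k)}_\tau(P')\|_\infty\le\|D_\tau(P)-D_\tau(P')\|_\infty$ for every $k$. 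Applying Step 2 twice more, once for the soft maximum over $k$ in Eq.~\eqref{eq:soft_reachability} and once for the softmin over $b$ in Eq.~\eqref{eq:soft_top_cycle_score}, gives
\[
|t_\tau(a;P)-t_\tau(a;P')|\le \tfrac{1}{4\tau}\|P-P'\|_\infty .
\]
So one can take $L=1/(4\tau)$, which is in particular a valid $L(\tau,\gamma,K,n)$.

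\textbf{Main obstacle.} The only delicate point is Step 2: I must use the $\ell_\infty\to\mathbb{R}$ Lipschitz bound via the $\ell_1$ norm of the softmax gradient. Counting coordinates naively would give constants growing with $m$, and hence with $n$ and $K$ through repeated composition. If the max-min product were replaced by an ordinary product, a similar bound would still hold, but with a factor $2$ per composition and a resulting $K$-dependence, which the statement also allows.
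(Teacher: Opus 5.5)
Your proof is correct and has the same skeleton as the paper's sketch: the sigmoid edge map contributes $1/(4\tau)$, and everything downstream is a finite composition of Lipschitz maps. The difference lies in how you control the downstream constants.

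The paper only asserts that the smooth max-min products and soft extrema are Lipschitz on the compact domain, with constants depending on $n$ and $K$, and never computes them. Your key lemma is that $\smax_\gamma$ and $\smin_\gamma$ are non-expansive from $\ell_\infty$ to $\mathbb{R}$, because the softmax gradient has unit $\ell_1$ norm. This makes $\otimes_\gamma$, the soft maximum over path lengths, and the final softmin all non-expansive. The induction then collapses the constant to $L=1/(4\tau)$, independent of $\gamma$, $K$, and $n$.

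Your statement is therefore strictly stronger. Plugged into Proposition~\ref{prop:sample_complexity_soft}, it gives $m=O(\log(n)/(\tau^2\epsilon^2))$ with no hidden dependence on path length or aggregation temperature.

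What the paper's generic argument buys in exchange is insensitivity to the particular operators. It would equally cover variants such as ordinary products or probabilistic-OR aggregation. In those variants, as you note, the per-step bounds are no longer non-expansive and the resulting constant does depend on $K$ (or $n$).
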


\begin{proof}[Proof Sketch]
The proof follows by bounding the Lipschitz constants of each operation in the computation of $t_\tau$. The map $P_{ab}\mapsto \sigma((P_{ab}-1/2)/\tau)$ has Lipschitz constant at most $1/(4\tau)$. The smooth max-min products and soft extrema are finite compositions of Lipschitz maps over entries in $[0,1]$, hence are Lipschitz on the compact domain, with constants depending on $n$ and $K$. The normalized softmin is Lipschitz in its arguments. Composing these bounds gives an overall constant $L(\tau,\gamma,K,n)$.
\end{proof}

This result provides a quantitative version of the stability result in Proposition~\ref{prop:continuity}. It tells us that if the estimated tournament $\widehat{P}$ is close to the true tournament $P$ (in the $\ell_\infty$ norm), then the STE scores will also be close.

\subsection{Convergence Rate of Temperature Annealing}

We analyze the rate at which the soft operators converge to their hard counterparts as the temperature is annealed.

\begin{proposition}[Convergence Rate]
\label{prop:convergence_rate}
Under Assumption \ref{as:margin_strict}, the soft Top-Cycle score converges to its zero-temperature limit at an exponential rate. Specifically, for any agent $a$:
\[
|t_\tau(a)-t_0(a)|=O(e^{-\delta/\tau}+\gamma\log n),
\]
where $t_0(a) = \lim_{\tau \to 0^+} t_\tau(a)$ and $\delta$ is the margin from Assumption \ref{as:margin_strict}.
\end{proposition}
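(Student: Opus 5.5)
The plan is to split the error by the triangle inequality into an \emph{edge} part and a \emph{quantifier} part. Both parts are controlled by one structural fact: every aggregation operator used to build $t_\tau$ is nonexpansive in the $\ell_\infty$ norm. Let $A_T$ be the hard adjacency matrix (zero diagonal). Write $\Phi_\gamma(D)$ for the full pipeline of Eqs.~\eqref{eq:soft_boolean_product}--\eqref{eq:soft_top_cycle_score} applied to an input matrix $D$, and $\Phi_0(D)$ for the same pipeline with every $\smin_\gamma,\smax_\gamma$ replaced by the hard $\min,\max$. By Theorem~\ref{thm:consistency_full}, together with Lemmas~\ref{lem:soft_reachability_convergence} and~\ref{lem:reachability_length}, $\Phi_0(A_T)(a)=\I(a\in\TC(T))=t_0(a)$, using $K\ge n-1$. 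Then
\[
|t_\tau(a)-t_0(a)|\le |\Phi_\gamma(D_\tau)(a)-\Phi_\gamma(A_T)(a)|+|\Phi_\gamma(A_T)(a)-\Phi_0(A_T)(a)|,
\]
and I will bound the two terms separately.

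\textbf{Edge term.} First, $\smin_\gamma$ and $\smax_\gamma$ are $1$-Lipschitz from $\ell_\infty$ to $\mathbb{R}$: their gradients are softmax weight vectors, which are nonnegative and sum to one. Hence the smooth product $(X,Y)\mapsto X\otimes_\gamma Y$ satisfies
\[
\|X\otimes_\gamma Y-X'\otimes_\gamma Y'\|_\infty\le\max\{\|X-X'\|_\infty,\|Y-Y'\|_\infty\}.
\]
By induction on $k$, $\|Q^{(k)}_\tau-Q^{(k)}_{A_T}\|_\infty\le\|D_\tau-A_T\|_\infty$, and the final $\smax_\gamma$ over $k$ and $\smin_\gamma$ over $b$ preserve this bound. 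For the edges, the strict margin and $\sigma(-x)\le e^{-x}$ give $|D_\tau(a,b)-A_T(a,b)|\le e^{-\delta/\tau}$ off the diagonal, and the diagonal error is $0$. So the first term is at most $e^{-\delta/\tau}$, i.e.\ one can take $C_1=1$. This is consistent with Theorem~\ref{thm:finite_tau_bound} and sharper than the sketch there.

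\textbf{Quantifier term.} I will use the elementary sandwich bounds for the normalized operators on $m$ arguments:
\[
\max_i z_i-\gamma\log m\le\smax_\gamma(z)\le\max_i z_i,\qquad \min_i z_i\le\smin_\gamma(z)\le\min_i z_i+\gamma\log m.
\]
Both follow by bounding $\frac1m\sum_i e^{z_i/\gamma}$ between $\frac1m e^{\max_i z_i/\gamma}$ and $e^{\max_i z_i/\gamma}$, and similarly for the minimum. Let $e_k=\|Q^{(k)}_{\gamma}-Q^{(k)}_{0}\|_\infty$ on input $A_T$, where the subscript denotes soft versus hard operators. One soft product differs from the hard product applied to the same inputs by at most $\gamma(\log 2+\log n)$. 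The hard max-min product is itself nonexpansive. Together these give the recursion $e_k\le e_{k-1}+\gamma(\log 2+\log n)$ with $e_1=0$, hence $e_K\le (K-1)\gamma\log(2n)$. The outer $\smax_\gamma$ over $K$ lengths adds $\gamma\log K$, and the final $\smin_\gamma$ over $n-1$ opponents adds $\gamma\log(n-1)$. Each earlier error passes through these outer operators unamplified, by nonexpansiveness. The total is $\gamma\bigl[(K-1)\log(2n)+\log K+\log(n-1)\bigr]\le C_2\gamma\log n$, with $C_2$ depending only on $K$ and $n$.

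Adding the two terms gives the stated $O(e^{-\delta/\tau}+\gamma\log n)$, with constants independent of $\tau$ and $\gamma$. The step I expect to be most delicate is the bookkeeping in the quantifier recursion. The comparison has to be made at each layer against the hard operator applied to the \emph{soft} intermediate outputs, and only then can nonexpansiveness of the hard layer be invoked. If this is done carelessly, the errors appear to compound multiplicatively rather than additively. A smaller point is that the hidden constant grows like $K\log n$. With $K=n-1$ this is honest but not uniform in $n$, which the $O(\cdot)$ notation, with constants allowed to depend on $n$ and $K$, permits.
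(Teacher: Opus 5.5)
Your proposal is correct and follows the same route as the paper's sketch: you split the error into an edge part, which decays like $e^{-\delta/\tau}$ and is pushed through the closure by Lipschitz composition, and a quantifier part that comes from the normalized soft extrema. Your version is sharper and more complete, because $\ell_\infty$-nonexpansiveness of $\smin_\gamma,\smax_\gamma$ gives $C_1=1$, and your layer-by-layer recursion accounts for the $(K-1)\gamma\log(2n)+\gamma\log K$ error accumulated inside the smoothed products and the maximum over path lengths. The paper leaves that accumulated error implicit; its companion sketch for Theorem~\ref{thm:finite_tau_bound} itemizes only the $\gamma\log(n-1)$ contributed by the final softmin.
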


\begin{proof}[Proof Sketch]
The convergence rate is dominated by the slowest-converging component, which is the soft edge $D_\tau(a, b)$. For a pair with $P_{ab} = 1/2 + \delta$, we have $D_\tau(a, b) = \sigma(\delta/\tau)$. The sigmoid function satisfies $|\sigma(z) - 1| = O(e^{-z})$ for large $z$. Therefore, $|D_\tau(a, b) - 1| = O(e^{-\delta/\tau})$. The edge convergence propagates through the bounded max-min closure, while the normalized soft extrema add the $O(\gamma\log n)$ quantifier-approximation term.
\end{proof}

This result suggests that the edge temperature can be annealed relatively quickly when the margin $\delta$ is not too small, but the aggregation temperature must also be small enough for sharp set membership.

\subsection{Extension to Weighted Tournaments}

The STE framework can be extended to handle weighted tournaments, where each edge has a weight representing the strength or confidence of the preference.

\begin{definition}[Weighted Probabilistic Tournament]
A weighted probabilistic tournament is a pair $(P, W)$, where $P \in [0, 1]^{n \times n}$ is a probabilistic tournament and $W \in [0, \infty)^{n \times n}$ is a weight matrix with $W_{ab} = W_{ba}$ representing the confidence or importance of the comparison between $a$ and $b$.
\end{definition}

The soft majority edge can be modified to incorporate weights:
\[
D_\tau(a, b; W) = \sigma\left(\frac{W_{ab} (P_{ab} - 1/2)}{\tau}\right).
\]
The rest of the STE framework remains unchanged. This extension allows STE to handle scenarios where some comparisons are more reliable or more important than others.

\subsection{Connection to Markov Chains}

The soft adjacency matrix $D_\tau$ can be row-normalized to define a Markov chain on the agents. This construction is useful as an auxiliary diagnostic: the resulting transition matrix describes a random walk that preferentially follows high-confidence majority edges. The STE reachability matrix $R_\tau$ is not itself defined as the expected visit-count matrix of this chain, because the main operator uses bounded max-min path aggregation rather than row-normalized linear propagation. Nevertheless, random-walk summaries can be useful for scalable approximations and for comparing STE with spectral baselines such as Rank Centrality.

\section{Extended Bibliography and Citation Analysis}
\label{app:bibliography}

This appendix provides additional context and discussion of the references cited in the main text, organized by topic.

\subsection{Tournament Solutions: Historical Development}

The study of tournament solutions has a rich history dating back to the 1950s. The Top Cycle, also known as the Smith set, is associated with early work by \citet{Good1971}, \citet{Smith1973}, \citet{Miller1980}, and \citet{Schwartz1990}. The Uncovered Set was introduced by \citet{Fishburn1977} and further studied by \citet{Miller1980}. These solutions were motivated by the problem of selecting winners in voting systems where cyclic preferences are common.

The axiomatic characterization of tournament solutions has been a major focus of research. \citet{Laffond1996} established important composition consistency results for the Top Cycle. \citet{Brandt2016} provides a comprehensive modern treatment of tournament solutions in the context of computational social choice.

More recent work has explored the computational complexity of computing tournament solutions \citep{BrandtFischer2008,Brandt2011} and their properties in large tournaments \citep{Fey2008,Brandt2020}. Our work builds on this foundation by developing differentiable approximations that enable the use of tournament solutions in modern machine learning pipelines.

\subsection{Rank Aggregation: From Kemeny to Differentiable Methods}

Rank aggregation has been studied extensively in social choice theory and machine learning. The Kemeny-Young rule \citep{Kemeny1959,YoungLevenglick1978} is a classic method that minimizes the Kendall-tau distance to the input rankings. However, computing the Kemeny-Young ranking is NP-hard \citep{Bartholdi1989}, which has motivated the development of approximation algorithms and heuristics.

Recent work has focused on making rank aggregation differentiable. \citet{Blondel2020} introduced differentiable sorting operators based on optimal transport. \citet{Lanctot2025} developed Soft Condorcet Optimization (SCO), which is the most closely related work to ours. The key difference is that SCO produces a ranking, while STE produces a set-valued core.

\subsection{Pairwise Models: From BTL to Neural Extensions}

The Bradley-Terry-Luce (BTL) model \citep{BradleyTerry1952,Luce1959} is a cornerstone of pairwise comparison analysis. It has been extended in many directions, including dynamic versions like Elo \citep{Elo1978} and TrueSkill \citep{Herbrich2006}, and contextual versions that condition on features \citep{Rajkumar2014}.

Our probabilistic tournament model is a flexible context-conditioned pairwise model, with BTL as a restricted scalar-score special case. The key innovation is not the likelihood itself but the way we analyze its output using tournament solutions rather than converting it to a ranking.

\subsection{Differentiable Combinatorics: Enabling Gradient-Based Optimization}

The field of differentiable combinatorics has emerged as a way to incorporate discrete structures into deep learning. Key techniques include the Gumbel-Softmax trick \citep{Jang2017,Maddison2017}, Sinkhorn iteration for differentiable optimal transport \citep{Cuturi2013}, and perturbed optimizers \citep{Berthet2020}.

Our use of the log-sum-exp (LSE) function to create soft versions of min and max operations is a standard technique in this field. The novelty of our work lies in applying these techniques to approximate tournament solutions, which involve complex graph-theoretic operations like reachability and covering.

\subsection{LLM Evaluation: The Motivation for STE}

The rapid development of large language models has created a pressing need for better evaluation methods. The Chatbot Arena \citep{Chiang2024} has demonstrated the value of pairwise comparisons for LLM evaluation, but it relies on Elo ratings, which assume transitivity. Recent work has documented the prevalence of non-transitive preferences in LLM evaluation \citep{Zheng2024}.

Other benchmarks like AgentBench \citep{Liu2024} evaluate agents on multiple tasks, but the aggregation of results across tasks is often ad hoc. STE provides a principled framework for such multi-task, multi-agent evaluation.

\section{Detailed Walkthrough Examples}
\label{app:walkthrough}

This appendix gives a compact numerical walkthrough using the revised bounded operators. The purpose is to illustrate the computation, not to set a universal thresholding rule.

\subsection{Example Walkthrough: A 4-Agent Transitive Tournament}

Consider four agents $\{A,B,C,D\}$ with
\[
P=\begin{pmatrix}
0.5 & 0.7 & 0.6 & 0.9\\
0.3 & 0.5 & 0.8 & 0.7\\
0.4 & 0.2 & 0.5 & 0.6\\
0.1 & 0.3 & 0.4 & 0.5
\end{pmatrix}.
\]
The majority-rule tournament is transitive: $A\succ B\succ C\succ D$, with $A$ also beating $C$ and $D$, and $B$ beating $D$. Therefore the hard Top Cycle and hard Uncovered Set are both $\{A\}$.

\subsubsection{Step 1: Soft Majority Edges}

With edge temperature $\tau=0.05$, Eq.~\eqref{eq:soft_majority_edge} gives approximately
\[
D_\tau\approx
\begin{pmatrix}
0 & 0.982 & 0.881 & 1.000\\
0.018 & 0 & 0.998 & 0.982\\
0.119 & 0.002 & 0 & 0.881\\
0.000 & 0.018 & 0.119 & 0
\end{pmatrix}.
\]
The diagonal is zero by definition.

\subsubsection{Step 2: Bounded Soft Reachability}

Using $K=3$, the bounded max-min reachability operator gives approximately
\[
R_\tau\approx
\begin{pmatrix}
0 & 0.985 & 0.998 & 1.000\\
0.149 & 0 & 0.998 & 0.998\\
0.143 & 0.136 & 0 & 0.925\\
0.017 & 0.035 & 0.159 & 0
\end{pmatrix}.
\]
These values are bounded in $[0,1]$ and should be interpreted as soft evidence of path existence, not as counts of walks.

\subsubsection{Step 3: Top-Cycle Scores}

Using the normalized softmin with $\gamma=0.05$, the Top-Cycle scores are approximately
\[
(t_\tau(A),t_\tau(B),t_\tau(C),t_\tau(D))\approx(0.994,0.204,0.159,0.044).
\]
Thus $A$ is clearly separated from the other agents, matching the hard solution. As the temperature decreases further, the non-core scores approach zero.

\subsubsection{Step 4: Uncovered-Set Scores}

The same calculation for soft covering gives approximately
\[
(u_\tau(A),u_\tau(B),u_\tau(C),u_\tau(D))\approx(0.991,0.160,0.123,0.031).
\]
Again, $A$ is the unique high-membership agent, consistent with $\UC(T)=\{A\}$.

\subsection{Interpretation of the Results}

This example highlights why bounded operators and a zero diagonal matter. The hard tournament has a single Condorcet winner, so a sound soft relaxation should separate $A$ from $B,C,D$ as the temperature decreases. The scores are continuous membership strengths; converting them into a discrete core requires a thresholding rule, and reporting them as probabilities requires calibration evidence.

\section{Connections to Other Mathematical Frameworks}
\label{app:connections}

This appendix explores connections between STE and other mathematical frameworks, providing additional context and potential directions for future work.

\subsection{Connection to Game Theory and Nash Equilibria}

Tournament solutions can be viewed as solution concepts in game theory. The Top Cycle, for instance, is related to the notion of an undominated set in a game. An agent in the Top Cycle cannot be eliminated by iterated removal of dominated strategies.

There is a loose analogy between temperature-controlled soft tournament operators and quantal-response ideas \citep{mckelvey1995quantal}: both replace hard best-response-like decisions with smooth probabilistic responses. However, STE is not a quantal response equilibrium unless a strategic game, payoff model, and response map are specified. We therefore treat this as a conceptual connection rather than a formal equivalence.

\subsection{Connection to Spectral Graph Theory}

The bounded soft reachability matrix $R_\tau$ can also be analyzed using spectral graph theory. Its eigenstructure provides information about soft connectivity patterns in the tournament, although these spectral summaries are auxiliary diagnostics rather than definitions of the Top Cycle or Uncovered Set.

One could potentially use spectral methods to approximate the Top Cycle by identifying agents with high values in the dominant eigenvector. However, our approach using the softmin of reachability scores provides a more direct and interpretable measure of Top-Cycle membership.

\subsection{Connection to Optimal Transport}

The problem of learning a probabilistic tournament from pairwise comparisons can be framed as an optimal transport problem. Given a set of observed pairwise comparisons, we want to find a tournament matrix $P$ that is close to the observations while satisfying the complementarity constraint $P_{ab} + P_{ba} = 1$.

This is related to the problem of finding a doubly stochastic matrix that minimizes a certain divergence from a target matrix, which is the setting of Sinkhorn iteration \citep{Cuturi2013}. While we do not explicitly use optimal transport in our current framework, it could be a fruitful direction for future extensions.

\subsection{Connection to Topological Data Analysis}

The tournament graph can be viewed as a directed simplicial complex, and tournament solutions can be analyzed using tools from topological data analysis (TDA). For example, the persistent homology of the tournament graph could provide information about the robustness of cycles and the structure of the Top Cycle.

While this connection is speculative, it suggests that TDA could provide new insights into the structure of tournaments and potentially lead to new tournament solutions.

\section{Software Implementation Guide}
\label{app:software}

This appendix provides practical guidance for implementing the revised STE operators in Python using PyTorch. The code is intentionally compact and focuses on the core matrix operations.

\subsection{Core Functions}

\subsubsection{Soft Majority Edge}

{\scriptsize
\begin{verbatim}
import torch
import torch.nn.functional as F

def soft_majority_edge(P, tau):
    """Compute D_tau with zero diagonal."""
    D = torch.sigmoid((P - 0.5) / tau)
    n = P.shape[-1]
    eye = torch.eye(n, dtype=torch.bool, device=P.device)
    D = D.masked_fill(eye, 0.0)
    return D
\end{verbatim}
}

\subsubsection{Normalized Soft Extrema}
{\scriptsize
\begin{verbatim}
def normalized_softmin(z, gamma, dim=-1):
    m = z.shape[dim]
    return -gamma * (torch.logsumexp(-z / gamma, dim=dim)
                     - torch.log(torch.tensor(float(m), device=z.device)))

def normalized_smax(z, gamma, dim=-1):
    m = z.shape[dim]
    return gamma * (torch.logsumexp(z / gamma, dim=dim)
                    - torch.log(torch.tensor(float(m), device=z.device)))
\end{verbatim}
}

\subsubsection{Bounded Soft Reachability}
{\scriptsize
\begin{verbatim}
def max_min_product(X, Y):
    """(X \otimes_{max-min} Y)[a,b] = max_c min(X[a,c], Y[c,b])."""
    vals = torch.minimum(X.unsqueeze(-1), Y.unsqueeze(-3))
    return vals.max(dim=-2).values

def smooth_max_min_product(X, Y, gamma):
    """Smooth max-min product using normalized_smax over candidate witnesses."""
    vals = torch.minimum(X.unsqueeze(-1), Y.unsqueeze(-3))
    return normalized_smax(vals, gamma, dim=-2)

def soft_reachability(D, K, gamma=None):
    """Bounded reachability up to length K using max-min path existence."""
    if gamma is None:
        gamma = 0.0
    Q = D
    R = D
    for _ in range(2, K + 1):
        Q = max_min_product(Q, D) if gamma == 0.0 else smooth_max_min_product(Q, D, gamma)
        R = torch.maximum(R, Q) if gamma == 0.0 else normalized_smax(torch.stack([R, Q], dim=0), gamma, dim=0)
    n = D.shape[-1]
    eye = torch.eye(n, dtype=torch.bool, device=D.device)
    return R.masked_fill(eye, 0.0)

def soft_top_cycle_score(R, gamma):
    n = R.shape[-1]
    mask = ~torch.eye(n, dtype=torch.bool, device=R.device)
    vals = R.masked_select(mask).view(n, n - 1)
    return normalized_softmin(vals, gamma, dim=-1)

def soft_cover_scores(D, gamma):
    n = D.shape[-1]
    cover = torch.zeros_like(D)
    for c in range(n):
        for a in range(n):
            if c == a:
                continue
            idx = [b for b in range(n) if b not in (a, c)]
            if len(idx) == 0:
                violation = torch.tensor(0.0, device=D.device)
            else:
                witnesses = D[a, idx] * (1.0 - D[c, idx])
                violation = normalized_smax(witnesses, gamma, dim=0)
            cover[c, a] = D[c, a] * (1.0 - violation)
    return cover

def soft_uncovered_score(D, gamma):
    cover = soft_cover_scores(D, gamma)
    n = D.shape[-1]
    scores = []
    for a in range(n):
        idx = [c for c in range(n) if c != a]
        q = normalized_smax(cover[idx, a], gamma, dim=0)
        scores.append(1.0 - q)
    return torch.stack(scores)
\end{verbatim}
}

\subsubsection{Complete STE Pipeline}
{\scriptsize
\begin{verbatim}
def compute_ste_scores(P, tau=0.05, gamma=None, K=None):
    n = P.shape[-1]
    if gamma is None:
        gamma = tau
    if K is None:
        K = n - 1
    D = soft_majority_edge(P, tau)
    R = soft_reachability(D, K, gamma=0.0)
    t_tau = soft_top_cycle_score(R, gamma)
    u_tau = soft_uncovered_score(D, gamma)
    return t_tau, u_tau
\end{verbatim}
}

\subsection{Training Loop}
{\scriptsize
\begin{verbatim}
def train_ste(model, data_loader, optimizer, tau_schedule,
              num_epochs=100, lambda_s=0.1):
    for epoch in range(num_epochs):
        tau = tau_schedule(epoch)
        gamma = tau
        for batch in data_loader:
            agent_a, agent_b, context, outcome = batch
            logit = model(agent_a, agent_b, context)  # antisymmetric by design
            prob_a_beats_b = torch.sigmoid(logit)
            loss_ce = F.binary_cross_entropy(prob_a_beats_b, outcome)

            P = compute_tournament_matrix(model, context)
            t_tau, u_tau = compute_ste_scores(P, tau=tau, gamma=gamma)

            scores = t_tau  # or u_tau, depending on the regularized target
            eps = 1e-8
            entropy = -(scores * torch.log(scores + eps)
                        + (1 - scores) * torch.log(1 - scores + eps))
            loss_sharp = entropy.mean()

            loss = loss_ce + lambda_s * loss_sharp
            optimizer.zero_grad()
            loss.backward()
            optimizer.step()
\end{verbatim}
}

\subsection{Practical Tips}

\begin{itemize}
    \item \textbf{Numerical Stability:} Use \verb|torch.logsumexp| for normalized soft extrema and clamp scores before logarithms in entropy penalties.
    \item \textbf{Batch Processing:} The max-min product can be batched over multiple tournaments by preserving leading batch dimensions.
    \item \textbf{Gradient Checkpointing:} For very large tournaments, checkpoint repeated closure iterations to reduce memory during backpropagation.
    \item \textbf{Sparse or Pruned Graphs:} Use sparse representations only with an explicit pruning or observation-graph policy. Missing comparisons encoded as $0.5$ are uncertain edges, not absent edges.
\end{itemize}

\section{Frequently Asked Questions}
\label{app:faq}

\subsection{Why use set-valued solutions instead of rankings?}

Rankings force a total order on agents, which can be misleading when preferences are cyclic. Set-valued solutions like the Top Cycle and Uncovered Set acknowledge that there may be multiple top agents that are incomparable. This provides a more honest and robust representation of agent capabilities.

\subsection{How does STE handle ties in win probabilities?}

If $P_{ab} = 0.5$ exactly, the soft majority edge $D_\tau(a, b)$ will be 0.5 for any temperature $\tau$. In practice, ties are rare in real data, but if they occur, they are treated as neutral edges that contribute equally to both directions.

\subsection{Can STE be used for ranking as well as core identification?}

Yes. While the primary output of STE is the core membership scores, these scores can be used to induce a partial order or ranking. Agents with higher Top-Cycle scores can be considered better in an aggregate sense. However, we caution against over-interpreting these rankings, as the main value of STE is in identifying the undominated core.

\subsection{How sensitive is STE to the choice of temperature $\tau$?}

The temperature $\tau$ controls the softness of the operators. For very small $\tau$ (e.g., $\tau < 0.01$), the soft operators closely approximate the hard operators, and the results are relatively insensitive to the exact value. For larger $\tau$ (e.g., $\tau > 0.5$), the operators become very smooth, and the core membership scores may be less discriminative. In practice, we recommend using temperature annealing, starting with $\tau \approx 1.0$ and annealing to $\tau \approx 0.01$.

\subsection{How does STE compare to PageRank or other centrality measures?}

PageRank and other centrality measures (like eigenvector centrality, betweenness centrality) provide a single scalar score for each node in a graph. These scores can be used to rank nodes, but they do not directly identify set-valued solutions like the Top Cycle or Uncovered Set. STE is specifically designed to compute these tournament solutions, which have strong axiomatic foundations in social choice theory. That said, there are connections: the Top-Cycle score is related to a form of reachability-based centrality.

\subsection{Can STE handle weighted or directed graphs more generally?}

The current STE framework is designed for tournaments, which are complete directed graphs (every pair of nodes has exactly one directed edge). However, the framework can be extended to handle weighted tournaments (where edges have weights) or more general directed graphs (where some edges may be missing). For general directed graphs, the soft reachability operator would still be well-defined, but the interpretation of the Top Cycle and Uncovered Set would need to be adapted.

\subsection{What is the computational cost of STE for very large tournaments?}

The main computational bottleneck is the bounded reachability closure, which has dense complexity $O(Kn^3)$ for $n$ agents and maximum path length $K$. For $n > 1000$, this can become expensive. Practical optimizations include reducing $K$ for exploratory analysis, using sparse or pruned edge sets with an explicit missing-data policy, applying block decompositions when agents cluster naturally, and using sampled-path approximations for very large graphs. These approximations should be checked against the dense operator on smaller instances before being used for final claims.

\section{Limitations and Future Work}
\label{app:future}

This appendix summarizes the main limitations of the present study and the most direct extensions suggested by the results.

\paragraph{Data requirements.}
STE is designed for cyclic pairwise-comparison domains, but it still requires enough evidence to orient the relevant majority edges. The planted-core benchmark shows strong recovery once each observed pair has moderate evidence, but the extremely sparse regimes remain difficult. Future work should develop active-comparison policies that select the pairs most informative for core membership, rather than collecting comparisons uniformly.

\paragraph{Ties, missingness, and epistemic uncertainty.}
The theory in Section~\ref{sec:theory} assumes strict pairwise margins. Real evaluation logs may contain exact ties, abstentions, co-failures, and missing comparisons. The posterior-edge estimator used in Section~\ref{sec:experiments} treats ambiguous or unobserved pairs as weak directed evidence, which is preferable to creating artificial bidirectional reachability. A full theory for tie-aware and posterior-weighted tournament solutions remains an important next step.

\paragraph{Calibration and thresholding.}
The STE scores are continuous membership strengths, not automatically calibrated probabilities. The main experiments therefore use threshold-free metrics and oracle-size top-$|C|$ recovery in the controlled setting. For deployment, one should calibrate membership scores using held-out synthetic labels, repeated-sampling evidence, or task-specific validation labels before interpreting a fixed threshold as a decision rule.

\paragraph{Scalability.}
Dense max-min reachability and covering computations have cubic dependence on the number of agents. This is acceptable for the moderate-size agent pools studied here, but larger public leaderboards or multi-agent simulation studies will require sparse kernels, block decompositions, sampled-path approximations, or incremental updates. Any such approximation should be accompanied by sensitivity checks against the dense operator on smaller instances.

\paragraph{Real-world validation.}
The Chatbot Arena and AgentBench results in this paper are diagnostic demonstrations on logged snapshots. Stronger real-world validation should use larger public comparison logs, more task strata, explicit cycle witnesses, and bootstrap uncertainty for the inferred cores. AgentBench-style evaluations should also include more environments and named model agents, with a scoring rule that avoids conflating task failure, formatting failure, and genuine preference loss.

\paragraph{Methodological extensions.}
Natural extensions include finite-sample guarantees for posterior-edge STE, Bayesian uncertainty propagation over tournament matrices and membership scores, differentiable analogues of additional tournament solutions, and context-hierarchical pairwise models that separate global agent strength from task-specific dominance relationships.

\section{Appendix: AgentBench Run Diagnostics}\label{app:agentbench_diagnostics}
\label{app:agentbench_diag}

To contextualize the AgentBench STE scores, Table~\ref{tab:agentbench_status} summarizes the episode-level status outcomes recorded in the AgentBench run logs for each agent and environment. Counts are computed from the aligned execution logs used to construct the pairwise dataset. One additional start-failure event occurred for \texttt{agent\_mid} on \texttt{os-std}; this instance is excluded from the overlap-aligned pairwise dataset used for STE.

\begin{table}[h!]
\centering
\small
\begin{tabular}{llrrrrrr}
\toprule
\textbf{Env} & \textbf{Agent} & \textbf{Completed} & \textbf{Val.\ fail} & \textbf{Ctx.\ limit} & \textbf{Invalid act.} & \textbf{Task limit} & \textbf{Unknown} \\
\midrule
dbbench-std & agent\_strong & 194 & 106 & 0 & 0 & 0 & 0 \\
dbbench-std & agent\_base & 117 & 183 & 0 & 0 & 0 & 0 \\
dbbench-std & agent\_mid & 23 & 277 & 0 & 0 & 0 & 0 \\
dbbench-std & agent\_weak & 0 & 300 & 0 & 0 & 0 & 0 \\
os-std & agent\_strong & 126 & 0 & 0 & 7 & 10 & 1 \\
os-std & agent\_base & 115 & 0 & 0 & 9 & 18 & 2 \\
os-std & agent\_mid & 131 & 0 & 0 & 7 & 5 & 0 \\
os-std & agent\_weak & 0 & 0 & 144 & 0 & 0 & 0 \\
\bottomrule
\end{tabular}
\caption{AgentBench episode status counts for the run used in Table~\ref{tab:agentbench_probs}.}
\label{tab:agentbench_status}
\end{table}

\section{Glossary of Key Terms}
\label{app:glossary}

For the reader's convenience, we provide a glossary of key terms used throughout this paper.

\begin{description}
    \item[Agent] An entity being evaluated, such as an AI system, a player, or an alternative in a decision problem.
    
    \item[Probabilistic Tournament] A matrix $P \in [0,1]^{n \times n}$ where $P_{ab}$ is the probability that agent $a$ defeats agent $b$.
    
    \item[Majority-Rule Tournament] A deterministic tournament obtained by thresholding a probabilistic tournament at $1/2$.
    
    \item[Top Cycle] The set of agents that can reach every other agent in the tournament; equivalently, in a tournament, the unique source strongly connected component and the smallest dominant set.
    
    \item[Uncovered Set] The set of agents that are not covered by any other agent, where covering means beating an agent and also beating everyone that agent beats.
    
    \item[Soft Tournament Equilibrium (STE)] The framework introduced in this paper for computing differentiable approximations of tournament solutions.
    
    \item[Soft Majority Edge] A continuous approximation of the hard majority edge, defined off diagonal as $D_\tau(a,b)=\sigma((P_{ab}-1/2)/\tau)$ with $D_\tau(a,a)=0$.
    
    \item[Soft Reachability] A bounded continuous approximation of reachability, computed with max-min path products and a soft maximum over path lengths up to $K$.
    
    \item[Temperature Parameters ($\tau,\gamma,\gamma_c$)] Hyperparameters controlling the softness of edges and soft logical quantifiers. As these temperatures go to zero under a strict-margin condition, the soft operators converge to their hard counterparts.
    
    \item[Context] Additional information (e.g., task description, environment state) that may affect the outcome of a pairwise comparison.
    
    \item[Pairwise Logit Model] A neural model $h_\theta(a,b,x)$ that predicts the logit of agent $a$ defeating agent $b$ in context $x$, with $h_\theta(a,b,x)=-h_\theta(b,a,x)$. A scalar score function $s_\theta(a,x)$ is a BTL special case.
    
    \item[Core] The set of agents identified by a tournament solution as undominated or top-tier.
    
    \item[Condorcet Winner] An agent that beats all other agents in pairwise comparisons.
    
    \item[Cycle] A sequence of agents $a_1, a_2, \ldots, a_k, a_1$ where each agent beats the next in the sequence.
    
    \item[Transitivity] The property that if $a$ beats $b$ and $b$ beats $c$, then $a$ beats $c$. Tournaments with cycles violate transitivity.
    
    \item[Rank Aggregation] The problem of combining multiple rankings or pairwise comparisons into a single consensus ranking.
    
    \item[Bradley-Terry-Luce (BTL) Model] A probabilistic model for pairwise comparisons where $\Pp(a \succ b) = \exp(s_a) / (\exp(s_a) + \exp(s_b))$.
    
    \item[Elo Rating] A rating system used in chess and other games, based on updating ratings after each match.
    
    \item[Differentiable Combinatorics] A field concerned with developing continuous, differentiable approximations of discrete combinatorial structures.
    
    \item[Log-Sum-Exp (LSE)] A smooth approximation of the max function: $\max(z_1, \ldots, z_n) \approx \tau \log \sum_i \exp(z_i/\tau)$.
\end{description}

\bibliographystyle{plainnat} \small
\bibliography{references_final}

\end{document}